\definecolor{darkblue}{rgb}{0, 0, 0.5}
\newtcolorbox{AIbox}[2][]{aibox,title=#2,#1}
\definecolor{WowColor}{rgb}{.75,0,.75}
\definecolor{SubtleColor}{rgb}{0,0,.50}
\newcounter{margincounter}
\newtheorem{proposition}{Proposition}
\newtheorem*{proposition*}{Proposition}
\newtheorem{assumption}{Assumption}
\newtheorem{theorem}{Theorem}
\newtheorem*{theorem*}{Theorem}
\newtheorem{corollary}{Corollary}
\newtheorem*{corollary*}{Corollary}
\newtheorem{definition}{Definition}
\theoremstyle{definition}
\newcommand{\argmax}{\arg\!\max}
\newcommand{\argmin}{\arg\!\min}
\newcommand{\piref}{\ensuremath{\pi_{\mathrm{ref}}}}
\newcommand{\pitheta}{\pi_{\theta}}
\newcommand{\cS}{\ensuremath{\mathcal{S}}}
\newcommand{\tldr}{TL;DR}
\renewcommand{\cite}{\citep}
\title{Robust  Preference Optimization through \\ Reward Model Distillation}
\author{\name {Adam Fisch$^{*}$} \email fisch@google.com \\ \addr{Google DeepMind}
\AND
\name Jacob Eisenstein$^{*}$\email jeisenstein@google.com \\ \addr{Google DeepMind}
\AND
\name Vicky Zayats$^{*}$ \email vzayats@google.com \\ \addr{Google DeepMind}
\AND
\name Alekh Agarwal \email alekhagarwal@google.com \\ \addr{Google Research}
\AND
\name Ahmad Beirami \email beirami@google.com \\ \addr{Google DeepMind}
\AND
\name Chirag Nagpal \email chiragnagpal@google.com \\ \addr{Google Research}
\AND
\name  Peter Shaw \email petershaw@google.com \\ \addr{Google DeepMind}
\AND
\name  Jonathan Berant$^{*}$ \email joberant@google.com \\ \addr{Google DeepMind}
}
\begin{document}

\maketitle
\renewcommand{\thefootnote}{\fnsymbol{footnote}}
\footnotetext[1]{Core contributor.}
\renewcommand{\thefootnote}{\arabic{footnote}}

\begin{abstract}
Language model (LM) post-training (or alignment) involves maximizing a reward function that is derived from preference annotations. Direct Preference Optimization (DPO) is a popular offline alignment method that trains a policy directly on preference data without the need to train a reward model or apply reinforcement learning. However, the empirical evidence suggests that DPO typically assigns implicit rewards that overfit, and trend towards infinite magnitude.
This frequently leads to degenerate policies, sometimes causing even the probabilities of the \emph{preferred} generations to go to zero. In this work, we analyze this phenomenon and use \emph{distillation}
to get a better proxy for the true preference distribution over generation pairs: we train the LM such that its induced implicit reward, i.e., the scaled log-likelihood ratio of the model to the reference model, matches an explicit reward model trained on the preference data.
Moreover, to account for uncertainty in the reward model we are distilling from, we optimize against a \emph{family of reward models} that, as a whole, is likely to include at least one reasonable proxy for the preference distribution. Our results show that distilling from such a family of reward models leads to improved robustness to distribution shift in preference annotations, while preserving the simple supervised nature of DPO.\looseness=-1
\end{abstract}

\section{Introduction}
\label{sec:intro}

Language model (LM) post-training (or alignment) aims to steer language model policies towards responses that agree with human preferences.
Early state-of-the-art approaches have focused on reward learning from human feedback. In this paradigm, preference annotations are used to train reward models, which then guide the optimization of the language model policy through online reinforcement learning (an approach broadly referred to as RLHF). Recent research on  offline ``Direct Preference Optimization''~\cite[DPO;][]{rafailov2024direct} and extensions thereof~\cite{azar2024general, tang2024generalized,meng2024simpo}, however, has demonstrated that it is possible to directly optimize policies on the preference data, which (a) bypasses the need for a separate reward model, and (b) uses standard supervised techniques rather than online reinforcement learning, which can be more difficult to optimize. These advantages have led to the the adoption of offline alignment, and in particular offline DPO, as the post-training algorithm of choice in both smaller-scale academic settings as well as larger-scale projects such as Llama 3~\cite{dubey2024llama3herdmodels} and OLMo~\cite{groeneveld-etal-2024-olmo}.

While this  direct approach to preference optimization is attractive in its  simplicity and efficiency, it also raises questions about the effectiveness and robustness of the resulting policies---as well as the broader utility of an explicit reward model beyond online reinforcement learning. In this paper, we argue that explicit reward modeling can, in fact, offer substantial practical and theoretical benefits. In particular, we theoretically show that relying solely on the preference data can be a precarious strategy, with few natural brakes in place to prevent policies trained under the DPO objective from careening off towards degenerate policies  when the preference data exhibits certain idiosyncratic properties. On the other hand, explicit reward models can easily be regularized and understood---regardless of whether they are Bradley-Terry models~\cite{bradley1952rank}, margin-based ranking models~\cite{zhao2023slichf}, or any other function that correlates well with human preferences~\cite{lee2023rlaif,tang2024generalized,swamy2024minimaximalist}.

Taking a step back from pure direct preference optimization, we first explore a method that merges the best of both worlds for the offline setting: an efficient reward model distillation algorithm that (i) operates effectively in the offline setting, (ii) makes minimal assumptions about the true, optimal reward we aim to maximize, and (iii) demonstrates greater robustness to the specific distribution of prompt/response data used for policy alignment.
Drawing inspiration from prior knowledge distillation techniques~\cite{hinton2015distillation, Romero15-iclr, yangdistill, pmlr-v80-furlanello18a}, we use the same change of variables trick employed in DPO to express the language model policy in terms of its implicit reward model~\cite{rafailov2024direct}. We then train the policy's implicit reward model to match our desired, explicit reward via an $L_2$ loss that directly regresses the pairwise differences in target rewards for any two generation pairs $(x, y_1)$ and $(x, y_2)$.\footnote{We also note that a similar $L_2$ reward alignment loss was recently explored in independent work by  \citet{mao-etal-2024-dont} and \citet{gao2024rebel}, giving further support for its effectiveness. See a discussion of related work in \S\ref{sec:related}.}  Here we  theoretically establish the equivalence between optimizing this simple distillation loss over a sufficiently diverse offline dataset of unlabeled examples, and optimizing the traditional online RLHF objective with reinforcement learning.\looseness=-1

While this approach adds reward modeling and reward inference \emph{back} into the pipeline, it still maintains much of the simplicity and efficiency of reward-model-free DPO. Specifically, in our setting, rewards for the training data can be computed offline, once, ahead of time. This computation is completely parallelizable, and reward inference is significantly faster than the autoregressive generation (also known as model rollout) that is done in online settings. Consequently, this allows the policy training framework to be nearly identical to  standard DPO, modulo the structure of the data that is fed in. This is true regardless of the \emph{size} of the policy that is trained (as the policy's implicit reward model does not need to be the same size as the explicit reward model) or its hyper-parameters (as the policy's hyper-parameters do not need to match those of the reward model).\looseness=-1

Reward model distillation, however, is still subject  to some of the same challenges facing DPO-style training. In particular, distillation requires having a reliable reward model---but having a reliable reward still requires having a reliable method for extracting a reward model from a potentially noisy preference dataset. 
To address the uncertainty surrounding what the ``right'' reward model to optimize against is, we also introduce a pessimistic extension to our approach. This extension aims to maximize the worst-case improvement of our model across a plausible family of reward models (e.g., those sufficiently consistent with annotated preference data). This strategy aligns with that of existing work in conservative offline reinforcement learning~\cite{ cheng2022adversarially, kumar2020conservative}. We show that this pessimistic objective can be equivalently expressed and optimized by adding a simple  additional KL-divergence regularization to the original distillation objective.\looseness=-1

Empirically, we find that reward model distillation, particularly pessimistic reward model distillation, leads to similar performance to prior direct preference optimization methods when the preference datasets used are unbiased.  When the preference datasets are biased, however, it leads to significantly \emph{better} performance when compared to DPO and the Identity Preference Optimization (IPO) framework of \citet{azar2024general}, which was introduced as a more robust alternative to DPO.
To further support these empirical observations, we provide an extensive theoretical analysis that both (i) sheds more light on the degenerative tendencies of DPO and issues inherent to its objective, and (ii)  highlights relative advantages of our  explicitly regularized approaches.\looseness=-1

\section{Related work}
\label{sec:related}

Recent work in offline alignment has focused on DPO \cite{rafailov2024direct} as a simpler alternative for aligning language models from preference data. Subsequent work, however, has identified issues with DPO, including weak regularization \cite{azar2024general} and a tendency to decrease the probability of winning generations during training \cite{pal2024smaug}. Similar to some of the findings in this work, \citet{rafailov2024scaling} theoretically showed the existence of minima to the empirical DPO loss that assign non-zero probability to outputs not present in the training data, which can lead to unexpected model behaviour. Here we further show that not only can such minima place non-zero probability on outputs not present in the training data, but they can also assign \emph{near}-zero probability to \emph{preferred} responses that do appear in the training data.
A number of  methods have since explored various avenues for combating these issues.  These include analyzing the impact of noise on DPO alignment \cite{gao2024impact}, proposing to update the reference policy during training \cite{gorbatovski2024learn}, and suggesting a variant of IPO with a per-context margin \cite{amini2024direct}. Additional research has focused on token-level alignment methods \cite{zeng2024tokenlevel,rafailov2024r, mudgal2024controlled, chakraborty2024transfer} and on developing a unified view of various offline alignment methods \cite{tang2024generalized}. Similar to our contributions towards using pessimism with respect to the correct choice of reward model for robust reward model distillation, existing work on offline RLHF has also focused on encompassing various forms of conservative reward penalties \cite{pmlr-v202-zhu23f,liu2024provablymitigatingoveroptimizationrlhf,zhan2024provable}. Most relevant to our work on a technical level, concurrent work by \citet{mao-etal-2024-dont}  also uses an offline loss that leverages targets from a learned reward model for supervision (as opposed to binary preference labels) that has the same form of the simple ``distillation loss'' analyzed in \S\ref{subsec:distillation} of this paper, but does not explore pessimism. Similarly, the REBEL algorithm of \citet{gao2024rebel} also studies the same reward regression loss, but in the online setting.
This work builds upon these findings, and provides further analysis, as well as a solution based on pessimism together with reward distillation.\looseness=-1

As discussed in \S\ref{sec:intro}, offline settings are attractive since they allow for simple, efficient, and scalable training frameworks. At the same time, while offline alignment methods are popular, recent evidence suggests that online alignment methods such as RLHF \cite{christiano2017deep,stiennon2020learning}, can still lead to more favorable outcomes \cite{guo2024direct,tajwar2024preference,dong2024rlhf,xu2024dpo,xiong2024iterative,pmlr-v235-calandriello24a}, especially when high reward outputs have low probability under the base policy. One of the advantages of online settings over offline settings is that many strategies for mitigating over-optimization that are simple to apply in online settings, such as reward shaping, are not as straightforward to apply in offline settings. For example, reward ensembles have been widely investigated recently as a mechanism for tackling reward hacking in online RLHF \cite{eisenstein2023helping,coste2023reward,zhai2023uncertaintypenalized,rame2024warm}, and in the context of multi-objective optimization \cite{moskovitz2023confronting,rame2023rewarded}.  Other notable examples for combating reward over-fitting and optimization by training reward models with regularized training objectives include iterative data smoothing~\cite{zhu2024iterative}, which uses a trained model to softly label data during RLHF, and reward calibration from demonstrations~\cite{rita-etal-2024-countering}. 
This work addresses some of the methodological and experimental gap that exists between online and offline methods for RLHF, by allowing for explicitly designed, trained, and regularized reward models (or pessimistic reward model ensembles) to be added back into the \emph{offline} alignment setting without losing  the practical benefits of the offline setting.
Also relevant to our work, \citet{moskovitz2023confronting} focus on ``composite'' rewards in the online setting, with the goal of achieving high task reward while ensuring that every individual component is above some threshold---also by applying a  Lagrangian relaxation. In this work, we  also consider multiple reward models, but we only focus on cases where there is no known, obvious reward decomposition.\looseness=-1

Finally, the question of using a small amount of offline data to learn high-quality policies, instead of online access to reward feedback, has also been widely studied in the offline reinforcement learning (RL) literature. The predominant approach here is to use pessimism, that is, to learn a policy with the highest reward under all plausible environment models consistent with the data, with an extensive theoretical~\citep{liu2020provably, zanette2021provable, xie2021bellman} and empirical~\citep{kumar2020conservative, cheng2022adversarially, yu2021combo} body of supporting work. The key insight in this literature is that without pessimism, the RL algorithm learns undesirable behaviors which are not explicitly ruled out in the training data, and pessimism provides a robust way of preventing such undesirable extrapolations, while still preserving generalization within the support of the data.\looseness=-1

\section{Preliminaries}
We begin with a brief review of Direct Preference Optimization (DPO)~\cite{rafailov2024direct}. 

\subsection{The preference alignment problem}
\label{sec:alignment_problem}
Let $x$ be an input prompt,  $y \sim \pi_\theta(\cdot \mid x)$  be the language model policy $\pi_\theta$'s response to $x$, and $\pi_\mathrm{ref}(y \mid x)$ a reference policy (such as a pretrained or finetuned language model that is high-performing, but not yet aligned, and often used as the starting point for optimization). Given some reward function $r^*(x, y)$, the goal of alignment is to solve for the ``aligned'' policy $\pi_{\theta^*}(y \mid x)$ that maximizes the following RLHF objective, i.e.,
\begin{equation}
\label{eq:rl_objective}
 \pi_{\theta^*}(y \mid x) = \argmax_{\pi_\theta} \mathbb{E}_{\mu(x)}\left[\mathbb{E}_{\pi_{\theta}(y \mid x)}[r^*(x, y)] - \beta \mathbb{D}_\mathrm{KL}[\pi_\theta (\cdot \mid x)\Vert \pi_\mathrm{ref}(\cdot \mid x)]\right],
\end{equation}
where $\mu(x)$ is a fixed distribution over prompts, and the KL-divergence term keeps the aligned policy close to the anchoring reference policy, $\pi_\mathrm{ref}(y \mid x)$.
Here, the reward function $r^*$ is typically not known in advance, but rather inferred from collected human preference data in the form of $(x, y^w, y^\ell)$, where $x$ is the prompt, $y^w$ is the ``winning'', or preferred, response, and $y^\ell$ is the ``losing'', or dispreferred, response.  A common approach is to assume that $(y_1, y_2)$ follow a Bradley-Terry model~\cite{bradley1952rank}, under which the probability that $y_1$ is preferred to $y_2$  given the reward function $r^*$ and prompt $x$ is  $p^*(y_1 \succ y_2 \mid x) = \sigma(r^*(x, y_1) - r^*(x, y_2))$, where $\sigma(\cdot)$ is the sigmoid function and $\succ$ denotes preference. Under this model, we can use the preference data $(x, y^w, y^\ell) \sim \mathcal{D}_\mathrm{pref}$  to estimate $r^*$ via maximum likelihood estimation, i.e.,
\begin{equation}
\label{eq:explicit_reward}
    \hat{r} \in \argmin_{r} \mathbb{E}_{(y^w, y^\ell, x) \sim \mathcal{D}_\mathrm{pref}}\left[ -\log \sigma(r(x, y^w) - r(x, y^\ell))\right]. 
\end{equation}
With $\hat{r}$ in hand, Eq.~\eqref{eq:rl_objective} can be optimized using standard reinforcement learning algorithms \cite{schulman2017proximal, stiennon2020learning, christiano2017deep}.

\subsection{Direct preference optimization}
\label{sec:background-dpo}
DPO is a simple approach for offline policy optimization that uses preferences to directly align the language model policy, without training an intermediate reward model. Specifically, DPO leverages the fact that the optimal solution to the KL-constrained objective in \eqref{eq:rl_objective} takes the form~\cite{korbak2022rl}
\begin{equation}
    \pi_{\theta^*}(y \mid x) =\frac{1}{{Z}(x)} \pi_\mathrm{ref}(y \mid x) \exp\left( \frac{1}{\beta} r^*(x, y)\right),
\end{equation}
where ${Z}(x) = \sum_y \pi_\mathrm{ref}(y\mid x) \exp(\frac{1}{\beta} r^*(x, y))$ is the partition function. DPO reparameterizes the true reward function $r^*$ in terms of the optimal policy $\pi_{\theta^*}$ that it induces, i.e.,
\begin{equation}
\label{eq:implicit_rm}
    r^*(x, y) = \beta \log \left(\frac{\pi_{\theta^*}(y \mid x)}{\pi_\mathrm{ref}(y \mid x)}\right) + \beta \log {Z}(x).
\end{equation}
Under the Bradley-Terry model, the likelihood that $y_1 \succ y_2$ can then be written as 
\begin{equation}
    p^*(y_1 \succ y_2 \mid x) = \sigma\left( \beta \log 
    \frac{\pi_{\theta^*}(y_1 \mid x) \piref(y_2 \mid x)}{\pi_{\theta^*}(y_2 \mid x) \piref(y_1 \mid x)}
    \right),
\end{equation}
where now $\pi_{\theta^*}$ can be directly estimated on $\mathcal{D}_\mathrm{pref}$ following the objective in \eqref{eq:explicit_reward}, in place of the intermediate reward model $\hat{r}$, i.e., $\pi_{\hat{\theta}}(y \mid x) \in  \argmin_{\pi_\theta} \mathcal{L}_\mathrm{dpo}(\pi_\theta; \mathcal{D}_\mathrm{pref})$
where
\begin{equation}
\label{eq:bt_objective}
    \mathcal{L}_\mathrm{dpo}(\pi_\theta; \mathcal{D}_\mathrm{pref}) =  \mathbb{E}_{(y^w, y^\ell, x) \sim \mathcal{D}_\mathrm{pref}}\left[ -\log\sigma\left( \beta \log
    \frac{\pi_{\theta}(y^w \mid x) \piref(y^\ell \mid x)}{\pi_{\theta}(y^\ell \mid x) \piref(y^w \mid x)}
    \right)\right].
\end{equation}

As described in \S\ref{sec:intro}, optimizing   $\mathcal{L}_\mathrm{dpo}$  offers two main advantages over using online RL for Eq.~\eqref{eq:rl_objective}: (a) there is no need for a separate reward model, and (b) $\mathcal{L}_\mathrm{dpo}$ is a  supervised objective that can be trained offline, which allows for a simpler training setup than online learning. 
Still, $\mathcal{L}_\mathrm{dpo}$ also has certain pitfalls, as we analyze next.\looseness=-1

\section{Pitfalls of direct preference optimization}
\label{sec:pitfalls}

As argued by \citet{azar2024general}, DPO strongly relies on the Bradley-Terry assumption, which leads to surprising and undesirable consequences when trained on finite preference data. The root issue is that if we have any two responses $y_1$ and $y_2$ where $p^*(y_1 \succ y_2 \mid x) = 1$, then the Bradley-Terry model dictates that $r^*(y_1) - r^*(y_2) = +\infty$, and therefore $\pi_{\theta^*}(y_2 \mid x) = 0$ for \emph{any} finite KL-regularization strength $\beta$.

We can illustrate this phenomenon on a  broader level with the following  example: 

\begin{assumption}
Suppose we are given a preference dataset of (context-free)  pairs $\mathcal{D}_\mathrm{pref} = \{(y^w_i, y^\ell_i)\}_{i=1}^n$, the pairs $(y^w_i, y^\ell_i)$ are mutually disjoint in both the elements. Further suppose that we  optimize the DPO objective on $\mathcal{D}_\mathrm{pref}$ with a single parameter $\theta_y$ for each $y$.
\label{assump:example}
\end{assumption} 

\begin{proposition}
\label{prop:inf-ratio}
Under Assumption~\ref{assump:example}, for any $(y, y')$ such that $y = y^w_i$ and $y' = y^\ell_i$ for some $i$, we have $\frac{\pi_{{\theta}^*}(y)\pi_\mathrm{ref}(y')}{\pi_{{\theta}^*}(y')\pi_\mathrm{ref}(y)} \to \infty$, for all global minimizers $\pi_{\theta^*}$  of the DPO objective in  \eqref{eq:bt_objective}, for any $\beta > 0$.
\end{proposition}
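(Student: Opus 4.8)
The plan is to use the disjointness of the preference pairs to decouple the context-free DPO objective into $n$ independent one-dimensional minimization problems, and then to observe that each of these is driven to its infimum only as the associated implicit-reward margin diverges to $+\infty$.

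Concretely, I would first instantiate \eqref{eq:bt_objective} in the context-free setting. Writing $\pi_\theta$ for the softmax distribution induced by the per-response parameters $\{\theta_y\}$, so that $\log\tfrac{\pi_\theta(y)}{\pi_\theta(y')}=\theta_y-\theta_{y'}$, the objective (up to the irrelevant $1/n$ factor) is
\[
\mathcal{L}_\mathrm{dpo}(\pi_\theta;\mathcal{D}_\mathrm{pref})\;=\;\sum_{i=1}^n -\log\sigma\!\bigl(\beta\,\delta_i(\theta)\bigr),
\qquad
\delta_i(\theta)\;:=\;\theta_{y^w_i}-\theta_{y^\ell_i}-\log\frac{\piref(y^w_i)}{\piref(y^\ell_i)}.
\]
Because the $2n$ responses $y^w_1,y^\ell_1,\dots,y^w_n,y^\ell_n$ are all distinct, each $\delta_i$ is an affine function of the single free difference $\theta_{y^w_i}-\theta_{y^\ell_i}$, and these differences are algebraically independent across $i$; hence $\theta\mapsto(\delta_1(\theta),\dots,\delta_n(\theta))$ is surjective onto $\mathbb{R}^n$. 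Minimizing $\mathcal{L}_\mathrm{dpo}$ therefore reduces to minimizing $g(\delta):=-\log\sigma(\beta\delta)$ separately in each coordinate $\delta_i\in\mathbb{R}$.

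Next I would record the elementary properties of $g$: for every fixed $\beta>0$ it is strictly decreasing, strictly positive at each finite argument, and $g(\delta)\to 0$ as $\delta\to+\infty$. Consequently $\inf_\theta\mathcal{L}_\mathrm{dpo}=0$, not attained at any finite $\theta$, and the proposition is naturally read through its ``$\to\infty$'' conclusion: along any minimizing sequence $\theta^{(t)}$ (i.e.\ $\mathcal{L}_\mathrm{dpo}(\pi_{\theta^{(t)}})\to 0$), each nonnegative summand $g(\delta_i(\theta^{(t)}))$ must tend to $0$, so by strict monotonicity of $g$ we get $\delta_i(\theta^{(t)})\to+\infty$ for every $i$. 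Exponentiating and using the definition of $\delta_i$ gives $\frac{\pi_{\theta^{(t)}}(y^w_i)\,\piref(y^\ell_i)}{\pi_{\theta^{(t)}}(y^\ell_i)\,\piref(y^w_i)}\to\infty$, which is exactly the claimed statement for any $(y,y')=(y^w_i,y^\ell_i)$.

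The step requiring the most care is purely interpretive rather than technical: since $\sigma(\cdot)<1$ everywhere, the objective attains its infimum at no finite $\theta$, so ``global minimizer'' should be understood either as a limit of near-minimizers --- for which the argument above applies directly --- or, if one works over the closed simplex, as a policy with $\pi_{\theta^*}(y^\ell_i)=0$ for all $i$, in which case the stated ratio is literally $+\infty$. Beyond this bookkeeping, the only content is the one-dimensional monotonicity fact about $g$ together with the combinatorial observation (from disjointness) that the $n$ margins can be tuned independently, so I do not anticipate any genuine analytic obstacle.
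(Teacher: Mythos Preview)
Your proposal is correct and follows essentially the same approach as the paper: both exploit the disjointness assumption to decouple the DPO loss into $n$ independent one-dimensional terms in the implicit-reward margins, then observe that $-\log\sigma(\cdot)$ is strictly positive and tends to zero only as its argument diverges to $+\infty$. If anything, your treatment is slightly more careful than the paper's in making explicit that the infimum is not attained at finite parameters and in spelling out the minimizing-sequence interpretation of ``$\to\infty$''.
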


\begin{corollary}
\label{cor:minimizers}
Under Assumption~\ref{assump:example}, further assume that $0 < \pi_\mathrm{ref}(y) < 1$ for all $y$. Then $\pi_{\theta^*}$ is a global minimizer of the DPO objective in  \eqref{eq:bt_objective}  iff $\pi_{\theta^*}(\mathcal{C}(y^\ell)^c) \to 1$ with $\pi_{\theta^*}(y^w_i) > 0$ $\forall i \in [n]$, where $\mathcal{C}(y^\ell)^c$ is the complement of the set of all responses $y$ that appear as a dispreferred $y^\ell_i$ for any $i \in [n]$.
\end{corollary}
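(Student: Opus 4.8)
The plan is to exploit the very rigid structure of Assumption~\ref{assump:example} to make the DPO objective \emph{separable}, and then read off the characterization of its (infimal) minimizers from the elementary fact that $u \mapsto -\log\sigma(u)$ is strictly decreasing, nonnegative, and has infimum $0$ attained only in the limit $u\to+\infty$. Concretely, first I would rewrite \eqref{eq:bt_objective}: since the pairs are context-free and disjoint and there is a single parameter $\theta_y$ per response, the normalization of $\pi_\theta$ cancels in each log-ratio, so the $i$-th summand depends on $\theta$ only through the scalar $d_i := \theta_{y^w_i}-\theta_{y^\ell_i}$, giving $\beta\log\frac{\pi_\theta(y^w_i)\piref(y^\ell_i)}{\pi_\theta(y^\ell_i)\piref(y^w_i)} = \beta d_i + \beta c_i$ with $c_i := \log\frac{\piref(y^\ell_i)}{\piref(y^w_i)}$ a finite constant (here the hypothesis $0<\piref(y)<1$ is what guarantees $c_i\in\mathbb{R}$). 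Hence $\mathcal{L}_\mathrm{dpo}(\pi_\theta;\mathcal{D}_\mathrm{pref}) = \frac{1}{n}\sum_{i=1}^n \bigl(-\log\sigma(\beta d_i+\beta c_i)\bigr)$, a sum of nonnegative terms over disjoint coordinates that can be chosen freely. Its infimum is therefore $0$ and a trajectory is minimizing if and only if every term vanishes, i.e. $d_i\to+\infty$ for all $i$, which is equivalent to $\pi_\theta(y^w_i)/\pi_\theta(y^\ell_i)\to\infty$ for all $i$ --- exactly the content of Proposition~\ref{prop:inf-ratio}, which I would cite at this point.

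With that equivalence in hand, both directions follow from the probability-simplex constraint. For ``only if'': if $\pi_{\theta^*}$ is a global minimizer then every ratio $\pi_{\theta^*}(y^w_i)/\pi_{\theta^*}(y^\ell_i)$ diverges, and since $\pi_{\theta^*}(y^w_i)\le 1$ this forces $\pi_{\theta^*}(y^\ell_i)\to 0$; summing over the finitely many, mutually disjoint dispreferred responses gives $\pi_{\theta^*}(\mathcal{C}(y^\ell)) = \sum_i \pi_{\theta^*}(y^\ell_i)\to 0$, i.e. $\pi_{\theta^*}(\mathcal{C}(y^\ell)^c)\to 1$; and $\pi_{\theta^*}(y^w_i)>0$ because $\pi_{\theta^*}$ is a softmax over finite parameters (and a ratio $\pi(y^w_i)/\pi(y^\ell_i)\to\infty$ cannot be produced with a vanishing numerator). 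For ``if'': if $\pi_{\theta^*}(\mathcal{C}(y^\ell)^c)\to 1$ then each $\pi_{\theta^*}(y^\ell_i)\to 0$, and combined with $\pi_{\theta^*}(y^w_i)$ staying positive (bounded away from $0$) we get $\pi_{\theta^*}(y^w_i)/\pi_{\theta^*}(y^\ell_i)\to\infty$, hence $d_i\to+\infty$, hence every summand of $\mathcal{L}_\mathrm{dpo}$ vanishes and $\pi_{\theta^*}$ reaches the infimum. A small bookkeeping point is that $\mathcal{C}(y^\ell)^c$ may also contain responses outside the data; this is harmless, since the argument only uses $\mathcal{C}(y^\ell)\supseteq\{y^\ell_1,\dots,y^\ell_n\}$.

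The main obstacle is not any one calculation but the fact that the infimum of $\mathcal{L}_\mathrm{dpo}$ is \emph{not attained} by any policy with finite parameters, so ``global minimizer'' must be read carefully --- as a minimizing sequence $\pi_{\theta^{(t)}}$ (equivalently, a point in the closure of the policy class on which the lower-semicontinuous extension of $\mathcal{L}_\mathrm{dpo}$ equals $0$), which is also why the characterization is stated with ``$\to$''. The proof should be phrased consistently with this reading, and it is worth making explicit that, although $\pi_{\theta^{(t)}}(y^w_i)$ is strictly positive along any such trajectory, the conditions $d_i\to\infty$ and $\sum_i\pi(y^\ell_i)\to 0$ still leave enough freedom (in the $n$ ``sum'' directions $\theta_{y^w_i}+\theta_{y^\ell_i}$) that the preferred-response probabilities themselves can be driven arbitrarily close to $0$ while remaining minimizing --- precisely the degeneracy the surrounding discussion is meant to highlight.
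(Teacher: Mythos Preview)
Your proposal is correct and follows essentially the same route as the paper: invoke Proposition~\ref{prop:inf-ratio} to get $\pi_{\theta^*}(y^w_i)/\pi_{\theta^*}(y^\ell_i)\to\infty$, use boundedness of the numerator to force $\pi_{\theta^*}(y^\ell_i)\to 0$, sum over the disjoint dispreferred responses, and run the same argument in reverse for the converse. Your explicit separability setup via $d_i$ and your careful discussion of what ``global minimizer'' means when the infimum is not attained are more detailed than the paper's version, which simply cites Proposition~\ref{prop:inf-ratio} and proceeds tersely, but the underlying logic is identical.
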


Additional analysis of the training dynamics of DPO is  provided in \S\ref{sec:analysis}. 
A significant implication of this result is that the set of global optima of the DPO loss includes policies that can shift nearly all probability mass to responses that never appear in the training set---and even assign near-zero probability to all of the training data responses that do in fact correspond to winning generations, $y^w$, a phenomenon that has been observed empirically and analyzed  theoretically~\cite{ pal2024smaug, rafailov2024scaling, rafailov2024r, tajwar2024preference}.\footnote{In fact,  the same conclusions can also be inferred from  Proposition 1 in the concurrent work of \citet{rafailov2024scaling}. While their assumptions are slightly different, similarly to us, they also target scenarios in which the training data exhibits certain (realistic) deficiencies.}

Stated differently, \Cref{cor:minimizers} implies that any $\theta^*$ merely satisfying $\pi_{\theta^*}(y^\ell_i) = 0$ with $\pi_{\theta^*}(y^w_i) > 0$ $\forall i \in [n]$ is a global minimizer of the DPO objective in this setting.
Though simplistic, the scenario in Assumption~\ref{assump:example} is closer to reality than might first be appreciated: in many practical situations we can expect the finite-sample preference data to contain one (or at most a few) preference annotations per  example $(x, y_1, y_2)$, while the policies $\pi_\theta$ can have billions of parameters ($\gg n$). It is important to note that the supposed regularization term $\beta$ does not help: it can limit the speed at which the optimizer reaches the degenerate solution, but it cannot alter the final destination. This issue could be viewed as a classic instance of overfitting---but as opposed to \emph{overpredicting} responses within the training set, we might overfit to \emph{almost never} producing anything like the ``good'' responses that do appear within the training set. Furthermore, without additional regularization (beyond $\beta$), we can expect this degeneration to occur in typical preference datasets.
\looseness=-1

\begin{AIbox}{Takeaways: Pitfalls of direct preference optimization}
The DPO objective only requires that the likelihood of the preferred response is \emph{relatively} higher than that of the dispreferred response. A peculiarity of this objective is that when preference data is disjoint (i.e., preferred responses never appear as dispreferred responses, and vice versa), certain types of policies (e.g., over-parameterized) will learn to assign $0$ probability to all dispreferred responses, with merely \emph{non-zero} probability to all preferred responses. This includes policies that assign \emph{near-zero} probability to preferred responses, and place all mass on (often degenerate) generations outside the training set.\looseness=-1
\end{AIbox}

\subsection{Case study: degenerate DPO optima in a BoW model}
For intuition on why the DPO global optima can include policies where $\pi(y^w)$ may be nearly $0$ for all $y^w$ in the training set, consider the simplified case where the policy is a bag-of-words model, $\pitheta(y) \propto \exp \left( c(y) \cdot \theta\right)$ for $c(y)$ representing a vector of counts in $y$ and $\theta_i$ representing the unnormalized log-probability of token $i$. Then we can formally show that DPO optimization monotonically decreases an upper bound on the probability of the \emph{preferred} completion, $\tilde{\pi}_{\theta^{(t-1)}}(y^w) \geq \tilde{\pi}_{\theta^{(t)}}(y^w) \geq \pi_{\theta^{(t)}}(y^w).$

\begin{proposition}
\label{thm:dpo-likelihood-bound}
Let $y^w, y^\ell \in \mathcal{V}^n$ be preferred versus dispreferred outputs of length $n$, respectively, with $\piref(y^w), \piref(y^{\ell}) > 0$ and corresponding count vectors $c(y^w), c(y^{\ell}).$ 
Let $\log \pitheta(y) = c(y) \cdot \theta - n Z(\theta)$
for 
$Z(\theta) = \log \sum^{\mathcal{V}}_i e^{\theta_i}$, with upper bound $\log \tilde{\pi}_{\theta}(y) = c(y) \cdot \theta - n \max_j \theta_j$.
Let $\theta^{(t)}$ represent the parameters of $\pi$ after $t$ steps of gradient descent on $\mathcal{L}_\mathrm{dpo}(\{y^\ell, y^w, x\})$, with $\theta^{(0)} = 0$. Then, we have that $\pi_{\theta^{(t)}}(y^w) \leq \tilde{\pi}_{\theta^{(t)}}(y^w) \leq \tilde{\pi}_{\theta^{(t-1)}}(y^w)$ for all $t$,
with strict inequality when $||c(y^w) -c(y^\ell)||_0 > 1$.
\end{proposition}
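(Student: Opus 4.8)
The plan is to track the gradient-descent trajectory explicitly and show that the single scalar $\max_j \theta^{(t)}_j$ grows monotonically, which in turn drives the upper bound $\tilde\pi_{\theta^{(t)}}(y^w) = \exp(c(y^w)\cdot\theta^{(t)} - n\max_j\theta^{(t)}_j)$ down. First I would compute the DPO gradient at a single pair $(y^w,y^\ell)$. Writing $h(\theta) = \beta\,(c(y^w)-c(y^\ell))\cdot\theta$ for the implicit reward margin (the $\piref$ terms and the partition functions $nZ(\theta)$ cancel in the difference, since $\log\pi_\theta(y^w)-\log\pi_\theta(y^\ell) = (c(y^w)-c(y^\ell))\cdot\theta$), the loss is $-\log\sigma(h(\theta))$ and its gradient is $-\beta\,\sigma(-h(\theta))\,(c(y^w)-c(y^\ell))$. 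Hence one GD step with learning rate $\eta$ gives $\theta^{(t)} = \theta^{(t-1)} + \lambda_{t}\,(c(y^w)-c(y^\ell))$ with $\lambda_t = \eta\beta\,\sigma(-h(\theta^{(t-1)})) > 0$. By induction, starting from $\theta^{(0)}=0$, the whole trajectory lies on the ray $\theta^{(t)} = \Lambda_t\,(c(y^w)-c(y^\ell))$ for an increasing sequence $\Lambda_t = \sum_{s\le t}\lambda_s > 0$. (One should also check $h(\theta^{(t)})$ stays finite so $\lambda_t>0$ at every step; since $h(\theta^{(t)}) = \beta\Lambda_t\|c(y^w)-c(y^\ell)\|_2^2$ is just a finite real number at each finite $t$, this is immediate.)

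Next I would evaluate the upper bound along this ray. Let $d = c(y^w)-c(y^\ell) \in \mathbb{Z}^{|\mathcal V|}$, with $\sum_i d_i = 0$ since both outputs have length $n$. Along $\theta^{(t)} = \Lambda_t d$ we have $\log\tilde\pi_{\theta^{(t)}}(y^w) = \Lambda_t\,(c(y^w)\cdot d) - n\Lambda_t\,\max_j d_j = \Lambda_t\big(c(y^w)\cdot d - n\max_j d_j\big)$. So everything reduces to showing the constant $A := c(y^w)\cdot d - n\max_j d_j \le 0$, and $A<0$ strictly when $\|d\|_0 > 1$; then, because $\Lambda_t$ is strictly increasing in $t$, $\log\tilde\pi_{\theta^{(t)}}(y^w) = A\Lambda_t$ is nonincreasing (strictly decreasing in the stated case), giving $\tilde\pi_{\theta^{(t)}}(y^w) \le \tilde\pi_{\theta^{(t-1)}}(y^w)$; and the sandwich $\pi_{\theta^{(t)}}(y^w)\le\tilde\pi_{\theta^{(t)}}(y^w)$ is just the elementary inequality $\max_j\theta_j \le Z(\theta) = \log\sum_i e^{\theta_i}$.

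It remains to prove $A\le 0$. Since $c(y^w)$ is a vector of nonnegative integer token counts with $\sum_i c(y^w)_i = n$, we have $c(y^w)\cdot d = \sum_i c(y^w)_i d_i \le \big(\sum_i c(y^w)_i\big)\max_j d_j = n\max_j d_j$, which is exactly $A \le 0$. For the strict case, equality in this bound forces $c(y^w)_i = 0$ for every index $i$ with $d_i < \max_j d_j$; equivalently $c(y^w)$ is supported only on the argmax set of $d$. But $d = c(y^w) - c(y^\ell)$, so on that support $d_i = c(y^w)_i - c(y^\ell)_i \le c(y^w)_i$, while off the support $d_i \le 0 < \max_j d_j$ unless $\max_j d_j \le 0$ — and $\max_j d_j \ge 0$ always since $\sum_i d_i = 0$. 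I would then argue that $\|d\|_0 > 1$ together with $\sum_i d_i = 0$ forces at least one strictly negative coordinate $d_k < 0$, and that the two word-count vectors being distinct multisets means $d$ is not identically $\max_j d_j$ on any nonempty set covering $\mathrm{supp}(c(y^w))$ in the degenerate way equality requires; pinning down this last combinatorial step — showing equality in $c(y^w)\cdot d = n\max_j d_j$ is incompatible with $\|d\|_0>1$ and $d\neq 0$ — is the main obstacle, and I expect it to come down to a short case analysis on whether $\max_j d_j = 0$ (then $c(y^w)$ supported on the zero-set of $d$, so $c(y^w)_k = c(y^\ell)_k$ there and $d$ vanishes there, contradicting $k\in\mathrm{supp}$... ) versus $\max_j d_j > 0$ (then $c(y^w)$ is supported on coordinates where $y^w$ strictly out-counts $y^\ell$, but $\|d\|_0>1$ then yields a second nonzero coordinate that must be negative, pushing $c(y^w)\cdot d$ strictly below $n\max_j d_j$). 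Once that is settled the rest is bookkeeping.
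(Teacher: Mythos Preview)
Your argument for the non-strict inequalities is essentially the paper's: both compute the DPO gradient as a positive multiple of $d = c(y^w)-c(y^\ell)$, observe that the trajectory from $\theta^{(0)}=0$ therefore lies on the ray $\theta^{(t)}=\Lambda_t d$ with $\Lambda_t$ increasing, and then bound $\log\tilde\pi_{\theta^{(t)}}(y^w) = \Lambda_t\bigl(c(y^w)\cdot d - n\max_j d_j\bigr)$ using $c(y^w)\cdot d \le \|c(y^w)\|_1 \max_j d_j = n\max_j d_j$. Your ray parametrization is marginally cleaner than the paper's step-by-step recursion, but the content is identical. One small inaccuracy: the $\piref$ terms do not ``cancel''---they persist as the additive constant $\beta\log(\piref(y^\ell)/\piref(y^w))$ inside the sigmoid, but since they are independent of $\theta$ they only rescale the positive scalar $\lambda_t$ and leave the gradient direction unchanged, so your conclusion survives.

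The genuine gap is in the strict-inequality clause, and it is not fixable along the lines you sketch. Your case analysis for $\max_j d_j > 0$ asserts that a second nonzero coordinate of $d$ (necessarily negative) ``pushes $c(y^w)\cdot d$ strictly below $n\max_j d_j$,'' but that only happens if $c(y^w)$ is nonzero there---and nothing forces this. Take $\mathcal V=\{a,b,c\}$, $n=2$, $y^w=aa$, $y^\ell=bc$: then $d=(2,-1,-1)$ has $\|d\|_0=3>1$, yet $c(y^w)\cdot d = 4 = n\max_j d_j$, so $A=0$ and $\tilde\pi_{\theta^{(t)}}(y^w)\equiv 1$ for every $t$. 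The second inequality is thus an equality despite $\|d\|_0>1$. The paper's own proof does not establish the strict clause either (it stops at the weak inequality), so the obstacle you identified is real---it is a defect in the proposition's statement, and the correct sufficient condition for strictness is that $c(y^w)$ not be supported entirely on $\arg\max_j d_j$, not merely $\|d\|_0>1$.
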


If $\pi_{\theta^{(t)}}(y^w)$ decreases in $t$, what other strings become more probable?
In the following proposition, we show that under the bag-of-words model, DPO optimization moves probability mass away from $y^w$ to sequences that contain only the tokens that maximize the difference between $y^w$ and $y^\ell$. 

\begin{proposition}
\label{thm:dpo-degeneracy}
Let $y^w$ and $y^\ell$ be  preferred versus dispreferred outputs of length $n$. Let $\Delta = c(y^w) - c(y^{\ell})$ be the difference in unigram counts. Let $\hat{y} = [i, i, \ldots, i]$, for $i \in \text{arg}\max \Delta,$ with $\|c(\hat{y})\|_1 = n$. Then $\pi_{\theta^{(t)}}(y^w) - \pi_{\theta^{(t)}}(\hat{y}) = \tau(t) k$ for some $k\leq 0$ and some non-decreasing $\tau: \mathbb{Z}_+ \to \mathbb{R}_+$.
\end{proposition}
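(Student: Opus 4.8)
The plan is to track how gradient descent on $\mathcal{L}_\mathrm{dpo}(\{y^\ell,y^w,x\})$ moves the parameter vector $\theta^{(t)}$, and then read off the induced change in the probabilities $\pi_{\theta^{(t)}}(y^w)$ and $\pi_{\theta^{(t)}}(\hat y)$. First I would compute the DPO gradient in the bag-of-words model. Since $\log\pi_\theta(y) = c(y)\cdot\theta - n Z(\theta)$, the implicit-reward log-ratio $\beta\log\frac{\pi_\theta(y^w)\piref(y^\ell)}{\pi_\theta(y^\ell)\piref(y^w)}$ has $\theta$-gradient exactly $\beta\,\Delta$ with $\Delta = c(y^w)-c(y^\ell)$, because the $nZ(\theta)$ terms and the $\piref$ terms cancel or are constant in $\theta$. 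Hence for a single pair the DPO gradient is $-\beta\,s(t)\,\Delta$, where $s(t) = \sigma\!\big(-\beta\,\Delta\cdot\theta^{(t)}\big)\in(0,1)$ is the usual DPO weighting factor; a gradient step gives $\theta^{(t+1)} = \theta^{(t)} + \eta\beta\,s(t)\,\Delta$. Starting from $\theta^{(0)}=0$, by induction $\theta^{(t)} = \lambda(t)\,\Delta$ for a scalar $\lambda(t) = \eta\beta\sum_{k<t}s(k)\ge 0$ that is non-decreasing in $t$ (each increment is nonnegative, since $\beta,\eta,s(k)>0$).

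Next I would evaluate the two probabilities along this ray $\theta^{(t)} = \lambda(t)\Delta$. We have $\log\pi_{\theta^{(t)}}(y^w) - \log\pi_{\theta^{(t)}}(\hat y) = \big(c(y^w)-c(\hat y)\big)\cdot\lambda(t)\Delta = \lambda(t)\big(c(y^w)\cdot\Delta - \|\Delta\|_\infty\,n\big)$, using $c(\hat y) = n e_i$ with $i\in\arg\max\Delta$ so that $c(\hat y)\cdot\Delta = n\max_j\Delta_j = n\|\Delta\|_\infty$ (note the argmax coordinate of $\Delta$ is nonnegative since $\sum_j\Delta_j = 0$). I would then argue that $c(y^w)\cdot\Delta - n\max_j\Delta_j \le 0$: writing $c(y^w)\cdot\Delta = \sum_j c(y^w)_j\Delta_j \le (\max_j\Delta_j)\sum_j c(y^w)_j = n\max_j\Delta_j$, since each $c(y^w)_j\ge 0$ and $\sum_j c(y^w)_j = n$. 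So the exponent $\lambda(t)\cdot(\text{nonpositive constant})$ is nonpositive, which already shows $\pi_{\theta^{(t)}}(y^w)\le\pi_{\theta^{(t)}}(\hat y)$ in the sense of log-ratio. To land on the exact affine form in the statement, $\pi_{\theta^{(t)}}(y^w) - \pi_{\theta^{(t)}}(\hat y) = \tau(t)k$, I would set $k = c(y^w)\cdot\Delta - n\max_j\Delta_j \le 0$ and show the remaining factor, after the common normalization $e^{-nZ(\theta^{(t)})}$ and after factoring the probability difference, collapses to a single non-decreasing nonnegative function $\tau(t)$ of $\lambda(t)$ (hence of $t$); concretely one writes $\pi_{\theta^{(t)}}(y^w)-\pi_{\theta^{(t)}}(\hat y) = \frac{e^{c(\hat y)\cdot\theta^{(t)}}}{e^{nZ(\theta^{(t)})}}\big(e^{\lambda(t)k}-1\big)/1$ and absorbs $\frac{e^{c(\hat y)\cdot\theta^{(t)}}}{e^{nZ(\theta^{(t)})}}\cdot\frac{e^{\lambda(t)k}-1}{\lambda(t)k}\cdot(\text{sign bookkeeping})$ into $\tau(t)$, checking that each piece is nonnegative and that the product is non-decreasing in $t$.

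The main obstacle is the last step: verifying that the scalar multiplier $\tau(t)$ extracted in front of $k$ is genuinely \emph{non-decreasing} in $t$, not merely nonnegative. The factor $e^{\lambda(t)k}-1$ over $\lambda(t)$ is fine (it is negative and its magnitude is increasing in $\lambda(t)$ when $k<0$), but the normalization factor $e^{c(\hat y)\cdot\theta^{(t)} - nZ(\theta^{(t)})} = \pi_{\theta^{(t)}}(\hat y)$ need not be monotone on its own, so the monotonicity must come from combining it with the $e^{\lambda(t)k}-1$ term — essentially showing $\pi_{\theta^{(t)}}(\hat y) - \pi_{\theta^{(t)}}(y^w)$ increases along the ray. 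I would handle this by differentiating $g(\lambda) = \pi_{\lambda\Delta}(\hat y) - \pi_{\lambda\Delta}(y^w)$ with respect to $\lambda\ge 0$ and checking $g'(\lambda)\ge 0$: this reduces to a covariance/correlation inequality between the linear functionals $c(y)\cdot\Delta$ and the indicator-type weights defining $\hat y$ versus $y^w$ under the softmax $\pi_{\lambda\Delta}$, of the same flavor as the monotonicity argument used in Proposition~\ref{thm:dpo-likelihood-bound}. Since $\lambda(t)$ is itself non-decreasing in $t$, monotonicity in $\lambda$ transfers to monotonicity in $t$, and setting $\tau(t) = g(\lambda(t))/|k|$ (with $k=0$ handled trivially, giving both sides zero) completes the argument.
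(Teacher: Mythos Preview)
Your first two paragraphs are exactly the paper's argument: compute the DPO gradient as a positive scalar times $\Delta$, conclude $\theta^{(t)}=\lambda(t)\Delta$ with $\lambda(t)$ non-decreasing, and then read off $\log\pi_{\theta^{(t)}}(y^w)-\log\pi_{\theta^{(t)}}(\hat y)=\lambda(t)\bigl(c(y^w)\cdot\Delta-n\max_j\Delta_j\bigr)$, with the bracketed constant shown nonpositive via $c(y^w)\cdot\Delta\le\|c(y^w)\|_1\cdot\max_j\Delta_j=n\max_j\Delta_j$. That is the entire content of the paper's proof, and the paper takes $\tau(t):=\lambda(t)$ and $k:=c(y^w)\cdot\Delta-n\max_j\Delta_j$.

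The ``main obstacle'' you wrestle with in your third paragraph is self-imposed. The paper's proof (despite the way the proposition is stated) actually establishes the identity for the \emph{log}-probabilities: the quantity it manipulates is $\ell_{\theta^{(t)}}(\cdot)=c(\cdot)\cdot\theta^{(t)}-nZ(\theta^{(t)})=\log\pi_{\theta^{(t)}}(\cdot)$, and the conclusion reads $\log\pi_{\theta^{(t)}}(y^w)-\log\pi_{\theta^{(t)}}(\hat y)=\tau(t)k$. With that reading there is nothing further to check: $\tau(t)=\lambda(t)$ is non-decreasing by construction, and no softmax-normalization or covariance argument is needed. Your attempt to force the affine form $\tau(t)k$ on the \emph{raw} probability difference goes beyond what the paper proves; you can drop that paragraph entirely. (One small notational slip: $c(\hat y)\cdot\Delta=n\max_j\Delta_j$, not $n\|\Delta\|_\infty$ in general, though you use the correct quantity in the inequality that follows.)
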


We have $k=0$ when $c(y^w) = c(\hat{y})$, and $k\ll 0$ when $\|c(y^w)\|_2 \ll \|c(\hat{y})\|_2 = n$ (when $c(y^w)$ is dense) and $\|\Delta\|_2 \approx \|\Delta\|_{\infty}$ (when $\Delta$ is sparse). This implies that when $y^w$ and $y^\ell$ are similar, $\pi_\theta(y^w)$ will degrade more rapidly. Early stopping will therefore have to trade off between reaching the degenerate solution on such cases, and underfitting other cases in which $y^w$ and $y^\ell$ are more distinct. Related findings are  also reported in~\citet{razin2024unintentionalunalignmentlikelihooddisplacement}.

\begin{AIbox}{Takeaways: BoW model case study}
For a simple BoW model, we show a realized setting in which DPO causes the probabilities of preferred outputs to catastrophically plummet, and instead puts all probability mass on generations that simply maximize token-level differences between preferred and dispreferred examples.\looseness=-1
\end{AIbox}

\section{Uncertainty-aware reward model distillation}

As discussed in the previous section, a core issue in preference optimization is that the true preference distribution $p^*(y_1 \succ y_2 \mid x)$ is not known.  Attempting to infer it from finite-sample preference data (which may further be biased or out-of-distribution with respect to the target domain) can then result in a failure to learn reasonable policies.
In this section, we propose a regularized, pessimistic approach to direct preference optimization that brings explicit reward modeling back into the picture through a model distillation objective, while  still maintaining the simplicity and efficiency of offline alignment methods. 

\subsection{Reward model distillation}
\label{subsec:distillation}

Suppose for the moment that the reward function $r^*$  was in fact known, and did not have to be inferred from sampled preference data. Under this setting, we can then  construct a straightforward and efficient offline optimization procedure that is similar in spirit to DPO, but no longer relies directly on a preference dataset.
Concretely, given unlabeled samples $(x, y_1, y_2) \sim \rho$ (where the number of samples can be potentially unlimited), we can define a simple squared ``distillation'' loss that matches the pairwise differences of the explicit reward $r^*$ with those of the implicit policy reward defined by $\pi_\theta$, i.e.,
\begin{align}
\label{eq:distillation}
 \mathcal{L}_{\mathrm{distill}}(r^*, \pi_\theta; \rho) = \mathbb{E}_{\rho(x, y_1, y_2)}\left[ \left (r^*(x, y_1) - r^*(x, y_2) -  \beta \log 
    \frac{\pi_\theta(y_1\mid x) \piref(y_2 \mid x)}{\pi_\theta(y_2 \mid x) \piref(y_1\mid x)}\right )^2\right].
\end{align}
Due to symmetry, here it does not matter if $(y_1, y_2) = (y^w, y^\ell)$, i.e., preferred vs. dispreferred, or vice versa. Notably, a similar squared-loss objective has also been recently proposed and shown to be effective in independent work by \citet{mao-etal-2024-dont} and \citet{gao2024rebel}.
We now first provide further motivation for this approach, before extending it to pessimistic variants in the next section.

Intuitively, the distillation loss in \eqref{eq:distillation} seeks to exactly match \emph{differences} in reward model scores across all generation pairs $(x, y_1, y_2)$. It is easy to see that under the Bradley-Terry model, this is equivalent to matching the strength of the preference relationship, $y_1 \succ y_2$. Furthermore, by only matching differences, we can still conveniently ignore the log partition term, $\log Z(x)$, in the implicit reward formulation for $\pi_\theta$ as shown in \eqref{eq:implicit_rm}, as it is constant across different $y$ for any given $x$.
Finally, similar to the motivation in DPO, we can show that minimizing $\mathcal{L}_{\mathrm{distill}}(r^*, \pi_\theta; \rho)$ indeed results in an optimally aligned policy $\pi_{\theta^*}$,  as long as the data distribution $\rho$ has sufficient support over the space of prompts and responses.

\begin{theorem}
\label{thm:mse}
Let $\mathcal{Y}$ denote the set of all possible responses for any model $\pi_\theta.$ Assume that ${\mathrm{supp}(\pi_\mathrm{ref}(y \mid x)) = \mathcal{Y}}$, i.e., the reference policy may generate any outcome with non-zero probability. Further, let
$\mathrm{supp}(\rho(x, y_1, y_2)) = \mathrm{supp}(\mu(x)) \times \mathcal{Y} \times \mathcal{Y}$. Let $\pi_{\theta^*}(y \mid x) \in \argmin_{\pi_\theta} \mathcal{L}_\mathrm{distill}(r^*, \pi_\theta; \rho)$ be a minimizer over all possible policies, of the implicit reward distillation loss in \eqref{eq:distillation}, for which $r^*(x, y)$ is assumed to be deterministic, and finite everywhere. Then for any $\beta > 0$, $\pi_{\theta^*}$ also maximizes the alignment objective in \eqref{eq:rl_objective}.
\end{theorem}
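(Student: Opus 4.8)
The plan is to show that the distillation loss in \eqref{eq:distillation} is a non-negative quadratic that attains its minimum value of zero, and that every zero-loss policy is exactly the RLHF-optimal policy. First I would observe that $\mathcal{L}_\mathrm{distill} \geq 0$ pointwise under the integrand, and that it equals zero if and only if, for $\rho$-almost-every triple $(x, y_1, y_2)$ in the support, we have the pointwise identity
\begin{equation}
\label{eq:distill-zero}
r^*(x, y_1) - r^*(x, y_2) = \beta \log \frac{\pi_\theta(y_1 \mid x)\piref(y_2 \mid x)}{\pi_\theta(y_2 \mid x)\piref(y_1 \mid x)}.
\end{equation}
The next step is to exhibit a policy that achieves zero loss: take the RLHF-optimal policy $\pi_{\theta^*}$ from \eqref{eq:rl_objective}, which by the closed form has $\beta \log(\pi_{\theta^*}(y\mid x)/\piref(y\mid x)) = r^*(x,y) - \beta\log Z(x)$; substituting into the right-hand side of \eqref{eq:distill-zero}, the $\beta \log Z(x)$ terms cancel and the identity holds exactly. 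Hence $\min_{\pi_\theta}\mathcal{L}_\mathrm{distill}(r^*,\pi_\theta;\rho) = 0$, and this minimum is realized.

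The heart of the argument is the converse: any minimizer must coincide with $\pi_{\theta^*}$. Fix any minimizer $\pi_{\theta^*}$ (overloading notation); since it achieves zero loss, \eqref{eq:distill-zero} holds for all $(x, y_1, y_2)$ in the support, which by the support assumption on $\rho$ is $\mathrm{supp}(\mu) \times \mathcal{Y} \times \mathcal{Y}$. Rearranging \eqref{eq:distill-zero} gives, for every $x \in \mathrm{supp}(\mu)$ and every pair $y_1, y_2 \in \mathcal{Y}$,
\begin{equation}
\beta \log \frac{\pi_{\theta^*}(y_1 \mid x)}{\piref(y_1 \mid x)} - r^*(x, y_1) = \beta \log \frac{\pi_{\theta^*}(y_2 \mid x)}{\piref(y_2 \mid x)} - r^*(x, y_2),
\end{equation}
so the quantity $g(x,y) := \beta \log(\pi_{\theta^*}(y\mid x)/\piref(y\mid x)) - r^*(x,y)$ is independent of $y$; call it $c(x)$. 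This gives $\pi_{\theta^*}(y\mid x) = \piref(y\mid x)\exp((r^*(x,y) + c(x))/\beta)$ for all $y \in \mathcal{Y}$, and since $\mathrm{supp}(\piref(\cdot\mid x)) = \mathcal{Y}$ and $r^*$ is finite everywhere, summing over $y$ and using that $\pi_{\theta^*}(\cdot\mid x)$ is a probability distribution forces $\exp(c(x)/\beta) = 1/Z(x)$ with $Z(x) = \sum_y \piref(y\mid x)\exp(r^*(x,y)/\beta)$, which is finite. Thus $\pi_{\theta^*}$ is exactly the closed-form solution \eqref{eq:implicit_rm} and therefore the maximizer of \eqref{eq:rl_objective}.

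I expect the main obstacle to be handling the ``almost-everywhere'' subtleties carefully: the loss being zero only pins down \eqref{eq:distill-zero} on $\mathrm{supp}(\rho)$, so I need the support assumption to upgrade this to \emph{all} pairs $y_1, y_2$ (for each $x$ in $\mathrm{supp}(\mu)$), and I should be careful that the argmin is taken ``over all possible policies'' so that $\pi_{\theta^*}$ genuinely ranges over distributions rather than a restricted parametric family — otherwise zero loss might not be attainable. A minor additional point worth stating explicitly is why $r^*$ being finite everywhere matters: it guarantees $Z(x) < \infty$ so the normalization step is valid, and it is precisely the assumption that fails in the degenerate DPO regime of \Cref{prop:inf-ratio}, which is what makes distillation from a well-behaved $r^*$ robust where raw DPO is not.
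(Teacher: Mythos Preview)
Your proposal is correct and follows essentially the same approach as the paper's own proof: show the RLHF-optimal policy achieves zero distillation loss, then argue that any zero-loss policy must satisfy the pairwise identity on all of $\mathrm{supp}(\mu)\times\mathcal{Y}\times\mathcal{Y}$ and hence, by normalization, coincide with the closed-form RLHF solution. Your treatment is in fact slightly more explicit than the paper's---you spell out the normalization step that pins down $c(x)=-\beta\log Z(x)$ and the role of $r^*$ being finite, whereas the paper simply asserts that ``due to the normalization constraint, there is no additional offset term allowed.''
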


The theorem holds for a broad class of data distributions $\rho(x, y_1, y_2)$, and makes no assumptions on $r^*$. For example, it is no longer necessary for it to be defined using a  Bradley-Terry  model. Notably, it applies for reward models that are much larger and potentially better than the policy model, as the reward model is not used at test time (in contrast to DPO, which ties the size of the policy model used for generation to the size of the implicit reward model that is trained on preferences).
In fact, this result can also be seen as strict generalization of the IPO framework of \citet{azar2024general}, which corresponds to the special case $r^*(x, y) \triangleq \mathbf{1}\{y = y_w\}$, if labeled pairs $(x, y_w, y_l)$ are provided instead of the unlabeled pairs $(x, y_1, y_2)$.

Of course, the true reward  $r^*$ is usually not known in practice. Still, as in standard RLHF, we can construct good proxies by using the preference data to identify plausible target reward models $r_\mathrm{tgt}$, further guided by any amount of regularization and inductive bias that we desire. Moreover, prior work has found that such explicitly trained reward models are more accurate and generalize better than the implicit reward model defined in \eqref{eq:implicit_rm} that is learned by DPO~\cite{tang2024understandingperformancegaponline, lin2024limitedgeneralizationcapabilityimplicit}. A natural choice is to thus first learn $r_\mathrm{tgt}$ on the preference data $\mathcal{D}_\mathrm{pref}$ using standard methods, and then reuse $\mathcal{D}_\mathrm{pref}$ to distill $\pi_\theta$, which is similar to classical settings in teacher-based model distillation~\cite{hinton2015distillation, Romero15-iclr}. Furthermore, as $r_\mathrm{tgt}$ is a real-valued model, at a bare minimum it is guaranteed to induce a regularized Bradley-Terry preference distribution $p_\mathrm{tgt}(y_1 \succ y_2 \mid x) > 0,~\forall x, y_1, y_2 \in \mathcal{X} \times \mathcal{Y} \times \mathcal{Y}$, and thereby avoid the degeneracies identified in \S\ref{sec:pitfalls} for the maximum likelihood estimate under DPO. 

\begin{AIbox}{Takeaways: Reward model distillation}
If the true reward $r^*$  is known, Eq.~\eqref{eq:distillation} provides an objective that distills $r^*$ directly into $\pi_{\theta^*}$, where $\pi_{\theta^*}$ maximizes $r^*$ as in Eq.~\eqref{eq:rl_objective}. When the true reward $r^*$ is unknown, we can use \emph{any form of approximate reward model} $r_\mathrm{tgt}$. Furthermore, any real-valued $r_\mathrm{tgt}$ naturally adds regularization, and avoids  degenerate optima such as the $0$-probability solutions identified in \S\ref{sec:pitfalls}.
\end{AIbox}

\subsection{Pessimistic reward model distillation}
Choosing a single reward model $r_\mathrm{tgt}$ for anchoring the LM policy can naturally still lead to degenerate behavior if $r_\mathrm{tgt}$ is a poor approximation of the true $r^*$ that  accurately reflects human preferences. However, we can easily extend our framework to handle uncertainty in the right target reward function by defining a confidence \emph{set} of $k \geq 1$ plausible target reward models,
$
    \mathcal{S} = \left\{r_{\mathrm{tgt}}^{1}, \ldots, r_{\mathrm{tgt}}^{k}\right\},
$
and training $\pi_{\theta^*}(y \mid x)$ to maximize the following ``pessimistic'' form of the objective in \eqref{eq:rl_objective}:
\begin{equation}
\label{eq:pessimistic_rl}
         \max_{\pi_\theta}\underset{r_\mathrm{tgt}^i \in \mathcal{S}}{\min} \mathbb{E}_{\mu(x)} \Big[  \underbrace{ \mathbb{E}_{\pi_\theta(y \mid x)}[r_\mathrm{tgt}^i(x, y)] - \mathbb{E}_{\pi_\mathrm{ref}(y \mid x)}[r_\mathrm{tgt}^i(x, y)]}_{\text{ advantage over the baseline policy}}  - \beta \mathbb{D}_\mathrm{KL}(\pi_\theta(\cdot \mid x) \Vert \pi_\mathrm{ref}(\cdot \mid x))\Big].
\end{equation}

\begin{figure}[t]
\centering
    \includegraphics[width=0.5\linewidth]{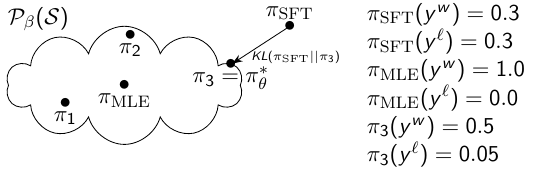}
        \caption{A toy illustration of Theorem \ref{thm:pessimistic_distillation}, which states that the optimal $\pi_{\theta^*}$ for \eqref{eq:pessimistic_rl} is the policy in $\mathcal{P}_\beta(\mathcal{S})$ with the lowest forward-KL from $\pi_{\textrm{SFT}}$. The set $\mathcal{P}_\beta(\mathcal{S})$ contains a (potentially infinite) set of policies $\pi_1, \pi_2, \dots$ corresponding to target reward models. Here, $\pi_\textrm{SFT}$ assigns equal  mass to $y^w$ and $y^\ell$, $\pi_\textrm{MLE}$ is the MLE solution for the DPO objective, which puts all probability mass on $y^w$, and $\pi_3$ is the policy in $\mathcal{P}_\beta(\mathcal{S})$ with lowest forward-KL.\looseness=-1
    }
    \label{fig:pessimistic_dpo}
 \end{figure}
 
In this pessimistic objective we are no longer optimizing $\pi_\theta$ for a single reward, but optimizing $\pi_\theta$ to produce generations that are scored favorably on average, even by the worst-case reward model in the set $\mathcal{S}$, relative to the generations of the baseline policy $\pi_\mathrm{ref}$.\footnote{It is useful to optimize the \emph{advantage} as it cancels the effects of constant differences between reward models in $\mathcal{S}$. We are also free to use any baseline policy $\pi_\mathrm{base}$; we pick $\pi_\mathrm{ref}$ for simplicity and ease of analysis in \S\ref{sec:analysis}.} 
When the set $\cS = \{r^*\}$ consists of only the ground-truth reward, the objective~\eqref{eq:pessimistic_rl} is equivalent to standard RLHF~\eqref{eq:rl_objective}, up to a constant offset independent of $\theta$. More generally, whenever $\cS$ includes a good proxy $\widetilde{r}$ for $r^*$, the pessimistic advantage evaluation ensures that the policy $\pi^*_\theta$ that maximizes \cref{eq:pessimistic_rl} still has a large advantage over $\piref$ under all $r\in\cS$, including $\widetilde{r}$. This use of pessimism to handle uncertainty in the knowledge of the true reward is related to similar techniques in the offline RL literature~\citep{kumar2020conservative, cheng2022adversarially}.

For the objective to be meaningful, however, the set $\cS$ has to be chosen carefully. When $\cS$ is  small, it might not include any good proxy for $r^*$. Conversely, if $\cS$ is too rich, it forces $\pi_{\theta^*}$ to be nearly identical to $\piref$, since any deviations from $\piref$ might be penalized by some reward model in $\cS$. Consequently, we want to design $\cS$ to be the smallest possible set which contains a reasonable approximation to $r^*$.

Finally, to solve \eqref{eq:pessimistic_rl}, we can reformulate it as an equivalent constrained offline optimization problem, which conveniently admits a similar loss form  as \eqref{eq:distillation}, as shown below:
\begin{theorem}[Pessimistic distillation]
\label{thm:pessimistic_distillation}
Define the constrained minimizer
\begin{equation}
\label{eq:constrained_problem}
 \pi_{\theta^*}(y \mid x) \in  \underset{\pi_\theta \in \mathcal{P}_\beta(\mathcal{S})}{\argmin}~\beta \mathbb{E}_{\mu(x)}\mathbb{D}_\mathrm{KL}( \pi_\mathrm{ref}(\cdot \mid x) \Vert \pi_\theta(\cdot \mid x)),
\end{equation}
 where $\mathcal{P}_\beta(\mathcal{S})$ is the set of all possible policies with implicit reward models that are consistent with any target reward model $r_\mathrm{tgt}^i \in \mathcal{S}$, i.e., $\mathcal{P}_{\beta}(\mathcal{S}) \triangleq \{\pi_{\theta_i}\}_{i=1}^{|\mathcal{S}|}$ where $\pi_{\theta_i} \propto \piref(y \mid x) \exp{\frac{1}{\beta} r_{\mathrm{tgt}}^i(x,y)}$.
Then for any $\beta > 0$, $\pi_{\theta^*}$ also maximizes the pessimistic alignment objective in \eqref{eq:pessimistic_rl}.
\end{theorem}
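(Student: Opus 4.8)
The plan is to turn the pessimistic program~\eqref{eq:pessimistic_rl} into a convex--concave saddle-point problem over pairs (policy, mixture-of-reward-models), solve the inner maximization over policies in closed form via the Gibbs variational principle, and recognize the optimal value as exactly the forward-KL objective of~\eqref{eq:constrained_problem}. Two one-line identities do almost all the work. \emph{(i) Gibbs variational principle:} for a fixed reward $r$, write $\pi_r(\cdot\mid x)\propto\piref(\cdot\mid x)\exp(\tfrac1\beta r(x,\cdot))$ with partition function $Z_r(x)$; then for each $x$, $\max_{\pi}\{\mathbb{E}_{\pi}[r(x,\cdot)]-\beta\KL(\pi\Vert\piref(\cdot\mid x))\}=\beta\log Z_r(x)$, uniquely attained at $\pi=\pi_r(\cdot\mid x)$ --- the same fact already used to derive~\eqref{eq:implicit_rm}, obtained by rewriting the bracket as $\beta\log Z_r(x)-\beta\KL(\pi\Vert\pi_r(\cdot\mid x))$. \emph{(ii) Forward-KL $=$ optimal advantage:} substituting $\log\frac{\piref(y\mid x)}{\pi_r(y\mid x)}=\log Z_r(x)-\tfrac1\beta r(x,y)$ into the forward KL gives
\[
\beta\,\mathbb{E}_{\mu(x)}\KL(\piref(\cdot\mid x)\Vert\pi_r(\cdot\mid x))=\mathbb{E}_{\mu(x)}[\beta\log Z_r(x)]-\mathbb{E}_{\mu(x)}\mathbb{E}_{\piref(y\mid x)}[r(x,y)],
\]
so the forward-KL of a calibrated policy $\pi_r$ from $\piref$ is precisely the optimal value of the single-reward advantage objective that appears inside~\eqref{eq:pessimistic_rl}.

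Next I would carry out the minimax step. The inner $\min$ over the finite set $\mathcal{S}$ is unchanged by relaxing it to a $\min$ over the simplex $\Delta(\mathcal{S})$ of mixture weights $w$, since the inner objective is affine in the reward; hence~\eqref{eq:pessimistic_rl} equals $\max_{\pi}\min_{w\in\Delta(\mathcal{S})}\Phi(\pi,w)$ with $\Phi(\pi,w)=\mathbb{E}_{\mu}\big[\mathbb{E}_{\pi}[\bar r_w]-\mathbb{E}_{\piref}[\bar r_w]-\beta\KL(\pi\Vert\piref)\big]$ and $\bar r_w=\sum_i w_i\,r_{\mathrm{tgt}}^i$. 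Now $\pi\mapsto\Phi(\pi,w)$ is concave and upper-semicontinuous (affine in $\pi$ minus the convex map $\beta\,\mathbb{E}_\mu\KL(\pi\Vert\piref)$), while $w\mapsto\Phi(\pi,w)$ is affine on the compact convex set $\Delta(\mathcal{S})$, so Sion's minimax theorem lets me exchange $\max_\pi$ and $\min_w$. By~(i) the inner maximization yields $\max_\pi\Phi(\pi,w)=\mathbb{E}_\mu[\beta\log Z_{\bar r_w}]-\mathbb{E}_\mu\mathbb{E}_{\piref}[\bar r_w]$, attained at $\pi_{\bar r_w}$, and by~(ii) this equals $\beta\,\mathbb{E}_\mu\KL(\piref\Vert\pi_{\bar r_w})$; hence the value of~\eqref{eq:pessimistic_rl} is $\min_{w\in\Delta(\mathcal{S})}\beta\,\mathbb{E}_\mu\KL(\piref\Vert\pi_{\bar r_w})=\min_{\pi\in\mathcal{P}_\beta(\mathcal{S})}\beta\,\mathbb{E}_\mu\KL(\piref(\cdot\mid x)\Vert\pi(\cdot\mid x))$, exactly the objective of~\eqref{eq:constrained_problem}, where $\mathcal{P}_\beta(\mathcal{S})=\{\pi_{\bar r_w}:w\in\Delta(\mathcal{S})\}$. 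To pin down that the forward-KL minimizer is the actual pessimistic \emph{maximizer} (not just that the two optimal values coincide), let $w^\star$ minimize the convex map $w\mapsto\mathbb{E}_\mu[\beta\log Z_{\bar r_w}]-\mathbb{E}_\mu\mathbb{E}_{\piref}[\bar r_w]$ over $\Delta(\mathcal{S})$ (it exists by compactness); using $\partial_{w_i}\mathbb{E}_\mu[\beta\log Z_{\bar r_w}]=\mathbb{E}_\mu\mathbb{E}_{\pi_{\bar r_w}}[r_{\mathrm{tgt}}^i]$ and the fact that the $\beta\KL$-term is common across $i$, the first-order optimality condition for $w^\star$ shows $w^\star\in\argmin_w\Phi(\pi_{\bar r_{w^\star}},w)$; combined with $\pi_{\bar r_{w^\star}}=\argmax_\pi\Phi(\pi,w^\star)$ this makes $(\pi_{\bar r_{w^\star}},w^\star)$ a saddle point, so $\pi^\star:=\pi_{\bar r_{w^\star}}$ --- the minimizer in~\eqref{eq:constrained_problem} --- attains the outer $\max$ in~\eqref{eq:pessimistic_rl}.

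The main obstacle is the minimax exchange. It needs (a) the observation that pessimism over the finite $\mathcal S$ may be lifted to pessimism over mixtures $\Delta(\mathcal S)$, which is what makes the problem convex--concave; (b) checking Sion's hypotheses, in particular upper-semicontinuity/concavity of $\Phi(\cdot,w)$ (relying on finiteness of $\KL(\pi\Vert\piref)$ over the relevant policies) together with compactness of the mixture simplex; and (c) upgrading equality of optimal values to a genuine saddle point --- equivalently, attainment of the outer maximum --- which is exactly where the first-order analysis of the optimal mixture $w^\star$ is required. The identities~(i)--(ii) are routine calculus. I would also flag that, for the statement to be correct, $\mathcal{P}_\beta(\mathcal{S})$ should be read as the image of the convex hull of $\mathcal{S}$ under $r\mapsto\pi_r$, consistent with the ``potentially infinite'' family depicted in Figure~\ref{fig:pessimistic_dpo}.
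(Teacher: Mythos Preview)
Your proof follows the same route as the paper's: exchange $\max_\pi$ and $\min_r$, solve the inner maximization in closed form via the Gibbs policy, and identify the resulting optimal value as the forward KL $\beta\,\mathbb{E}_\mu\KL(\piref\Vert\pi_r)$. The paper carries out your identities (i) and (ii) in essentially the same way, just less compactly.

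The substantive difference is the minimax step. The paper simply asserts that because the objective is ``linear in $r_{\mathrm{tgt}}$ and convex in $\pi$'' one may interchange the $\min$ over the \emph{finite} set $\mathcal{S}$ with the $\max$ over policies, and then computes $\min_{r\in\mathcal{S}}\beta\,\mathbb{E}_\mu\KL(\piref\Vert\pi_r)$ directly over the vertices. You instead lift the $\min$ to the simplex $\Delta(\mathcal{S})$ before invoking Sion, which is the rigorous move, and this is what delivers the convex-hull reading of $\mathcal{P}_\beta(\mathcal{S})$ that you flag at the end. That flag is well placed: the finite-set interchange is not valid in general (take $\mathcal{S}=\{r_1,r_2\}$ symmetric about $\piref$; the pessimistic maximizer is $\piref=\pi_{\frac12 r_1+\frac12 r_2}$, which lies in the convex-hull image but not among the $|\mathcal{S}|$ vertex policies, and one checks $\max_\pi\min_{r\in\mathcal{S}}\Phi=0<\min_{r\in\mathcal{S}}\max_\pi\Phi$). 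The ``potentially infinite'' language in Figure~\ref{fig:pessimistic_dpo} suggests the authors had the convex hull in mind despite the finite definition in the theorem statement. Your explicit saddle-point argument---using the KKT conditions at $w^\star$ to show $(\pi_{\bar r_{w^\star}},w^\star)$ is a saddle and hence that the forward-KL minimizer is the actual pessimistic maximizer, not merely a witness of equal values---is also a step the paper's proof leaves implicit.
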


To unpack this result, Theorem~\ref{thm:pessimistic_distillation} stipulates that the $\pi_\theta$ that maximizes the pessimistic objective in \eqref{eq:pessimistic_rl} is the policy in $\mathcal{P}_\beta(\mathcal{S})$ that is closest in \emph{forward}  KL-divergence to $\pi_\mathrm{ref}$ (see Figure~\ref{fig:pessimistic_dpo}).\footnote{Note that the objective in~\eqref{eq:constrained_problem} minimizes the {\em forward} KL-divergence $\mathbb{D}_\mathrm{KL}( \pi_\mathrm{ref}(\cdot \mid x) \Vert \pi_\theta(\cdot \mid x))$ even though the pessimistic objective in~\eqref{eq:pessimistic_rl} is regularized with {\em reverse} KL-divergence $\mathbb{D}_\mathrm{KL}(  \pi_\theta(\cdot \mid x) \Vert \pi_\mathrm{ref}(\cdot \mid x) )$.} In addition, this policy also maximizes the expected reward of one of the $r_\mathrm{tgt}^i \in \mathcal{S}$ (minus the additional weighted reverse KL-divergence  penalty term). Intuitively, the forward KL-divergence term serves the role of biasing the model towards optimizing for reward models that are similar to the implicit reward that  $\pi_\mathrm{ref}$ already maximizes. Otherwise, there might exist a target reward model $r_\mathrm{tgt}^i \in \mathcal{S}$ for which the advantage of $\pi_{\theta}$ relative to $\pi_\mathrm{ref}$ will be low, or even negative (a solution that we would like to avoid). 

\subsubsection{Optimization}
The constraint in \eqref{eq:constrained_problem} can be  relaxed and approximately optimized by  introducing an objective with  a Lagrangian-style penalty with strength $\alpha > 0$ on a  form of distillation loss as \eqref{eq:distillation}, i.e., 
\begin{align}
 \min_{\pi_\theta} \beta \mathbb{E}_{\mu(x)}\mathbb{D}_\mathrm{KL}(\pi_\mathrm{ref}(y \mid x) \Vert \pi_\theta(y \mid x)) + \alpha \min_{r^i_\mathrm{tgt} \in \mathcal{S}}  \mathcal{L}_{\mathrm{distill}}(r^i_\mathrm{tgt}, \pi_\theta; \rho),
\end{align}
where for convenience we divide by $\alpha$ and instead optimize\footnote{In practice, we also compute and optimize the $\min$ over reward models per each mini-batch of examples.} 
\begin{align}
\label{eq:pessimistic_distillation}
 \mathcal{L}_{\mathrm{pdistill}}(\mathcal{S}, \pi_\theta; \rho) = \min_{r^i_\mathrm{tgt} \in \mathcal{S}}  \mathcal{L}_{\mathrm{distill}}(r^i_\mathrm{tgt}, \pi_\theta; \rho) + \gamma\mathbb{E}_{\mu(x)}\mathbb{D}_\mathrm{KL}(\pi_\mathrm{ref}(\cdot \mid x) \Vert \pi_\theta(\cdot \mid x)),
\end{align}
where $\gamma = \beta / \alpha$.
In reality,  minimizing \eqref{eq:pessimistic_distillation} for $\gamma > 0$ is equivalent to solving the constrained optimization problem in \eqref{eq:constrained_problem} with an implicitly larger set of possible reward models $\mathcal{S}_\gamma \supseteq \mathcal{S}$ indexed by $\gamma$. More specifically, $\mathcal{S}_\gamma$ also contains all  reward models $\tilde{r}$ that are approximately consistent with the anchoring reward models $r_\mathrm{tgt}^i$ contained in $\mathcal{S}$, as the following result  states.

\begin{proposition}[Soft pessimistic distillation]
\label{prop:implicit}
Assume the same conditions as Theorem~\ref{thm:mse}. Then for any $0 < \gamma < \infty$, there exists a $\lambda \geq 0$ such that $\pi_{\theta^*}(y \mid x) \in \argmin_{\pi_\theta} \mathcal{L}_{\mathrm{pdistill}}(\mathcal{S}, \pi_\theta; \rho)$, where $\pi_{\theta^*}$ is a minimizer over all possible policies  of the objective \eqref{eq:constrained_problem}, for the effective reward model set

\begin{equation}
    \mathcal{S}_\gamma = \bigcup_{r_\mathrm{tgt}^i \in \mathcal{S}}
\Big \{ \tilde{r} \colon \mathbb{E}_{\rho(x, y_1, y_2)}\left[(r_\mathrm{tgt}^i(x, y_1) - r_\mathrm{tgt}^i(x, y_2) - \tilde{r}(x, y_1) + \tilde{r}(x, y_2))^2 \right]\leq \lambda \Big\}.
\label{eq:s-gamma}
 \end{equation}
\end{proposition}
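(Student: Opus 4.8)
The plan is to observe that the effective set $\mathcal{S}_\gamma$ in \eqref{eq:s-gamma} is constructed precisely so that the feasible policy family $\mathcal{P}_\beta(\mathcal{S}_\gamma)$ of the constrained problem \eqref{eq:constrained_problem} is exactly a sublevel set of the distillation penalty appearing in $\mathcal{L}_{\mathrm{pdistill}}$, after which Proposition~\ref{prop:implicit} reduces to the textbook equivalence between a penalized objective and its constrained counterpart (no convexity needed).

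First I would set up the DPO change of variables from Theorem~\ref{thm:mse}: to each full-support policy $\pi_\theta$ associate its implicit reward $\tilde r_{\pi_\theta}(x,y) \triangleq \beta\log\big(\pi_\theta(y\mid x)/\piref(y\mid x)\big)$, which under the stated support conditions puts full-support policies in bijection with reward functions modulo functions of $x$ alone, and note that $\pi_\theta \propto \piref\exp(\tfrac1\beta r)$ holds iff $r(x,y) = \tilde r_{\pi_\theta}(x,y) + c(x)$ for some $c$. A one-line computation gives $\beta\log\frac{\pi_\theta(y_1\mid x)\piref(y_2\mid x)}{\pi_\theta(y_2\mid x)\piref(y_1\mid x)} = \tilde r_{\pi_\theta}(x,y_1) - \tilde r_{\pi_\theta}(x,y_2)$, so $\mathcal{L}_{\mathrm{distill}}(r_\mathrm{tgt}^i, \pi_\theta; \rho)$ is precisely the $\rho$-expected squared gap between the pairwise contrasts of $r_\mathrm{tgt}^i$ and those of $\tilde r_{\pi_\theta}$ --- exactly the quantity inside \eqref{eq:s-gamma}. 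Hence $\pi_\theta \in \mathcal{P}_\beta(\mathcal{S}_\gamma)$ iff $\tilde r_{\pi_\theta}$ lies in the radius-$\lambda$ ball around some $r_\mathrm{tgt}^i$, i.e. iff $D(\pi_\theta) \triangleq \min_i \mathcal{L}_{\mathrm{distill}}(r_\mathrm{tgt}^i, \pi_\theta; \rho) \le \lambda$, so \eqref{eq:constrained_problem} instantiated at $\mathcal{S}_\gamma$ is simply $\min\{K(\pi_\theta) : D(\pi_\theta) \le \lambda\}$ with $K(\pi_\theta) \triangleq \mathbb{E}_{\mu(x)}\mathbb{D}_\mathrm{KL}(\piref(\cdot\mid x)\Vert\pi_\theta(\cdot\mid x))$ (the $\beta$ prefactor does not affect the $\argmin$).

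Next I would invoke the penalty/constraint equivalence. Since $\mathcal{L}_{\mathrm{pdistill}}(\mathcal{S}, \pi_\theta; \rho) = D(\pi_\theta) + \gamma K(\pi_\theta)$ with $D \ge 0$ and $K$ coercive on the simplex (as $\piref$ has full support, $K \to \infty$ whenever some coordinate of $\pi_\theta$ on $\mathrm{supp}(\piref)$ tends to $0$), the penalized objective attains its minimum at some full-support $\pi_{\theta^*}$; I then \emph{define} $\lambda \triangleq D(\pi_{\theta^*}) \ge 0$ and take $\mathcal{S}_\gamma$ as in \eqref{eq:s-gamma} with this $\lambda$. By the previous paragraph $\pi_{\theta^*} \in \mathcal{P}_\beta(\mathcal{S}_\gamma)$, and for any $\pi_\theta \in \mathcal{P}_\beta(\mathcal{S}_\gamma)$ --- i.e. with $D(\pi_\theta) \le \lambda$ --- optimality of $\pi_{\theta^*}$ for the penalized problem gives $D(\pi_{\theta^*}) + \gamma K(\pi_{\theta^*}) \le D(\pi_\theta) + \gamma K(\pi_\theta)$, whence $\gamma K(\pi_{\theta^*}) \le (D(\pi_\theta) - \lambda) + \gamma K(\pi_\theta) \le \gamma K(\pi_\theta)$ and so $K(\pi_{\theta^*}) \le K(\pi_\theta)$. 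Thus $\pi_{\theta^*}$ minimizes \eqref{eq:constrained_problem} for $\mathcal{S}_\gamma$ while also lying in $\argmin_{\pi_\theta}\mathcal{L}_{\mathrm{pdistill}}(\mathcal{S}, \pi_\theta; \rho)$, which is the claim.

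The hard part is really just the bookkeeping in the first step: showing that membership of a policy in $\mathcal{P}_\beta(\mathcal{S}_\gamma)$ depends only on the pairwise contrasts of its implicit reward, so that $\mathcal{P}_\beta(\mathcal{S}_\gamma)$ coincides with the $\lambda$-sublevel set of $D$. This is exactly where the full-support hypotheses on $\piref$ and $\rho$ and the finiteness of the $r_\mathrm{tgt}^i$ get used --- they are also what ensures that the elements of $\mathcal{P}_\beta(\mathcal{S}_\gamma)$ and the minimizer $\pi_{\theta^*}$ are genuine full-support policies with finite $D$ and $K$, and that the coercivity argument for the existence of $\pi_{\theta^*}$ goes through. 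Everything after that is the standard penalization argument and requires nothing more.
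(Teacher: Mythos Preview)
Your proposal is correct and follows essentially the same route as the paper: identify $\mathcal{P}_\beta(\mathcal{S}_\gamma)$ with the $\lambda$-sublevel set of $D(\pi_\theta)=\min_i \mathcal{L}_{\mathrm{distill}}(r^i_\mathrm{tgt},\pi_\theta;\rho)$, set $\lambda=D(\pi_{\theta^*})$ at the penalized minimizer, and apply the standard penalty-to-constraint equivalence. Your write-up is more explicit than the paper's about the implicit-reward bookkeeping and about existence of $\pi_{\theta^*}$ via coercivity of $K$, but the argument is the same.
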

As a result,  optimizing \eqref{eq:pessimistic_distillation} even when using the singleton  $\mathcal{S} = \{r_\mathrm{tgt}\}$ yields an implicitly pessimistic objective, in which the pessimism is over all reward models $\tilde{r}$ that are consistent up to $\lambda$ with $r_\mathrm{tgt}$. 

\begin{AIbox}{Takeaways: Pessimistic reward model distillation}
It can often be unclear what  reward model should be used as a target for policy distillation. A natural optimization objective in the face of uncertainty is to optimize for the \textit{worst-case} reward, out of a set of plausible reward candidates. We show that this can be formulated in a very similar style to the original distillation loss in Eq.~\eqref{eq:distillation} by simply adding an additional forward KL penalty term.
\end{AIbox}

\subsection{Pessimistic DPO}
\label{subsec:pdpo}
Proposition~\ref{prop:implicit} can also be leveraged to obtain an alternative, implicitly pessimistic, objective that uses DPO directly instead of distillation.
 Consider the following regularized DPO loss: 
\begin{equation}
\label{eq:pdpo}
    \mathcal{L}_\mathrm{pdpo}(\pi_\theta; \mathcal{D}_\mathrm{pref}) = \mathcal{L}_\mathrm{dpo}(\pi_\theta; \mathcal{D}_\mathrm{pref}) + \gamma\mathbb{E}_{\mu(x)}\mathbb{D}_\mathrm{KL}(\pi_\mathrm{ref}(y \mid x) \Vert \pi_\theta(y \mid x)). 
\end{equation}
Following a similar analysis as in  Proposition~\ref{prop:implicit}, we can derive that this implicitly corresponds to maximizing the pessimistic objective in \eqref{eq:pessimistic_rl} for the reward model set
\begin{equation}
    \mathcal{S}_\gamma = \Big \{r_{\pi_\theta}  \colon \mathcal{L}_\mathrm{dpo}(\pi_\theta; \mathcal{D}_\mathrm{pref}) \leq \min_{\pi'_\theta} \mathcal{L}_\mathrm{dpo}(\pi'_\theta; \mathcal{D}_\mathrm{pref}) + \lambda \Big \},
\end{equation}
where $r_{\pi_\theta}(x, y) \triangleq \beta \log \pi_{\theta}(y \mid x)/\pi_\mathrm{ref}(y \mid x) + \beta \log Z(x)$ is the  implicit reward model defined by $\pi_\theta$. $\mathcal{S}_\gamma$ then corresponds to the set of reward models $r_{\pi_\theta}$ that are all approximate minimizers of the DPO loss. This includes not only the MLE, but also all other estimators that obtain nearly the same loss. In principle, this can be expected to help ameliorate some of the issues of \S\ref{sec:pitfalls}: since driving the reward to $\pm \infty$ only marginally decreases the $\mathcal{L}_\mathrm{dpo}$ loss past a certain point, the set $\mathcal{S}$ will also include finite reward functions $|r_{\pi_\theta}(x, y)| < \infty$ for any $\gamma > 0$. These rewards would then be preferred if they induce a policy with a smaller (forward) KL-divergence to $\piref$ than the degenerate, infinite rewards.\looseness=-1

\section{Experimental results}

The main motivation for reward distillation and pessimism
is to increase alignment robustness in challenging settings where it is difficult to learn good policies directly from the preference data. To demonstrate the effectiveness of our approach, we run experiments on the popular \tldr{} summarization task \cite{stiennon2020learning,volske-etal-2017-tldr}, in which we simulate a scenario where the preference data has a spurious correlation between the \emph{length} of a summary and whether or not it is preferred.\footnote{Length has been repeatedly shown in the past to correlate with reward \cite{singhal2023long, park2024disentangling}.} Additionally, we  show results for an \emph{unbiased} setting on TL;DR, as well for an unbiased setting on Anthropic Helpfulness~\cite{Bai2022TrainingAH}. 

\subsection{Experimental setup}
\label{sec:experimental-setup}

We first train an ``oracle'' reward model on the \tldr{} preference data training set \cite{stiennon2020learning} and relabel all preference pairs with this oracle. This enables us to use the oracle reward model for evaluation, without worrying about the gap to true human preferences. After relabeling, longer responses (where longer is defined as $y_1$ having at least $10\%$ more tokens than $y_2$) are preferred in  $61\%$ of the  examples.

To test the effect of a spurious correlation on preference-based policy optimization, we select a training set of 30K examples from the relabeled data such that the longer output is preferred in $\rho$ fraction of examples, with $\rho \in \{0.2, 0.3, 0.4, 0.5, 0.6, 0.7, 0.8\}.$ 
Each such training set is denoted $\mathcal{D}_{\rho}$.
At each $\mathcal{D}_{\rho}$, we compare our approach to DPO \cite{rafailov2024direct} and IPO \cite{azar2024general}, which are currently the most commonly used offline alignment methods. We test the following variants of distillation and pessimism:
\begin{itemize}[leftmargin=*, itemsep=1pt]
    \item \textbf{Distilled DPO} (d-DPO): Trains a reward model $r_\rho$ on $\mathcal{D}_{\rho}$, and then optimizes $\mathcal{L}_{\mathrm{distill}}(r_\rho, \pi_\theta; \rho)$.\looseness=-1
    \item \textbf{Pessimistic DPO} (p-DPO): A pessimistic version of DPO  as described in \S\ref{subsec:pdpo}, trained on $\mathcal{D}_{\rho}$.
    \item \textbf{Pessimistic Distilled DPO} (dp-DPO): Combines the above two by training a reward model $r_\rho$ on $\mathcal{D}_{\rho}$ and  optimizing the pessimistic distillation objective (Eq.~\eqref{eq:pessimistic_distillation}) with confidence set $\mathcal{S} = \{r_{\mathrm{tgt}}\}$.\looseness=-1
    \item \textbf{Pessimistic Ensemble DPO} (e-DPO):
    To create ensembles of reward models, we subsample from each $\mathcal{D}_{\rho}$ five preference datasets, $\mathcal{D}_{\rho, b}$, at $b \in \mathcal{B} = \{0.2, 0.4, 0.5, 0.6, 0.8\}$, such that the fraction of pairs where the longer response is preferred is $b$, and train reward models  $r_{\rho,b}$ on those subsets. Consequently, sensitivity to length should vary across ensemble members. We then apply the same procedure as dp-DPO above, with a confidence set $\mathcal{S}_\rho=\{r_{\rho,b}\}_{b=1}^\mathcal{B}$.
    \end{itemize}

All reward models and policies are initialized from Palm-2-XS~\cite{anil2023palm}. Policies also go through a supervised finetuning step on human-written summaries from the original \tldr{} training set \cite{volske-etal-2017-tldr} prior to alignment, and we term this policy $\pi_\textrm{SFT}$. We evaluate performance by sampling summaries for test set prompts, evaluating the average reward according to the oracle reward model, and computing the advantage in average reward compared to $\pi_\textrm{SFT}$ (before alignment).
We train policies for $10,000$ steps with batch size $16$
and learning rate $10^{-6}$, and reward models for $3k$ steps  with batch size $64$
and learning rate $4 \times 10^{-6}$.
We use the validation set for model selection during policy training and to choose the following hyperparameters. For all DPO variants, we sweep over $\beta \in \{.01,.1, 1,3,10,30,100\}$. For IPO, we sweep over $\tau \in \{0.01,0.1,1,3,5,10,25\}$. For all pessimistic methods we anneal $\gamma=\alpha/\beta$ from $10^{-4}$ to $10^{-2}$ linearly during the $10k$ training steps (however, in later experiments performed with e-DPO, we found annealing does not affect performance and a constant $\gamma$ also leads to similar performance, see \Cref{fig:annealing}).

\subsection{Results}
\label{sec:experimental-results}

\begin{figure}[t]
\centering
    \includegraphics[width=.9\linewidth]{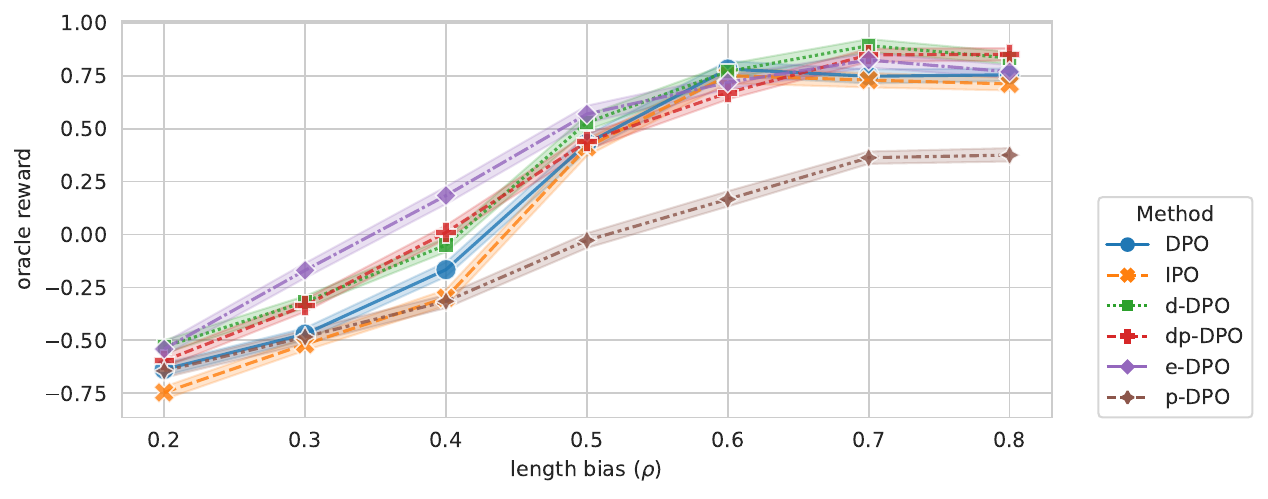}
    \caption{\textbf{Main results}, showing the oracle reward compared to the initial finetuned policy (the oracle reward of the initial finetuned policy is $\approx-1$). Error bars correspond to bootstrap 95\% confidence intervals for finite sample variance.
    Ensemble DPO (e-DPO) is significantly better than DPO and IPO in the challenging setup where shorter responses are preferred ($\rho \le 0.5$), and is generally the best-performing method overall in this regime. Distilled DPO (d-DPO) performs best when longer responses are preferred ($\rho > 0.6$).}
    \label{fig:main_results}
 \end{figure}

We present the results of our experiment in~\Cref{fig:main_results}. As can be seen in the plot, the more challenging setting is when $\rho<0.5$, which corresponds to a sample of preference annotations in which shorter outputs are generally preferred. This distribution shift is more difficult because as mentioned the oracle reward model (trained on human annotations) has a bias in favor of longer outputs~\cite{singhal2023long}. Nevertheless we get sizable improvements compared to the reference policy $\pi_\textrm{SFT}$ for all length bias values.\looseness=-1

All approaches that invoke distillation (d-DPO, e-DPO,  dp-DPO) outperform IPO and DPO ($p < .01$ by a Wald test) for $\rho \leq 0.5$, where shorter responses are preferred. Pessimistic ensemble DPO (e-DPO) performs particularly well in these settings, generally outperforming all methods that use a single reward model. When longer responses are preferred ($\rho > 0.6)$, single reward distillation (d-DPO) leads to the highest performance, significantly outperforming both DPO and IPO ($p < .01$ by a Wald test).
Interestingly, p-DPO does not provide empirical benefits relative to the distillation based methods, indicating that the distillation loss itself is quite important.
For the effect of hyper-parameter selection,  see  \Cref{fig:hyperparameters}. In DPO-based methods, the optimal value of $\beta$ is inversely correlated with the bias; in IPO the same holds for the $\tau$ hyperparameter.\looseness=-1

To better understand the utility of reward ensembles in e-DPO, in particular when $\rho < 0.5$, we examine the role of each reward model in the ensemble across different biases. Specifically, 
for e-DPO, we identify for each example, throughout training, the reward model $r_{\rho,b}$ that best matches the implicit reward of the current policy, i.e., for which reward model is $\mathcal{L}_\mathrm{distill}$ minimized on that example (see Eq.~\eqref{eq:distillation}  and~\eqref{eq:pessimistic_distillation}).
We find that when the policy is trained on data where shorter preference are preferred ($\rho < .5$), the reward model that best matches the policy often has the opposite bias ($b$ is high), and vice versa. Thus, the success of e-DPO may be explained by its ability to distill from reward models that do not suffer from the bias in the policy training data, which is particularly helpful when $\rho \leq .5$ as this bias is also not shared by the oracle RM. We provide the full distribution over reward models for all $\rho$ and $\beta$ in Appendix~\ref{app:edpo_rm_dist}.
Overall, these results demonstrate the efficacy of training a policy by distilling from a reward model in the presence of distribution shifts, and that a careful design of an ensemble to mitigate spurious correlations can lead to further performance gains.\footnote{We note that we also experimented with an ensemble where members are different checkpoints across training of a reward model on the preference data, however, we did not observe any empirical gains from this form of ensemble.}

\subsection{Additional results in an unbiased setting}
To test the ability of our method to perform well on preference tasks where no bias is present, we next run experiments on the Anthropic Helpfulness dataset~\cite{Bai2022TrainingAH}. We use a Gemini 1.0 Ultra~\cite{geminiteam2024geminifamilyhighlycapable} LLM-as-a-judge model for evaluating win-rates of the policies over both the SFT starting point and the best DPO baseline. As shown in Table~\ref{tab:helpfulness_results}, in this unbiased setting our distillation objectives can also provide modest gains. Concretely, e-DPO's win rate against the SFT policy is 65.8\%, while DPO's win rate is 64.2\%. Moreover, comparing e-DPO and DPO directly, e-DPO wins in 49.7\% of the cases, while DPO wins in 46.9\% of the cases (the rest are considered to be ties with no preference relation).

\begin{AIbox}{Takeaways: Experimental results}
Reward model distillation, specifically pessimistic reward model distillation when an ensemble of reward models is used, leads to improved robustness on tasks where bias is present in the preference data. In addition, reward model distillation results in small improvements in performance even on unbiased settings, making it a simple but compelling algorithmic modification to offline training.
\end{AIbox}

\begin{table}[!t]
\centering
\begin{tabular}{lrrr}
\toprule
Method & \%Wins & \%Ties & \%Losses\\
\midrule

e-DPO vs. SFT & \textbf{65.8} & 1.3 & 32.9\\
d-DPO vs. SFT & 65.6 & 1.0 & 33.3  \\
DPO vs. SFT & 64.2 & 1.0 & 34.8  \\
\bottomrule
\end{tabular}
\hspace{1cm}
\begin{tabular}{lrrrr}
\toprule
Method & \%Wins & \%Ties & \%Losses \\
\midrule

e-DPO vs. DPO & \textbf{49.7} & 3.4 & 46.9  \\
d-DPO vs. DPO & 48.2 & 3.6 & 48.2 \\
\\
\bottomrule
\end{tabular}
\caption{Side-by-side win rates on the Helpfulness dataset (with a Gemini 1.0 Ultra evaluator).}
\label{tab:helpfulness_results}
\end{table}
\section{Theoretical analysis}
\label{sec:analysis}
This section characterizes solutions offered by pessimistic DPO and distillation to the issues identified in  \S\ref{sec:pitfalls}, focusing on the simplified scenario in which we optimize with respect to a single preference pair $(y^w, y^\ell)$.

\subsection{Optima}
\label{sec:pdpo_optima}
In its Lagrangian formulation, pessimistic DPO adds a forward KL term to the DPO objective (\S\ref{subsec:pdpo}). Here we seek to better analyze how this additional term effects the optimal policy. 
For the sake of analysis, we assume that the preference annotations are sampled from the reference distribution, $\mu(x)\times \piref(y \mid x) \times \piref(y \mid x)$. Then a finite-sample approximation of the forward KL term is 
$$
\hat{\Omega}(\Theta) :=  \sum_{(y^w, y^\ell) \in \mathcal{D}_\mathrm{Pref}} -(\log \pitheta(y^\ell) + \log \pitheta(y^w)).
$$
%
By applying this finite-sample approximation, \emph{p-DPO has a finite optimum, unlike DPO}, as shown in \Cref{prop:inf-ratio}. Note that this analysis is limited in two ways: (1) as mentioned, we compute the KL term over the  completions in the preference data; (2) we directly optimize the probability ratios $\psi_w = \pitheta(y^w)/\piref(y^w)$ and $\psi_\ell = \pitheta(y^\ell)/\piref(y^\ell)$, rather than optimizing them jointly through the  parameters.  For sufficiently expressive $\pitheta$, however,  this approximation captures the behavior of the two algorithms reasonably well.\looseness=-1

\newcommand{\lpdpo}{\mathcal{L}_{\mathrm{pdpo}}}
\newcommand{\ldpo}{\mathcal{L}_{\mathrm{dpo}}}

\begin{proposition}
\label{prop:optima}
Let $\hat{\mathcal{L}}_{\mathrm{pdpo}}$ represent a finite-sample approximation to $\mathcal{L}_{\mathrm{pdpo}}$ with the empirical forward KL term $\hat{\Omega}(\Theta).$ For a fixed $\hat{\pi}_{\theta}(y_i^w)$ and $\alpha > 1$, the $\argmin_{\pi_{\theta}(y^{\ell})} \hat{\mathcal{L}}_{\mathrm{pdpo}}$ is 
$\min\left(1-\hat{\pi}_\theta(y_i^w), \hat{\pi}_{\theta}(y_i^{\ell})\right)$, with 
$\log \hat{\pi}_{\theta}(y_i^{\ell}) = 
- \frac{1}{\beta} \log \left(\alpha - 1 \right)
 + \log \hat{\pi}_{\theta}(y_i^w) 
 + \log\frac{ \piref(y_i^{\ell})}{\piref(y_i^w)}.$
\end{proposition}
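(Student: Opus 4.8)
The plan is to reduce the problem to a one-dimensional optimization in $b := \pi_\theta(y_i^\ell)$ with $a := \hat{\pi}_\theta(y_i^w)$ held fixed, solve the first-order condition in closed form, and then re-impose the simplex constraint $b \le 1-a$. First I would make the single-pair objective explicit. Using $-\log\sigma(z) = \log(1+e^{-z})$ with $z = \beta\log\frac{\pi_\theta(y^w)\piref(y^\ell)}{\pi_\theta(y^\ell)\piref(y^w)}$, the DPO term equals $\log\!\big(1 + (\kappa b)^\beta\big)$, where $\kappa := \piref(y^w)/(a\,\piref(y^\ell))$ gathers everything independent of $b$. For the penalty, since the preference data is drawn from $\mu\times\piref\times\piref$, the finite-sample forward-KL term is $\hat{\Omega} = -(\log a + \log b)$ up to the (constant) reference entropy, so the $b$-dependent part of $\gamma\,\hat{\Omega}$ (with $\gamma := \beta/\alpha$) is $-\gamma\log b$. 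Hence, up to an additive constant, $\hat{\mathcal{L}}_{\mathrm{pdpo}}(b) = \log\!\big(1+(\kappa b)^\beta\big) - \gamma\log b$.

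Second, I would solve the stationarity condition. Differentiating gives $\frac{d}{db}\hat{\mathcal{L}}_{\mathrm{pdpo}}(b) = \frac1b\Big(\beta\,\frac{(\kappa b)^\beta}{1+(\kappa b)^\beta} - \gamma\Big)$. Writing $u := (\kappa b)^\beta$ and using $\gamma/\beta = 1/\alpha$, setting the derivative to zero reduces to the \emph{linear} equation $\frac{u}{1+u} = \frac1\alpha$, whose unique positive solution is $u = \frac{1}{\alpha - 1}$; this is exactly where $\alpha > 1$ is needed, since $\frac{u}{1+u}\in(0,1)$ has no positive root when $\alpha\le 1$. Back-substituting $(\kappa b)^\beta = (\alpha-1)^{-1}$, taking logarithms, and unfolding $\kappa$ gives $\log b = -\tfrac1\beta\log(\alpha-1) + \log a + \log\tfrac{\piref(y_i^\ell)}{\piref(y_i^w)}$, which is precisely the displayed formula for $\log\hat{\pi}_\theta(y_i^\ell)$.

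Third, I would confirm this critical point is the global minimizer and handle the constraint. The map $b\mapsto \beta\,\frac{(\kappa b)^\beta}{1+(\kappa b)^\beta}$ is strictly increasing from $0$ to $\beta$, so the bracket in $\frac{d}{db}\hat{\mathcal{L}}_{\mathrm{pdpo}}$ changes sign exactly once, from negative to positive, at $b^* := \hat{\pi}_\theta(y_i^\ell)$; thus $\hat{\mathcal{L}}_{\mathrm{pdpo}}$ is unimodal, decreasing on $(0,b^*)$ and increasing on $(b^*,\infty)$, and it diverges to $+\infty$ both as $b\to 0^+$ (via $-\gamma\log b$) and as $b\to\infty$ (since the leading term is $(\beta-\gamma)\log b = \beta(1-\tfrac1\alpha)\log b>0$ for $\alpha>1$), so $b^*$ is the unique unconstrained minimizer. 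Imposing $b\le 1-a$, unimodality immediately gives that the constrained minimizer is $b^*$ when $b^*\le 1-a$ and the boundary $1-a$ otherwise, i.e.\ $\min\!\big(1-\hat{\pi}_\theta(y_i^w),\,\hat{\pi}_\theta(y_i^\ell)\big)$.

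The calculation is routine; the only points that need care are (i) tracking which terms of the finite-sample forward-KL are constant in $b$ so that the penalty collapses to $-\gamma\log b$, (ii) making the role of $\alpha>1$ explicit --- it is simultaneously what makes the stationary point exist and what makes the objective coercive as $b\to\infty$ --- and (iii) using unimodality, rather than merely exhibiting the interior critical point, to justify the $\min(\cdot,\cdot)$. As flagged in the surrounding text, this analysis also treats $a$ and $b$ (equivalently the ratios $\psi_w,\psi_\ell$) as free variables constrained only by the simplex, rather than as coupled outputs of a shared parameterization.
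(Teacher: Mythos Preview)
Your proof is correct and follows essentially the same route as the paper: write the single-pair objective, differentiate, solve the first-order condition, and argue global optimality. The only cosmetic difference is that the paper parameterizes in the ratio $\psi_\ell = \pi_\theta(y^\ell)/\piref(y^\ell)$ rather than in $b = \pi_\theta(y^\ell)$ directly, which amounts to a constant rescaling. Your unimodality argument (monotonicity of the bracket $\beta\,u/(1+u)-\gamma$) is in fact a bit more careful than the paper's appeal to convexity of each summand in $\psi_\ell$---the DPO term $\log(1+(\psi_\ell/\psi_w)^\beta)$ is not globally convex in $\psi_\ell$ for $\beta>1$---and your explicit use of unimodality to justify the $\min(\cdot,\cdot)$ under the simplex constraint is a nice touch the paper leaves implicit.
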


The optimum in \Cref{prop:optima} corresponds to $\log \psi_w/\psi_\ell = {\beta}^{-1} \log (\alpha - 1)$. Recall that IPO~\cite{azar2024general} seeks to assign a constant value to this ratio by minimizing $(\log \frac{\psi_w}{\psi_\ell} - \tau^{-1})^2$; the (unconstrained) optima are identical for $\tau^{-1} := {\beta}^{-1} \log (\alpha - 1)$, but the loss surfaces are different (see further analysis of this in \S\ref{sec:analysis-transitivity}). DPO sets $ \pi_{\theta}(y^\ell_i) \to 0$, as shown in \Cref{cor:minimizers}; this is due not only to competition from $\pi_{\theta}(y^w_i)$ but from DPO penalizing positive probability on $y^\ell_i$. Analysis of the distilled loss gives a similar result:

\begin{proposition}
\label{prop:d-dpo-optimum}
For any fixed $\hat{\pi}_{\theta}(y_i^w)$ and $\beta>0,$ the $\argmin$ of the distilled DPO objective in \eqref{eq:distillation} is
$\min(1 - \hat{\pi}_{\theta}(y_i^w), \hat{\pi}_{\theta}(y_i^\ell))$, with $\log \hat{\pi}_{\theta} (y_i^{\ell}) =
\frac{1}{\beta}(r_t(x, y_i^{\ell}) - r_t(x, y_i^w))
+ \log \hat{\pi}_{\theta}(y_i^w) + \log \frac{\piref(y_i^\ell)}{\piref(y_i^w)}$.
\end{proposition}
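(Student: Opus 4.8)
\textbf{Proof proposal for Proposition~\ref{prop:d-dpo-optimum}.}
The plan is to treat the single-pair distillation loss $\mathcal{L}_{\mathrm{distill}}(r_t, \pi_\theta; \rho)$ in \eqref{eq:distillation} as a function of the scalar $\pi_\theta(y_i^\ell)$ alone, holding $\pi_\theta(y_i^w)$ fixed, and to exploit the fact that the loss is the square of an affine function of $\log \pi_\theta(y_i^\ell)$. First I would write out the single-term summand: with $(y_1,y_2) = (y_i^w, y_i^\ell)$ it is
\begin{equation*}
\left( r_t(x,y_i^w) - r_t(x,y_i^\ell) - \beta \log\frac{\pi_\theta(y_i^w\mid x)\,\piref(y_i^\ell\mid x)}{\pi_\theta(y_i^\ell\mid x)\,\piref(y_i^w\mid x)} \right)^2 .
\end{equation*}
This is minimized (with value zero) exactly when the bracket vanishes, i.e. when $\beta\big(\log\pi_\theta(y_i^w) - \log\pi_\theta(y_i^\ell) - \log\piref(y_i^w) + \log\piref(y_i^\ell)\big) = r_t(x,y_i^w) - r_t(x,y_i^\ell)$. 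Solving this linear equation for $\log\pi_\theta(y_i^\ell)$ gives precisely
\begin{equation*}
\log\pi_\theta(y_i^\ell) = \tfrac{1}{\beta}\big(r_t(x,y_i^\ell) - r_t(x,y_i^w)\big) + \log\pi_\theta(y_i^w) + \log\tfrac{\piref(y_i^\ell)}{\piref(y_i^w)},
\end{equation*}
matching the claimed expression; call this value $\hat{\pi}_\theta(y_i^\ell)$. So the unconstrained minimizer is this point, and since the objective is convex in $\log\pi_\theta(y_i^\ell)$ (a square of an affine map), it is monotone decreasing as $\log\pi_\theta(y_i^\ell)$ moves up toward $\hat{\pi}_\theta(y_i^\ell)$ and increasing beyond it.

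The second step is to impose the probability-simplex constraint: since $\pi_\theta(y_i^w)$ is fixed, we need $\pi_\theta(y_i^\ell) \le 1 - \pi_\theta(y_i^w)$ (the remaining mass after $y_i^w$ must be nonnegative and is available to $y_i^\ell$). By the monotonicity just noted, the constrained minimizer over the feasible interval $[0,\,1-\pi_\theta(y_i^w)]$ is the projection of the unconstrained optimum onto that interval, namely $\min\big(1 - \pi_\theta(y_i^w),\, \hat{\pi}_\theta(y_i^\ell)\big)$ — assuming the unconstrained optimum is nonnegative, which holds as long as the right-hand side is a valid probability; one should note this caveat or simply take it as implicit in the statement (as the analogous Proposition~\ref{prop:optima} does). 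This yields the stated $\argmin$.

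I do not expect a serious obstacle here; the proposition is essentially a one-line convex-optimization observation once the loss is recognized as a squared affine form in $\log\pi_\theta(y_i^\ell)$. The only mild subtlety worth a sentence is the book-keeping around the constraint set — justifying that the feasible range for $\pi_\theta(y_i^\ell)$ is exactly $[0, 1-\pi_\theta(y_i^w)]$ (other responses can absorb zero mass, $y_i^\ell$ can absorb up to all of the residual), and that the clipping to this interval is what produces the outer $\min(\cdot,\cdot)$. A secondary point is to connect this back to the narrative contrast with DPO: here the optimal $\log\psi_w/\psi_\ell$ equals $\tfrac{1}{\beta}(r_t(x,y_i^w) - r_t(x,y_i^\ell))$, a \emph{finite} quantity determined by the reward model's finite score gap, rather than being driven to $+\infty$ as in Proposition~\ref{prop:inf-ratio}; this is immediate from the displayed optimum and needs no further argument.
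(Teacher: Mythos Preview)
Your proposal is correct and takes essentially the same approach as the paper, which simply states that the result ``follows directly from differentiating \eqref{eq:distillation} with respect to $\pi_{\theta}(y_2)$.'' Your version is in fact more detailed than the paper's one-line proof, spelling out the convexity in $\log\pi_\theta(y_i^\ell)$ and the projection onto the feasible interval that yields the outer $\min(\cdot,\cdot)$.
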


While the setting is simplistic, the results are comforting: here the additional regularization effects of both distillation and pessimism (in the case of p-DPO) clearly  help to avoid degenerate optima.

\subsection{Transitive closure: p-DPO vs IPO}
\label{sec:analysis-transitivity}
As pointed out in \S\ref{sec:pdpo_optima}, both p-DPO and IPO target a constant ratio for $\log \psi_w / \psi_l$. Despite the similar form of optima, however, the loss surfaces of the two objectives differ in notable ways. To see this, we consider a simplified setting with three possible outputs, $y_1$, $y_2$, $y_3$. We observe either $\mathcal{D} = \{(y_1 \prec y_2), (y_2 \prec y_3)\}$ or $\overline{\mathcal{D}} = \mathcal{D} \cup \{(y_1 \prec y_3)\}$. If we treat this problem as a multi-arm bandit, the goal is to assign a weight to each arm, which we denote $\psi_i = \log \pi_\theta(y_i \mid x) + Z_x,$ with $Z_x$ an underdetermined log-partition function. 

\begin{proposition}
\label{thm:ipo-mab-optimum}
Let $\mathcal{D} = \{(i, i+1) : i \in 1, 2, \ldots, n\}$ for $n > 2$. Let $\overline{\mathcal{D}}$ be the dataset arising from the transitive closure of $\mathcal{D}.$ Assume $\piref$ is indifferent to all $(y_i, y_j)$. 
Let $\psi^{(\mathcal{D})}_{\infty} = \max_i \psi^{(\mathcal{D})}_i - \min_i \psi^{(\mathcal{D})}_i$. Then $\psi^{(\mathcal{D})}_{\infty} = (n-1)\tau^{-1} > \psi^{(\mathcal{\overline{D}})}_{\infty} = 2 \frac{n-1}{n}\tau^{-1}.$
\end{proposition}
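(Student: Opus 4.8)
\textbf{Proof proposal for Proposition~\ref{thm:ipo-mab-optimum}.}

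The plan is to write down the IPO loss for both datasets as a function of the arm weights $\psi_i$ (with the partition offset $Z_x$ cancelling out of all pairwise differences), find the unconstrained minimizer explicitly in each case, and then compute the range $\psi_\infty = \max_i \psi_i - \min_i \psi_i$. Since $\piref$ is indifferent to every pair, the term $\log \frac{\piref(y^\ell)}{\piref(y^w)}$ vanishes, so the IPO loss on a single comparison $(i \prec j)$ is $(\psi_j - \psi_i - \tau^{-1})^2$. First I would handle the chain $\mathcal{D} = \{(i,i+1)\}_{i=1}^{n-1}$: the loss is $\sum_{i=1}^{n-1}(\psi_{i+1}-\psi_i-\tau^{-1})^2$, which only constrains consecutive differences and is separately minimized (to zero) by setting $\psi_{i+1}-\psi_i = \tau^{-1}$ for every $i$. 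Hence the $\psi_i$ form an arithmetic progression with common difference $\tau^{-1}$, giving $\psi^{(\mathcal{D})}_\infty = (n-1)\tau^{-1}$ directly. (The global minimum is attained and the value of $\psi_\infty$ is the same for every minimizer, so the statement is well-posed.)

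Next I would handle the transitive closure $\overline{\mathcal{D}} = \mathcal{D} \cup \{(i,j): j > i+1\}$, i.e.\ \emph{all} ordered pairs $i < j$. The loss is $\sum_{i<j}(\psi_j-\psi_i-\tau^{-1})^2$. This is a strictly convex quadratic in the differences (modulo the harmless global shift), so the minimizer is unique up to translation and is found by setting the gradient to zero. Differentiating with respect to $\psi_k$ gives the stationarity condition $\sum_{j>k}(\psi_j-\psi_k-\tau^{-1}) - \sum_{i<k}(\psi_k-\psi_i-\tau^{-1}) = 0$. By symmetry of the all-pairs objective under the reflection $\psi_i \mapsto -\psi_{n+1-i}$, the optimal configuration is an arithmetic progression $\psi_i = a + (i-1)d$; plugging this ansatz into the stationarity condition reduces it to a single scalar equation for $d$. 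A short computation of $\sum_{j}(j-k)$ over the two index ranges yields $d = \frac{2}{n}\tau^{-1}$, so $\psi^{(\overline{\mathcal{D}})}_\infty = (n-1)d = 2\frac{n-1}{n}\tau^{-1}$. Comparing, $(n-1)\tau^{-1} > 2\frac{n-1}{n}\tau^{-1}$ iff $1 > 2/n$ iff $n > 2$, which is exactly the hypothesis.

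The one point requiring a little care — and the main (mild) obstacle — is justifying the arithmetic-progression ansatz for $\overline{\mathcal{D}}$ rather than just verifying it. I would do this cleanly by the reflection-symmetry argument: the all-pairs loss is invariant under $\psi \mapsto (-\psi_{n}, -\psi_{n-1}, \ldots, -\psi_1)$ up to an overall additive constant that doesn't affect differences, and since the minimizer of a strictly convex function (on the quotient by translations) is unique, it must be a fixed point of this symmetry, which forces $\psi_i + \psi_{n+1-i}$ to be constant in $i$; combined with the explicit scalar stationarity equation this pins down the progression. Alternatively one can simply solve the linear system from $\nabla = 0$ directly — it is a rank-one perturbation of a multiple of the identity (the Hessian of $\sum_{i<j}(\psi_j-\psi_i)^2$ is $nI - \mathbf{1}\mathbf{1}^\top$ on the space orthogonal to $\mathbf{1}$), so it inverts in closed form — but the symmetry route is shorter and less error-prone. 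Either way the remaining steps are routine algebra.
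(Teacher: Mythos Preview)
Your proposal is correct and follows essentially the same approach as the paper: minimize the chain objective at zero to get $\psi^{(\mathcal{D})}_\infty=(n-1)\tau^{-1}$, then for the transitive closure reduce to an arithmetic progression and solve for the common difference $d=2\tau^{-1}/n$. If anything you are more careful than the paper, which simply asserts the equal-spacing ansatz without the symmetry/convexity justification you sketch and then minimizes the resulting one-variable quadratic $\sum_{i=1}^{n-1}(n-i)(i\gamma-\tau^{-1})^2$ directly---equivalent to your first-order computation.
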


Intuitively, the observation of ${y_1 \prec y_3}$ should increase confidence that $y_3$ is superior to $y_1,$ but in IPO it has the opposite effect, drawing their scores closer together. While pessimistic DPO also has a target ratio between each preference pair, its loss surface is different: in particular, it does not increase quadratically as we move away from the target. We find empirically that pessimistic DPO is robust to the transitive closure of preference annotations in the multi-arm bandit setting, as shown in \Cref{fig:transitivity}. As discussed above, DPO will set $\psi_1 \to -\infty$ because $y_1$ is never preferred.
Specifically, in our  experiments we solve the p-DPO and IPO objectives for both $\mathcal{D} = \{(y_1, y_2), (y_2, y_3)\}$ and $\mathcal{\overline{D}} = \mathcal{D} \cup \{(y_1, y_3)\}$, solving with respect to $\{\pi_\theta(y_i)\}.$ IPO is solved analytically as a quadratic program; for p-DPO we used projected gradient descent. We consider $\beta \in (1, 3, 10, 30)$ and $\alpha \in (5, 10, 20, 50, 100, 1000)$. As shown in \Cref{fig:transitivity}, there are significant differences in the IPO solutions with and without transitive closure, while for p-DPO these differences are imperceptible.\looseness=-1

\begin{AIbox}{Takeaways: Theoretical analysis}
Both pessimistic DPO and reward model distillation avoid the degenerate optima of DPO that were analyzed in \S\ref{sec:pitfalls}---primarily through adding additional sources of regularization to the objective. 
In addition to these more stable optima, we also show how the loss surface of p-DPO  can lead to more favorable outcomes with respect to modeling challenging transitive preferences vs. IPO.
\end{AIbox}

\section{Conclusion}
\label{sec:conclusion}

LM alignment is crucial for deploying safe and helpful assistants, but is difficult 
due to lack of access to perfect preference oracles. We presented a thorough theoretical analysis of some of the degeneracies that DPO is susceptible to when learning from sampled human preference data. Furthermore, our findings suggest that explicit reward modeling remains a powerful vehicle for introducing regularization into post-training. 
By distilling the reward assigned by a single, explicit reward model---or a family of explicit reward models---directly into the implicit reward maximized by our policies using offline data, we demonstrated that we can achieve improved robustness to variations in preference dataset quality,  while maintaining the simplicity of  offline alignment frameworks. Finally, reward model distillation also results in modest but consistent improvements in performance even on unbiased settings, making it an overall compelling algorithmic modification to offline training. \looseness=-1

\section*{Acknowledgements} We thank the anonympous reviewers, Alexander D'Amour, and Chris Dyer for helpful comments and feedback on the manuscript. This research also benefited from  discussions with Victor Veitch, Mandar Joshi, Kenton Lee, Kristina Toutanova, David Gaddy, Dheeru Dua, Yuan Zhang, Tianze Shi, and Anastasios Angelopoulos.
\bibliographystyle{plainnat}
\bibliography{refs.bib}
\appendix
\counterwithin{figure}{section}
\counterwithin{table}{section}
\clearpage
\section{Proofs}
\label{app:proofs}

\subsection{Proof of Proposition~\ref{prop:inf-ratio}}

\begin{proposition*}[Proposition~\ref{prop:inf-ratio} restated]
Under Assumption~\ref{assump:example}, for any $(y, y')$ such that $y = y^w_i$ and $y' = y^\ell_i$ for some $i$, we have $\frac{\pi_{{\theta}^*}(y)\pi_\mathrm{ref}(y')}{\pi_{{\theta}^*}(y')\pi_\mathrm{ref}(y)} \to \infty$, for all global minimizers $\pi_{\theta^*}$  of the DPO objective in  \eqref{eq:bt_objective}, for any $\beta > 0$.
\end{proposition*}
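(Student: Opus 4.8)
The plan is to analyze the DPO loss in \eqref{eq:bt_objective} directly under the disjointness assumption. First I would observe that, since the pairs $(y^w_i, y^\ell_i)$ are mutually disjoint in \emph{both} coordinates and each response $y$ has its own free parameter $\theta_y$, the DPO objective decouples into a sum of $n$ independent terms, one per preference pair. Writing $\delta_i := \beta \log \frac{\pi_\theta(y^w_i)\piref(y^\ell_i)}{\pi_\theta(y^\ell_i)\piref(y^w_i)}$, the $i$-th term is $-\log\sigma(\delta_i)$, and because the $\theta_y$ are unconstrained (the policy is a softmax over all responses, so any finite assignment of log-ratios is achievable for the distinct tokens appearing in the pairs) the terms can be minimized separately over $\delta_i \in \mathbb{R}$.

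Next, the key analytic fact: $-\log\sigma(\delta)$ is strictly decreasing in $\delta$, is bounded below by $0$, but never attains its infimum — $\inf_{\delta\in\mathbb{R}} -\log\sigma(\delta) = 0$ with the infimum approached only as $\delta \to +\infty$. Hence there is no finite minimizer; any sequence of parameters driving the loss to its infimum must have $\delta_i \to +\infty$ for every $i$. Since the global infimum of the full objective is the sum of these per-pair infima ($=0$), and since there is positive probability mass on each pair, a policy (or sequence of policies) achieves the global infimum of \eqref{eq:bt_objective} if and only if $\delta_i \to +\infty$ for all $i$, which is exactly the claim $\frac{\pi_{\theta^*}(y)\piref(y')}{\pi_{\theta^*}(y')\piref(y)} \to \infty$ for each winning/losing pair $(y,y')=(y^w_i,y^\ell_i)$.

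I would be careful about two points of rigor. The disjointness-in-both-coordinates hypothesis is what makes the decoupling valid: if some response appeared in two pairs (say once as a winner and once as a loser, or as a winner in two pairs), the corresponding log-ratios would be coupled through a shared $\theta_y$ and through the shared normalizer, and one could no longer minimize each term in isolation; so I would spell out that under Assumption~\ref{assump:example} the $2n$ responses are all distinct and each $\delta_i$ depends on a disjoint set of parameters, making the joint minimization separable. The second point is the phrase ``global minimizer'' when no finite minimizer exists: I would interpret this (as the paper does elsewhere, and as Figure~\ref{fig:pessimistic_dpo} and Corollary~\ref{cor:minimizers} suggest) in the limiting sense — the statement is about the limiting behavior of any sequence of parameters whose loss converges to the infimum of \eqref{eq:bt_objective}, and equivalently about the infimum not being attained in the interior. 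The main obstacle is therefore not any hard computation but stating this limiting notion cleanly and justifying the separability carefully; once those are in place, the monotonicity of $-\log\sigma$ finishes the argument immediately.
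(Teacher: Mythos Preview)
Your proposal is correct and follows essentially the same approach as the paper: decouple the objective into independent per-pair terms $\delta_i$ (the paper writes $\Delta_i$) using the disjointness of pairs and the one-parameter-per-response assumption, then observe that $-\log\sigma(\delta_i)$ has infimum $0$ attained only as $\delta_i\to+\infty$. If anything, you are slightly more careful than the paper in spelling out why the decoupling is valid and in making explicit that ``global minimizer'' must be read in the limiting sense.
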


\begin{proof}
Since all the preference pairs $(y, y')$ are mutually disjoint, and $\theta_y$ is specific to each $y$, the DPO objective over $\mathcal{D}_\mathrm{pref}$ is convex in $\Delta = \{\Delta_1, \ldots, \Delta_n\}$, where
\begin{equation}
    \Delta_i = \beta \log \frac{\pi_\theta(y_i^w)\pi_\mathrm{ref}(y_i^\ell)}{\pi_\theta(y_i^\ell) \pi_\mathrm{ref}(y_i^w)}. 
\end{equation}
Furthermore, the different $\Delta_i$ are completely independent from each other due to the preference pairs being disjoint, so they can be optimized over separately.

In particular, for every $i$ we have that
\begin{equation}
    \lim_{\Delta_i \rightarrow \infty} -\log \left(\sigma\left(\Delta_i\right) \right) = 0,
\end{equation}
which implies that $\Delta^* = \{\infty\}^n$ is the unique global minimizer of the DPO loss over $\mathcal{D}_\mathrm{pref}$ in the space of $\Delta$'s, and any $\theta^*$ that is a global minimizer must therefore satisfy
\begin{equation}
       \log \frac{\pi_\theta(y_i^w)\pi_\mathrm{ref}(y_i^\ell)}{\pi_\theta(y_i^\ell) \pi_\mathrm{ref}(y_i^w)} = \infty.
\end{equation}
\end{proof}

\subsection{Proof of Corollary~\ref{cor:minimizers}}
\begin{corollary*}[Corollary ~\ref{cor:minimizers} restated]
Under Assumption~\ref{assump:example}, further assume that $0 < \pi_\mathrm{ref}(y) < 1$ for all $y$. Then $\pi_{\theta^*}$ is a global minimizer of the DPO objective in  \eqref{eq:bt_objective}  iff $\pi_{\theta^*}(\mathcal{C}(y^\ell)^c) \to 1$ with $\pi_{\theta^*}(y^w_i) > 0$ $\forall i \in [n]$, where $\mathcal{C}(y^\ell)^c$ is the complement of the set of all responses $y$ that appear as a dispreferred $y^\ell_i$ for any $i \in [n]$.
\end{corollary*}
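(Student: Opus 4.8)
The plan is to establish both directions of the ``iff'' by combining Proposition~\ref{prop:inf-ratio} with a careful accounting of where probability mass can go. First I would unpack what Proposition~\ref{prop:inf-ratio} gives us: at any global minimizer $\pi_{\theta^*}$, for each pair $i$ we have $\pi_{\theta^*}(y_i^w)\pi_\mathrm{ref}(y_i^\ell)/(\pi_{\theta^*}(y_i^\ell)\pi_\mathrm{ref}(y_i^w)) \to \infty$. Since $\pi_\mathrm{ref}(y_i^w), \pi_\mathrm{ref}(y_i^\ell) \in (0,1)$ are fixed positive constants, this ratio blowing up forces $\pi_{\theta^*}(y_i^\ell)/\pi_{\theta^*}(y_i^w) \to 0$. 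Because the pairs are mutually disjoint in both coordinates, the sets $\mathcal{C}(y^\ell) = \{y_1^\ell, \dots, y_n^\ell\}$ and $\mathcal{C}(y^w) = \{y_1^w, \dots, y_n^w\}$ are disjoint with $n$ elements each, so I can reason about the losing responses independently.

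For the ``only if'' direction: suppose $\pi_{\theta^*}$ is a global minimizer. I want to show $\pi_{\theta^*}(\mathcal{C}(y^\ell)^c) \to 1$, equivalently $\sum_i \pi_{\theta^*}(y_i^\ell) \to 0$, together with $\pi_{\theta^*}(y_i^w) > 0$ for all $i$. The latter is immediate: if $\pi_{\theta^*}(y_i^w) = 0$ then the ratio in Proposition~\ref{prop:inf-ratio} would be $0$ (or undefined as $0/0$), contradicting that it tends to infinity; more carefully, the sequence of near-minimizers must keep $\pi_{\theta^*}(y_i^w)$ bounded away from $0$ relative to $\pi_{\theta^*}(y_i^\ell)$, and since total mass is bounded by $1$ this means $\pi_{\theta^*}(y_i^\ell) \to 0$. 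Summing the $n$ (finitely many) such statements gives $\sum_i \pi_{\theta^*}(y_i^\ell) \to 0$, hence $\pi_{\theta^*}(\mathcal{C}(y^\ell)^c) \to 1$. I should phrase this in the limiting/sequential sense the paper uses (a sequence of parameters along which the DPO loss approaches its infimum), since the infimum is not attained.

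For the ``if'' direction: suppose $\pi_{\theta^*}(\mathcal{C}(y^\ell)^c) \to 1$ with $\pi_{\theta^*}(y_i^w) > 0$ for all $i$. Then $\pi_{\theta^*}(y_i^\ell) \to 0$ for each $i$ (as each is nonnegative and their sum over the complement tends to $0$), while $\pi_{\theta^*}(y_i^w)$ stays positive, so $\Delta_i = \beta\log\big(\pi_{\theta^*}(y_i^w)\pi_\mathrm{ref}(y_i^\ell)/(\pi_{\theta^*}(y_i^\ell)\pi_\mathrm{ref}(y_i^w))\big) \to \infty$, whence $-\log\sigma(\Delta_i) \to 0$ for each of the finitely many $i$, so the DPO loss $\to 0 = \inf \mathcal{L}_\mathrm{dpo}$. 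Thus $\pi_{\theta^*}$ is a global minimizer.

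\textbf{Main obstacle.} The subtle point is handling the fact that the infimum of the DPO loss is not achieved, so ``global minimizer'' must be read in a limiting sense, and I must be careful that $\pi_{\theta^*}(y_i^w)$ does not itself collapse to $0$ along the minimizing sequence (which is exactly why the corollary explicitly records $\pi_{\theta^*}(y_i^w) > 0$ as part of the characterization). Making the quantifiers and the sequential limits precise — rather than the routine sigmoid/limit computations — is where the care is needed; once the bookkeeping of the disjoint pairs and the normalization constraint $\sum_y \pi_{\theta^*}(y) = 1$ is set up correctly, both directions follow cleanly.
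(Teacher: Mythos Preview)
Your proposal is correct and follows essentially the same approach as the paper: both directions are obtained by combining Proposition~\ref{prop:inf-ratio} with the boundedness of $\pi_\mathrm{ref}$ and $\pi_{\theta^*}$ to deduce $\pi_{\theta^*}(y_i^\ell)\to 0$ for each $i$, then summing over the (disjoint) losing set. Your treatment is arguably slightly more careful than the paper's in flagging the sequential/limiting interpretation and in explicitly using total mass $\leq 1$ to turn the ratio condition into $\pi_{\theta^*}(y_i^\ell)\to 0$; the paper's proof simply asserts that $\pi_{\theta^*}(y_i^w)/\pi_{\theta^*}(y_i^\ell)=\infty$ ``implies'' $\pi_{\theta^*}(y_i^\ell)=0$ and $\pi_{\theta^*}(y_i^w)>0$, and for the converse invokes Proposition~\ref{prop:inf-ratio} rather than computing the loss directly as you do.
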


\begin{proof}
Following the same argument of the proof of  Proposition~\ref{prop:inf-ratio}, we have that all global minimizers $\theta^*$ of the DPO satisfy $\Delta_i^* = \infty$, which in turn implies that 
\begin{equation}
    \frac{\pi_{\theta^*}(y_i^w)\pi_\mathrm{ref}(y_i^\ell)}{\pi_{{\theta^*}}(y_i^\ell) \pi_\mathrm{ref}(y_i^w)} = \infty.
\end{equation}

Since $\pi_\mathrm{ref}(y)$ is assumed to satisfy $0 < \pi_\mathrm{ref}(y) < 1$ for all $y$, this implies that all $\theta^*$ satisfy
\begin{equation}
    \frac{\pi_{\theta^*}(y_i^w)}{\pi_{\theta^*}(y_i^\ell)} = \infty,
\end{equation}
which further implies that $\pi_{\theta^*}(y_i^\ell) = 0$ and $\pi_{\theta^*}(y_i^w) > 0$ for all $i \in [n]$, as $\pi_{\theta^*}(y_i^w) \leq 1$ for any $y_i^w$. Aggregating 
\begin{equation}
   \mathcal{C}(y_\ell) =  \{y \colon \exists i \in [n] \text{ s.t } y_i^\ell = y\}
\end{equation}
then gives that
\begin{align}
    \pi_{\theta^*}(\mathcal{C}(y_\ell)) &= \sum_{y \in \mathcal{C}(y_\ell)} \pi_{\theta^*}(y) = 0 \Longrightarrow  \pi_{\theta^*}(\mathcal{C}(y_\ell)^c) = 1.
\end{align}
\end{proof}

To prove the converse, let $\pi_{\theta'}$ be a policy that satisfies $\pi_{\theta'}(\mathcal{C}(y^\ell)^c) = 1$, with $\pi_{\theta'}(y_i^w) > 0$, $\forall i \in [n]$,. As $\pi_{\theta'}(y) \geq 0$ for all $y$, this  implies that $\pi_{\theta'(y_i^\ell)} = 0$ $\forall i \in [n]$. Then, we have
\begin{equation}
    \frac{\pi_{\theta'}(y_i^w)}{\pi_{\theta'}(y_i^\ell)} = \infty,
\end{equation}
which by Proposition~\ref{prop:inf-ratio} implies that $\pi_{\theta'}$ is a global optimum.
\subsection{Proof of Theorem~\ref{thm:mse}}

\begin{theorem*}[Theorem~\ref{thm:mse} restated]
Let $\mathcal{Y}$ denote the set of all possible responses for any model $\pi_\theta.$ Assume that ${\mathrm{supp}(\pi_\mathrm{ref}(y \mid x)) = \mathcal{Y}}$, i.e., the reference policy may generate any outcome with non-zero probability. Further, let
$\mathrm{supp}(\rho(x, y_1, y_2)) = \mathrm{supp}(\mu(x)) \times \mathcal{Y} \times \mathcal{Y}$. Let $\pi_{\theta^*}(y \mid x) \in \argmin_{\pi_\theta} \mathcal{L}_\mathrm{distill}(r^*, \pi_\theta; \rho)$ be a minimizer over all possible policies, of the implicit reward distillation loss in \eqref{eq:distillation}, for which $r^*(x, y)$ is assumed to be deterministic, and finite everywhere. Then for any $\beta > 0$, $\pi_{\theta^*}$ also maximizes the alignment objective in \eqref{eq:rl_objective}.
\end{theorem*}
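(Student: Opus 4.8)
The plan is to argue that the distillation loss $\mathcal{L}_\mathrm{distill}(r^*, \pi_\theta; \rho)$ is a nonnegative quantity that attains the value $0$, and that the set of minimizers coincides (up to per-prompt reparametrization) with the known closed-form optimizer of the RLHF objective \eqref{eq:rl_objective}. First I would exhibit a policy achieving zero loss: take $\pi_{\theta^\star}(y\mid x) \propto \pi_\mathrm{ref}(y\mid x)\exp(\tfrac{1}{\beta} r^*(x,y))$, which is well-defined since $r^*$ is finite everywhere and $\mathrm{supp}(\pi_\mathrm{ref}(\cdot\mid x)) = \mathcal{Y}$. Plugging this into the implicit reward $\beta\log\frac{\pi_{\theta^\star}(y\mid x)}{\pi_\mathrm{ref}(y\mid x)} = r^*(x,y) - \beta\log Z(x)$, the bracketed expression in \eqref{eq:distillation} becomes $\big(r^*(x,y_1) - r^*(x,y_2)\big) - \big(r^*(x,y_1) - r^*(x,y_2)\big) = 0$ pointwise, so the expectation is $0$. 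Since the integrand is a square, $\mathcal{L}_\mathrm{distill} \ge 0$ always, hence $\pi_{\theta^\star}$ is a global minimizer and every minimizer $\pi_{\theta^*}$ must have $\mathcal{L}_\mathrm{distill}(r^*, \pi_{\theta^*}; \rho) = 0$.

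Next I would convert the ``zero-loss'' condition into a pointwise statement. Because the support of $\rho$ is the full product $\mathrm{supp}(\mu)\times\mathcal{Y}\times\mathcal{Y}$ and the integrand is continuous and nonnegative, zero expectation forces the integrand to vanish for every $(x,y_1,y_2)$ with $x\in\mathrm{supp}(\mu)$ and $y_1,y_2\in\mathcal{Y}$. That is, for all such triples,
\begin{equation}
\beta\log\frac{\pi_{\theta^*}(y_1\mid x)}{\pi_\mathrm{ref}(y_1\mid x)} - \beta\log\frac{\pi_{\theta^*}(y_2\mid x)}{\pi_\mathrm{ref}(y_2\mid x)} = r^*(x,y_1) - r^*(x,y_2).
\end{equation}
Fixing $x$ and letting $y_2$ range while holding $y_1 = y_0$ fixed, this says $\beta\log\frac{\pi_{\theta^*}(y\mid x)}{\pi_\mathrm{ref}(y\mid x)} = r^*(x,y) + c(x)$ for a constant $c(x)$ independent of $y$. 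Exponentiating and normalizing over $y$ (using that $\pi_{\theta^*}(\cdot\mid x)$ and $\pi_\mathrm{ref}(\cdot\mid x)$ are both probability distributions, and $r^*$ finite so $Z(x)$ is finite), the constant is pinned down to $c(x) = -\beta\log Z(x)$, yielding exactly $\pi_{\theta^*}(y\mid x) = \frac{1}{Z(x)}\pi_\mathrm{ref}(y\mid x)\exp(\tfrac{1}{\beta}r^*(x,y))$ for every $x\in\mathrm{supp}(\mu)$.

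Finally I would invoke the standard characterization already cited in the preliminaries (\cite{korbak2022rl}): this closed form is precisely the unique maximizer of the KL-regularized objective \eqref{eq:rl_objective} for the given $\beta$, prompt-by-prompt, and since \eqref{eq:rl_objective} only involves $x\in\mathrm{supp}(\mu)$ the behavior of $\pi_{\theta^*}$ off $\mathrm{supp}(\mu)$ is irrelevant. Hence $\pi_{\theta^*}$ maximizes \eqref{eq:rl_objective}. The main obstacle, such as it is, is purely a measure-theoretic technicality: justifying the step ``zero expectation of a nonnegative function implies pointwise zero'' when $\mathcal{Y}$ or $\mathrm{supp}(\mu)$ is uncountable requires noting that the integrand is (for a fixed well-behaved parametrization) continuous in its arguments, or else phrasing the conclusion as ``$\rho$-almost everywhere'' and observing that the resulting relation still determines $\pi_{\theta^*}(\cdot\mid x)$ for $\mu$-almost every $x$ — which suffices since \eqref{eq:rl_objective} is itself only an expectation under $\mu$. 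Everything else is bookkeeping with the change-of-variables identity \eqref{eq:implicit_rm}.
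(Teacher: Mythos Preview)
Your proposal is correct and follows essentially the same route as the paper's proof: exhibit the RLHF-optimal closed-form policy as a zero of the nonnegative distillation loss, use full support of $\rho$ to upgrade zero expectation to a pointwise identity on reward differences, and then use normalization to pin down the per-prompt constant. If anything, you are slightly more explicit than the paper about the measure-theoretic step and the normalization argument, but the structure is identical.
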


\begin{proof}

We know that the optimal policy for the RLHF objective~\eqref{eq:rl_objective} is given by $\pi_{\theta^*}(y|x) \propto \piref(y|x)\exp(r^*(x,y)/\beta)$. Plugging this policy into the distillation objective~\eqref{eq:distillation}, we see that $\mathcal{L}_{\text{distill}}(r^*, \pi_{\theta^*}, \rho) = 0$ for all $\rho$. In fact, the loss is equal to 0 pointwise, meaning that $\pi_{\theta^*}$ is a global minimizer of the distillation objective~\eqref{eq:distillation}. Further, let $\pi$ be some other minimizer of $\mathcal{L}_{\text{distill}}(r^*, \cdot, \rho)$. Then $\pi$ also has to attain a loss of $0$ at all $(x, y, y')$ in the support of $\rho$, meaning that $\log \pi(y|x) - \log \pi(y'|x) = \log \pi_{\theta^*}(y|x) - \log \pi_{\theta^*}(y|x)$ for all $(x, y, y')$ in the support of $\rho$. Consequently, the two policies coincide in the support of $\rho$ (due to the normalization constraint, there is no additional offset term allowed as the support of $\rho$ covers all of $\mathcal{Y}$). Finally, noting that the support of the chosen $\rho$ is such that $\pi_{\theta^*}$ puts no mass outside its support due to the KL constraint in~\eqref{eq:rl_objective}, we complete the proof.
\end{proof}

\subsection{Proof of Theorem~\ref{thm:pessimistic_distillation}}

\begin{theorem*}[Theorem~\ref{thm:pessimistic_distillation} restated]
Define the constrained minimizer
\begin{equation*}
 \pi_{\theta^*}(y \mid x) \in  \underset{\pi_\theta \in \mathcal{P}_\beta(\mathcal{S})}{\argmin}~\beta \mathbb{E}_{\mu(x)}\mathbb{D}_\mathrm{KL}( \pi_\mathrm{ref}(\cdot \mid x) \Vert \pi_\theta(\cdot \mid x)),
\end{equation*}
 where $\mathcal{P}_\beta(\mathcal{S})$ is the set of all possible policies with implicit reward models that are consistent with any target reward model $r_\mathrm{tgt}^i \in \mathcal{S}$, i.e., $\mathcal{P}_{\beta}(\mathcal{S}) \triangleq \{\pi_{\theta_i}\}_{i=1}^{|\mathcal{S}|}$ where $\pi_{\theta_i} \propto \piref(y \mid x) \exp{\frac{1}{\beta} r_{\mathrm{tgt}}^i(x,y)}$.
Then for any $\beta > 0$, $\pi_{\theta^*}$ also maximizes the pessimistic alignment objective in \eqref{eq:pessimistic_rl}.
\end{theorem*}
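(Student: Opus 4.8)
The key structural fact is that $\mathcal{P}_\beta(\mathcal{S})$ is a \emph{finite} set of policies, one per reward model $r_\mathrm{tgt}^i$, each of which is exactly the RLHF-optimal policy for that reward. So the plan is to show that the pessimistic objective in \eqref{eq:pessimistic_rl}, \emph{when restricted to this finite set of policies}, is maximized by the member with the smallest forward KL to $\piref$, and then to argue that restricting to $\mathcal{P}_\beta(\mathcal{S})$ loses nothing — the unrestricted maximizer over all policies already lies in $\mathcal{P}_\beta(\mathcal{S})$.

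First I would rewrite the inner objective for a fixed reward $r_\mathrm{tgt}^i$: by the standard Gibbs variational / KL-duality identity (the same one underlying Eq.~\eqref{eq:rl_objective}), for any policy $\pi_\theta$,
\begin{equation*}
\mathbb{E}_{\pi_\theta}[r_\mathrm{tgt}^i] - \beta\,\mathbb{D}_\mathrm{KL}(\pi_\theta \Vert \piref) = \beta\log Z_i(x) - \beta\,\mathbb{D}_\mathrm{KL}(\pi_\theta(\cdot\mid x)\Vert \pi_{\theta_i}(\cdot\mid x)),
\end{equation*}
where $\pi_{\theta_i}\propto \piref\exp(r_\mathrm{tgt}^i/\beta)$ and $Z_i(x)$ is its partition function. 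Subtracting the $\theta$-independent baseline term $\mathbb{E}_{\piref}[r_\mathrm{tgt}^i]$, the per-$i$ objective in \eqref{eq:pessimistic_rl} becomes $\mathbb{E}_{\mu(x)}[\beta\log Z_i(x) - \mathbb{E}_{\piref}[r_\mathrm{tgt}^i] - \beta\,\mathbb{D}_\mathrm{KL}(\pi_\theta\Vert\pi_{\theta_i})]$. This shows the inner maximization over $\pi_\theta$ (for fixed $i$) is solved by $\pi_\theta=\pi_{\theta_i}$, confirming the unrestricted max-min has its max attained inside $\mathcal{P}_\beta(\mathcal{S})$; so we may restrict attention to $\pi_\theta\in\{\pi_{\theta_1},\dots,\pi_{\theta_k}\}$ without loss.

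Next, evaluate the max-min on this finite set: for $\pi_\theta=\pi_{\theta_j}$, the value of the inner objective at index $i$ is $\mathbb{E}_{\mu(x)}[\beta\log Z_i(x) - \mathbb{E}_{\piref}[r_\mathrm{tgt}^i] - \beta\,\mathbb{D}_\mathrm{KL}(\pi_{\theta_j}\Vert\pi_{\theta_i})]$. The first two terms depend only on $i$, not on $j$; only the reverse-KL term $\mathbb{D}_\mathrm{KL}(\pi_{\theta_j}\Vert\pi_{\theta_i})$ couples $j$ to $i$, and it vanishes when $i=j$. I would then show that the $\min_i$ is achieved at $i=j$ — intuitively because moving the evaluating reward away from $r_\mathrm{tgt}^j$ only adds the nonnegative penalty $-\beta\,\mathbb{D}_\mathrm{KL}(\pi_{\theta_j}\Vert\pi_{\theta_i})\le 0$; more carefully, one observes $\beta\log Z_i(x)-\mathbb{E}_{\piref}[r_\mathrm{tgt}^i]$ is itself $-\beta\,\mathbb{D}_\mathrm{KL}(\piref\Vert\pi_{\theta_i})$ (the forward KL!), so the value at $(j,i)$ equals $-\beta\,\mathbb{E}_{\mu}[\mathbb{D}_\mathrm{KL}(\piref\Vert\pi_{\theta_i}) + \mathbb{D}_\mathrm{KL}(\pi_{\theta_j}\Vert\pi_{\theta_i})]$, and at $i=j$ this is $-\beta\,\mathbb{E}_{\mu}[\mathbb{D}_\mathrm{KL}(\piref\Vert\pi_{\theta_j})]$. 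Hence $\min_i(\cdot)\le -\beta\,\mathbb{E}_{\mu}[\mathbb{D}_\mathrm{KL}(\piref\Vert\pi_{\theta_j})]$, and I would want equality, i.e. that the extra $\mathbb{D}_\mathrm{KL}(\pi_{\theta_j}\Vert\pi_{\theta_i})$ term cannot push any other index strictly below this bound.

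The main obstacle is precisely establishing that $\min_i$ over the finite set is attained at $i=j$, i.e.\ that for all $i$, $\mathbb{D}_\mathrm{KL}(\piref\Vert\pi_{\theta_i}) + \mathbb{D}_\mathrm{KL}(\pi_{\theta_j}\Vert\pi_{\theta_i}) \ge \mathbb{D}_\mathrm{KL}(\piref\Vert\pi_{\theta_j})$ pointwise in $x$. This looks like a Bregman/Pythagorean-type inequality: writing everything out in terms of $\log(\pi_{\theta_i}/\piref) = r_\mathrm{tgt}^i/\beta - \log Z_i$, the left minus right side reduces to a difference of log-partition terms plus an expectation of a reward difference, and I expect it to follow from convexity of $\log Z$ (equivalently, nonnegativity of a KL between $\pi_{\theta_i}$ and $\pi_{\theta_j}$) — in fact I suspect the cleanest route is to add and subtract to get $\mathbb{D}_\mathrm{KL}(\piref\Vert\pi_{\theta_i}) - \mathbb{D}_\mathrm{KL}(\piref\Vert\pi_{\theta_j}) = \mathbb{E}_{\piref}[\log(\pi_{\theta_j}/\pi_{\theta_i})]$ and then note $\mathbb{E}_{\piref}[\log(\pi_{\theta_j}/\pi_{\theta_i})] + \mathbb{D}_\mathrm{KL}(\pi_{\theta_j}\Vert\pi_{\theta_i}) = \mathbb{E}_{\pi_{\theta_j}}[\log(\pi_{\theta_j}/\pi_{\theta_i})] + (\text{cross terms})$, which can be massaged into a manifestly nonnegative KL-like quantity. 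Once that inequality is in hand, $\min_i$ over index $i$ at policy $\pi_{\theta_j}$ equals $-\beta\,\mathbb{E}_\mu[\mathbb{D}_\mathrm{KL}(\piref\Vert\pi_{\theta_j})]$, so maximizing over $j\in\mathcal{P}_\beta(\mathcal{S})$ is exactly minimizing $\mathbb{E}_\mu[\mathbb{D}_\mathrm{KL}(\piref\Vert\pi_{\theta_j})]$, which is \eqref{eq:constrained_problem}, completing the proof.
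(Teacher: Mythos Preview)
Your plan diverges from the paper's argument and, as written, contains a genuine gap.

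First, a sign error: you claim $\beta\log Z_i(x)-\mathbb{E}_{\piref}[r_\mathrm{tgt}^i]=-\beta\,\KL(\piref\Vert\pi_{\theta_i})$, but in fact it equals $+\beta\,\KL(\piref\Vert\pi_{\theta_i})$. Writing $r_\mathrm{tgt}^i=\beta\log(\pi_{\theta_i}/\piref)+\beta\log Z_i$ and taking $\mathbb{E}_{\piref}$ gives $\mathbb{E}_{\piref}[r_\mathrm{tgt}^i]=-\beta\,\KL(\piref\Vert\pi_{\theta_i})+\beta\log Z_i$. With the correct sign, the value of the objective at policy $\pi_\theta$ and reward index $i$ is $\beta\,\mathbb{E}_{\mu}\big[\KL(\piref\Vert\pi_{\theta_i})-\KL(\pi_\theta\Vert\pi_{\theta_i})\big]$.

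Second, and more seriously, the two structural steps you rely on both fail. The claim that the outer $\max_{\pi_\theta}$ can be restricted to $\mathcal{P}_\beta(\mathcal{S})$ ``without loss'' does \emph{not} follow from the observation that, for each fixed $i$, the maximizer is $\pi_{\theta_i}$: in a max--min problem the outer argmax need not coincide with any per-index argmax. And the ``Pythagorean'' inequality you would need (after fixing the sign), namely $\KL(\piref\Vert\pi_{\theta_i})-\KL(\pi_{\theta_j}\Vert\pi_{\theta_i})\ge\KL(\piref\Vert\pi_{\theta_j})$ for all $i$, is false in general. A concrete counterexample on two outcomes: $\piref=(0.5,0.5)$, $\pi_{\theta_1}=(0.9,0.1)$, $\pi_{\theta_2}=(0.1,0.9)$. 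By symmetry $\KL(\piref\Vert\pi_{\theta_1})=\KL(\piref\Vert\pi_{\theta_2})$, yet $\KL(\pi_{\theta_2}\Vert\pi_{\theta_1})>0$, so the left side is strictly smaller than the right. In this same example, evaluating the pessimistic objective at $\pi_\theta=\piref$ already beats evaluating it at either $\pi_{\theta_1}$ or $\pi_{\theta_2}$, so the max over $\pi_\theta$ is not attained in $\mathcal{P}_\beta(\mathcal{S})$ at all---your restriction step is invalid, not merely unjustified.

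The paper takes a different route that sidesteps these difficulties: it invokes minimax duality (the objective is affine in $r_\mathrm{tgt}$ and concave in $\pi_\theta$) to swap the order of $\min$ and $\max$, obtaining $\min_{r_\mathrm{tgt}\in\mathcal{S}}\max_{\pi}[\cdots]$. The inner $\max_\pi$ is then the standard KL-regularized problem with closed-form solution $\pi^*_{r_\mathrm{tgt}}$, and substituting this in collapses the expression directly to $\min_{r_\mathrm{tgt}\in\mathcal{S}}\beta\,\mathbb{E}_{\mu}[\KL(\piref\Vert\pi^*_{r_\mathrm{tgt}})]$. No KL triangle-type inequality is ever needed; the forward KL emerges from the algebraic simplification after the swap, not from a comparison between different $(i,j)$ pairs.
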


\begin{proof}

Consider the pessimistic objective:
\begin{equation}
         \max_{\pi_\theta} ~\underset{r_\mathrm{tgt} \in \mathcal{S}}{\min}~ \mathbb{E}_{\mu(x)} \Big [{ \mathbb{E}_{\pi_\theta(y \mid x)}[r_\mathrm{tgt}(x, y)] - \mathbb{E}_{\pi_\mathrm{ref}(y \mid x)}[r_\mathrm{tgt}(x, y)]} \Big] - \beta \mathbb{D}_\mathrm{KL}(\pi_\theta \Vert \pi_\mathrm{ref}).
\end{equation}
As it is linear in $r_\mathrm{tgt}$ and convex in $\pi$, we can switch the order of $\min$ and $\max$:
\begin{equation}
        \underset{r_\mathrm{tgt} \in \mathcal{S}}{\min}~ \left [ \underset{\pi \in \Pi}{\max}~  \mathbb{E}_{\mu(x)} \Big [ \mathbb{E}_{\pi(y \mid x)}[r_\mathrm{tgt}(x, y)] - \mathbb{E}_{\pi_\mathrm{ref}(y \mid x)}[r_\mathrm{tgt}(x, y)] \Big] - \beta \mathbb{D}_\mathrm{KL}(\pi \Vert \pi_\mathrm{ref})\right ].
\end{equation}

Note that every $r_\mathrm{tgt} \in \mathcal{S}$ can be written in terms of the KL-constrained policy $\pi^*_{r_\mathrm{tgt}}$ it induces, i.e., 
\begin{equation}
    r_\mathrm{tgt}(x, y) = \beta \log \frac{\pi^*_{r_\mathrm{tgt}}(y \mid x)}{\pi_\mathrm{ref}(y \mid x)} + \beta \log Z(x, r_\mathrm{tgt}),
\end{equation}
where 
\begin{align}
\pi^*_{r_\mathrm{tgt}} &= \argmax_{\pi_\theta} \mathbb{E}_{\mu(x)} \mathbb{E}_{\pi_\theta(y \mid x)}[r_\mathrm{tgt}(x, y)]  - \beta \mathbb{D}_\mathrm{KL}(\pi_\theta \Vert \pi_\mathrm{ref})
\end{align}
which has the form
\begin{equation}
    \pi^*_{r_\mathrm{tgt}}(y \mid x) = \frac{1}{Z(x, r_\mathrm{tgt})} \pi_\mathrm{ref}(y \mid x) \exp\left(\frac{1}{\beta} r_\mathrm{tgt}(x, y)\right)
\end{equation}
where $Z(x, r_\mathrm{tgt})$ is the partition function:
\begin{equation}
    Z(x, r_\mathrm{tgt}) = \sum_{y \in \mathcal{Y}} \pi_\mathrm{ref}(y \mid x) \exp\left(\frac{1}{\beta} r_\mathrm{tgt}(x, y)\right).
\end{equation}
Substituting $\pi^*_{r_\mathrm{tgt}}$ in for $\max_\pi$ and writing $r_\mathrm{tgt}$ in terms of $\pi^*_{r_\mathrm{tgt}}$, we get the simplified objective
\begin{align}
\underset{r_\mathrm{tgt} \in \mathcal{S}}{\min}&~ \left [ \underset{\pi \in \Pi}{\max}~  \mathbb{E}_{\mu(x)} \Big [ \mathbb{E}_{\pi(y \mid x)}[r_\mathrm{tgt}(x, y)] - \mathbb{E}_{\pi_\mathrm{ref}(y \mid x)}[r_\mathrm{tgt}(x, y)] \Big] - \beta \mathbb{D}_\mathrm{KL}(\pi \Vert \pi_\mathrm{ref})\right ] \nonumber \\
        &=\underset{r_\mathrm{tgt} \in \mathcal{S}}{\min}~ \bigg [  \mathbb{E}_{\mu(x)} \bigg [ \mathbb{E}_{\pi^*_{r_\mathrm{tgt}}(y \mid x)}\bigg [ \beta \log \frac{\pi^*_{r_\mathrm{tgt}}(y \mid x)}{\pi_\mathrm{ref}(y \mid x)} + \beta \log Z(x, r_\mathrm{tgt})  \bigg]  \nonumber \\
        &\hspace{2.4cm}~~-\mathbb{E}_{\pi_\mathrm{ref}(y \mid x)}\bigg[  \beta \log \frac{\pi^*_{r_\mathrm{tgt}}(y \mid x)}{\pi_\mathrm{ref}(y \mid x)} + \beta \log Z(x, r_\mathrm{tgt})  \bigg]  \\
        &\hspace{2.4cm}~~-\beta \mathbb{D}_\mathrm{KL}(\pi^*_{r_\mathrm{tgt}} \Vert \pi_\mathrm{ref} \mid x)\bigg ] \bigg ] \nonumber \\
&=\underset{r_\mathrm{tgt} \in \mathcal{S}}{\min}~ \beta \bigg [  \mathbb{E}_{\mu(x)} \bigg [ \mathbb{D}_\mathrm{KL}(\pi^*_{r_\mathrm{tgt}} \Vert \pi_\mathrm{ref} \mid x) + \mathbb{D}_\mathrm{KL}(\pi_\mathrm{ref} \Vert \pi^*_{r_\mathrm{tgt}} \mid x) -  \mathbb{D}_\mathrm{KL}(\pi^*_{r_\mathrm{tgt}} \Vert \pi_\mathrm{ref} \mid x)\bigg ] \bigg] \nonumber \\
    &= \min_{r_\mathrm{tgt} \in \mathcal{S}} \beta  \mathbb{E}_{\mu(x)} \left[\mathbb{D}_\mathrm{KL}(\pi_\mathrm{ref} \Vert \pi^*_{r_\mathrm{tgt}} \mid x)\right]. \nonumber
\end{align}

\end{proof}

\subsection{Proof of \Cref{thm:dpo-likelihood-bound}}
\begin{proposition*}[\Cref{thm:dpo-likelihood-bound} restated]

Let $y^w, y^\ell \in \mathcal{V}^n$ be preferred versus dispreferred outputs of length $n$, respectively, with $\piref(y^w), \piref(y^{\ell}) > 0$ and corresponding count vectors $c(y^w), c(y^{\ell}).$ 
Let $\log \pitheta(y) = c(y) \cdot \theta - n Z(\theta)$
for 
$Z(\theta) = \log \sum^{\mathcal{V}}_i e^{\theta_i}$, with upper bound $\log \tilde{\pi}_{\theta}(y) = c(y) \cdot \theta - n \max_j \theta_j$.
Let $\theta^{(t)}$ represent the parameters of $\pi$ after $t$ steps of gradient descent on $\ldpo(\{y^\ell, y^w, x\})$, with $\theta^{(0)} = 0$. Then, we have that $$\pi_{\theta^{(t)}}(y^w) \leq \tilde{\pi}_{\theta^{(t)}}(y^w) \leq \tilde{\pi}_{\theta^{(t-1)}}(y^w) \quad \text{for all $t$,}$$
with strict inequality when $||c(y^w) -c(y^\ell)||_0 > 1$.
\end{proposition*}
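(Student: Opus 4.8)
The plan is to track the preferred-sequence (upper-bound) log-probability $\log\tilde\pi_{\theta^{(t)}}(y^w) = c(y^w)\cdot\theta^{(t)} - n\max_j\theta^{(t)}_j$ across a single gradient step, and show this quantity is non-increasing. First I would compute the DPO gradient in this bag-of-words parameterization: since $\log\pi_\theta(y^w) - \log\pi_\theta(y^\ell) = (c(y^w)-c(y^\ell))\cdot\theta = \Delta\cdot\theta$ (the $nZ(\theta)$ terms cancel), the loss is $-\log\sigma(\beta\,\Delta\cdot\theta + b)$ for a constant $b$ coming from $\piref$, so $\nabla_\theta\mathcal{L}_{\mathrm{dpo}} = -\beta\,\sigma(-\beta\,\Delta\cdot\theta - b)\,\Delta =: -\eta_t\,\Delta$ with $\eta_t>0$. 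Hence one gradient-descent step with step size $\varepsilon$ gives $\theta^{(t)} = \theta^{(t-1)} + \varepsilon\eta_{t-1}\Delta$, i.e. the parameters only ever move in the fixed direction $\Delta$ (starting from $\theta^{(0)}=0$, so $\theta^{(t)} = s_t\Delta$ for an increasing scalar sequence $s_t\ge 0$).

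Next I would plug $\theta^{(t)} = s_t\Delta$ into the upper bound: $\log\tilde\pi_{\theta^{(t)}}(y^w) = s_t\, c(y^w)\cdot\Delta - n\, s_t\max_j \Delta_j$ (using $s_t\ge0$ to pull the scalar out of the max). So the claim reduces to showing $c(y^w)\cdot\Delta - n\max_j\Delta_j \le 0$, since then the whole expression is a non-positive number times the non-decreasing $s_t$, giving $\log\tilde\pi_{\theta^{(t)}}(y^w)\le\log\tilde\pi_{\theta^{(t-1)}}(y^w)\le\cdots\le\log\tilde\pi_{\theta^{(0)}}(y^w)$ — and note at $t=0$ the bound is exactly $0$, consistent. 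To see $c(y^w)\cdot\Delta \le n\max_j\Delta_j$: write $c(y^w)\cdot\Delta = \sum_j c(y^w)_j\Delta_j \le (\max_j\Delta_j)\sum_j c(y^w)_j = n\max_j\Delta_j$, using $c(y^w)_j\ge 0$ and $\|c(y^w)\|_1 = n$. The inequality $\pi_{\theta^{(t)}}(y^w)\le\tilde\pi_{\theta^{(t)}}(y^w)$ is immediate from $Z(\theta)\ge\max_j\theta_j$ (log-sum-exp dominates the max).

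For the strict-inequality claim when $\|\Delta\|_0 = \|c(y^w)-c(y^\ell)\|_0 > 1$: I would argue that in this case $c(y^w)$ cannot be concentrated entirely on a single coordinate achieving $\max_j\Delta_j$ while still summing to $n$ — more carefully, if $\|\Delta\|_0>1$ then $\Delta$ has at least one coordinate strictly below $\max_j\Delta_j$, and since $\sum_j\Delta_j = \|c(y^w)\|_1 - \|c(y^\ell)\|_1 = 0$, not all nonzero coordinates can equal the max, so the only way $c(y^w)\cdot\Delta = n\max_j\Delta_j$ is for $c(y^w)$ to place all its mass on coordinates attaining the max; I need to rule this out (or show it forces $\|\Delta\|_0\le 1$), perhaps by noting that $y^w$ must then be a constant string and combining with $\sum_j\Delta_j=0$ to derive a contradiction with $\|\Delta\|_0>1$. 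This case analysis — pinning down exactly when equality holds in $c(y^w)\cdot\Delta \le n\max_j\Delta_j$ and connecting it cleanly to the support size of $\Delta$ rather than of $c(y^w)$ — is the main obstacle; the rest is the routine gradient computation and the observation that the parameter trajectory is a ray.
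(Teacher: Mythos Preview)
Your proposal is correct and matches the paper's proof essentially step for step: compute the gradient as a positive multiple of $\Delta$, use $\theta^{(0)}=0$ to write $\theta^{(t)}=s_t\Delta$ (the paper phrases this as $j\in\argmax\Delta \Rightarrow j\in\argmax\theta^{(t)}$ so that the max splits across the update), then bound $c(y^w)\cdot\Delta \le n\max_j\Delta_j$ via $\|c(y^w)\|_1=n$ and invoke $Z(\theta)\ge\max_j\theta_j$. The paper's proof does not work out the strict-inequality clause either, so the obstacle you flag is not a gap relative to the paper.
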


\begin{proof}
Let $\Delta = [c(y^w) - c(y^\ell)]$ and $\rho = \piref(y^{w})/\piref(y^{\ell})$. The theorem assumes $|y^w| = |y^\ell|$.
Then $\ldpo = - \log \sigma \left( \beta (\Delta \cdot \theta) + \beta \log \rho\right).$ The derivative with respect to $\theta$ is,
\begin{align}
\frac{\partial\mathcal{L}_{\beta}(\theta)}
{\partial \theta} = & - (1 - \sigma(\beta(\Delta \cdot \theta) + \beta \log \rho)) \beta \Delta = - p(y^{\ell} \succ y^w; \theta) \beta \Delta \prec 0.
\label{eq:pdpo-unigram-gradient}
\end{align}
Let $\delta_t = \beta p(y^{\ell} \succ y^w; \theta^{(t)}).$ Then,
\begin{align}
    \tilde{\pi}_{\theta^{(t)}} = &  \theta^{(t)} \cdot c(y^w) - n \max_j \theta^{(t)}_j\\
    = & (\theta^{(t-1)} + \delta_t \Delta) \cdot c(y^w) - 
    n \max_j (\theta^{(t-1)}_j + \delta_t \Delta_j)\\
    = & \theta^{(t-1)} \cdot c(y^w) - n \max_j \theta^{(t-1)}_j
    + \delta_t \Delta \cdot c(y^w) - n \delta_t \max_j \Delta_j\\
    = & \tilde{\pi}_{\theta^{(t-1)}} + \delta_t \left(\Delta \cdot c(y^w) - n \max_j \Delta_j\right)\\
    = & \tilde{\pi}_{\theta^{(t-1)}} + \delta_t \sum_{j}^{\mathcal{V}} c_j(y^w) (\Delta_j - \max_{j'} \Delta_{j'})
    \leq \tilde{\pi}_{\theta^{(t-1)}}.
\end{align}
We obtain $\max_j \left(\theta_j^{(t-1)} + \delta_t \Delta_j\right) = \max_j \theta_j^{(t-1)} + \max_j \delta_t \Delta_j$ from the fact that $\theta^{(0)} = 0$ and therefore $j\in \text{arg}\max \Delta$ implies $j \in \text{arg}\max \theta^{(t')}$ for all $t'>0$.
The second-to-last step uses ${n = \sum_j^{\mathcal{V}} c_j(y^w)}$ and the final step uses $\Delta_j \leq \max_j' \Delta_{j'}$.
Finally, we have 
$\pi_{\theta^{(t)}}(y) \leq \tilde{\pi}_{\theta^{(t)}}(y^w)$ because $Z(\theta) = \log \sum_j \exp \theta_j \geq \log \max_j \exp \theta_j = \max_j \theta_j.$
\end{proof}

\subsection{Proof of Proposition \ref{thm:dpo-degeneracy}}

\begin{proposition*}[Proposition~\ref{thm:dpo-degeneracy} restated]
Let $y^w$ and $y^\ell$ be  preferred versus dispreferred outputs of length $n$. Let $\Delta = c(y^w) - c(y^{\ell})$ be the difference in unigram counts. Let $\hat{y} = [i, i, \ldots, i]$, for $i \in \text{arg}\max \Delta,$ with $\|c(\hat{y})\|_1 = n$. Then $\pi_{\theta^{(t)}}(y^w) - \pi_{\theta^{(t)}}(\hat{y}) = \tau(t) k$ for some $k\leq 0$ and some non-decreasing $\tau: \mathbb{Z}_+ \to \mathbb{R}_+$.
\end{proposition*}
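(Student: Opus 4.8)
The plan is to work entirely in the bag-of-words parametrization and track how the two probabilities $\pi_{\theta^{(t)}}(y^w)$ and $\pi_{\theta^{(t)}}(\hat y)$ evolve along the gradient flow. First I would recall from the proof of Proposition~\ref{thm:dpo-likelihood-bound} that the gradient update is $\theta^{(t)} = \theta^{(t-1)} + \delta_t \Delta$ with $\delta_t = \beta\, p(y^\ell \succ y^w; \theta^{(t-1)}) > 0$, so that $\theta^{(t)}$ is always a nonnegative multiple of $\Delta$ (since $\theta^{(0)}=0$), say $\theta^{(t)} = \tau(t)\,\Delta$ where $\tau(t) = \sum_{s \le t} \delta_s$ is nonnegative and non-decreasing. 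This is the key structural observation: the whole trajectory lives on the ray $\{c\Delta : c \ge 0\}$.

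Next I would compute both probabilities as functions of $\tau := \tau(t)$. We have $\log \pi_{\theta^{(t)}}(y^w) = \tau\,(\Delta \cdot c(y^w)) - nZ(\tau\Delta)$ and $\log \pi_{\theta^{(t)}}(\hat y) = \tau\,(\Delta \cdot c(\hat y)) - nZ(\tau\Delta)$; since $c(\hat y) = n e_i$ for $i \in \arg\max\Delta$, we get $\Delta \cdot c(\hat y) = n \max_j \Delta_j$. The shared partition term $nZ(\tau\Delta)$ cancels in the \emph{log}-difference, giving $\log \pi_{\theta^{(t)}}(\hat y) - \log \pi_{\theta^{(t)}}(y^w) = \tau\big(n\max_j \Delta_j - \Delta\cdot c(y^w)\big)$. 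As in the previous proof, $\Delta \cdot c(y^w) = \sum_j c_j(y^w)\Delta_j \le (\sum_j c_j(y^w))\max_j \Delta_j = n \max_j \Delta_j$, so the coefficient of $\tau$ is $\ge 0$; hence $\pi_{\theta^{(t)}}(\hat y) \ge \pi_{\theta^{(t)}}(y^w)$, i.e. the difference $\pi_{\theta^{(t)}}(y^w) - \pi_{\theta^{(t)}}(\hat y) \le 0$. This already pins down the sign; the remaining task is to extract the claimed multiplicative form $\tau(t) k$.

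The slightly delicate part is that the statement writes the \emph{difference of probabilities} (not log-probabilities) as exactly $\tau(t)\,k$ with $k$ a constant. I suspect the intended reading is that $\tau$ in the statement need not be the same function as $\sum_{s\le t}\delta_s$ but merely \emph{some} non-decreasing nonnegative function of $t$; under that reading one simply sets, for each $t$, $\tau(t) := \pi_{\theta^{(t)}}(\hat y) - \pi_{\theta^{(t)}}(y^w) \ge 0$ divided by a fixed negative constant $k$ (e.g. $k=-1$), and the only real content is (i) nonnegativity of $\pi_{\theta^{(t)}}(\hat y)-\pi_{\theta^{(t)}}(y^w)$, established above, and (ii) monotonicity in $t$. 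For (ii) I would argue that $g(\tau) := \pi_{\tau\Delta}(\hat y) - \pi_{\tau\Delta}(y^w)$ is non-decreasing in the scalar $\tau \ge 0$: writing $p_j = e^{\tau\Delta_j}/\sum_k e^{\tau\Delta_k}$ for the induced token distribution, $\pi_{\tau\Delta}(\hat y) = (p_{i_{\max}})^n$ with $p_{i_{\max}} \to 1$ monotonically, while $\pi_{\tau\Delta}(y^w) = \prod_j p_j^{c_j(y^w)}$ — one shows by differentiating $\log g$-type expressions, or by a direct rearrangement/majorization argument, that the gap widens as $\tau$ increases; combined with $\tau(t)$ non-decreasing in $t$ this gives monotonicity in $t$. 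The main obstacle is thus making precise exactly which quantity is asserted to be proportional to $\tau(t)$ and then handling the monotonicity of $g$; once the ``$\tau$ is just some bookkeeping function'' reading is adopted, everything reduces to the sign computation plus a one-dimensional monotonicity lemma.
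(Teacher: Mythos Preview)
Your core computation is exactly the paper's proof: you show $\theta^{(t)} = \tau(t)\Delta$ with $\tau(t)=\sum_{s\le t}\eta\beta\,p(y^\ell\succ y^w;\theta^{(s-1)})$ non-decreasing, and then that the \emph{log}-probability gap is
\[
\log\pi_{\theta^{(t)}}(y^w)-\log\pi_{\theta^{(t)}}(\hat y)
=\tau(t)\bigl(c(y^w)\cdot\Delta - n\max_j\Delta_j\bigr)=\tau(t)\,k,
\]
with $k\le 0$ by $c(y^w)\cdot\Delta\le\|c(y^w)\|_1\|\Delta\|_\infty=n\max_j\Delta_j$. This is precisely what the paper establishes: its proof computes $\ell_{\theta^{(t)}}(y^w)-\ell_{\theta^{(t)}}(\hat y)$ where $\ell$ denotes log-likelihood, and stops there with these explicit $\tau(t)$ and $k$.

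The discrepancy you flagged between the probability difference and the log-probability difference is real, but it is a typo in the statement rather than an extra claim to be proved; the paper's own argument is for $\log\pi$, not $\pi$. So you should stop after the displayed identity above. Your final paragraph---redefining $\tau(t)$ to be the probability gap itself (divided by $k=-1$) and then arguing monotonicity of $g(\tau)=\pi_{\tau\Delta}(\hat y)-\pi_{\tau\Delta}(y^w)$---is unnecessary for what the paper intends, and the monotonicity of $g$ is only sketched (``differentiating $\log g$-type expressions, or a rearrangement/majorization argument'') rather than proved; in particular it is not immediate that the difference of two products of softmax coordinates is monotone in the temperature parameter.
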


\begin{proof}
Applying gradient descent with learning rate $\eta$ to the gradient from \Cref{eq:pdpo-unigram-gradient}, at each step $t$ the parameters are,
\begin{align}
    \theta^{(t)} = & \theta^{(t-1)} + \eta \beta p(y^{\ell} \succ y^w; \theta^{(t-1)}) \Delta = \left(\sum^t_{t'=1} \eta \beta p(y^{\ell} \succ y^w; \theta^{(t')})\right) \Delta = \tau(t) \Delta.
\end{align}
Plugging these parameters into the likelihoods,
\begin{align}
\ell_{\theta^{(t)}}(c(y^w)) - \ell_{\theta^{(t)}}(\hat{y})
& = c(y^w) \cdot \theta^{(t)} - n Z(\theta^{(t)}) - c(\hat{y}) \cdot \theta^{(t)} +  n Z(\theta^{(t)})\\
& = \left(c(y^w) - c(\hat{y})\right) \cdot \theta^{(t)} = \left(c(y^w) - c(\hat{y})\right) \cdot (\tau(t) \Delta) \\
& = \tau(t)(c(y^w) \cdot \Delta - n \max \Delta) = \tau(t)k,
\end{align}
with $k \leq 0$ by $c(y^w) \cdot \Delta \leq ||c(y^w)||_1 \times ||\Delta||_{\infty} = n \max \Delta.$
\end{proof}

\subsection{Proof of Proposition~\ref{prop:implicit}}

\begin{proposition*}[Proposition~\ref{prop:implicit} restated]
Assume the same conditions as Theorem~\ref{thm:mse}. Then for any $0 < \gamma < \infty$, there exists a $\lambda \geq 0$ such that $\pi_{\theta^*}(y \mid x) \in \argmin_{\pi_\theta} \mathcal{L}_{\mathrm{pdistill}}(\mathcal{S}, \pi_\theta; \rho)$, where $\pi_{\theta^*}$ is a minimizer over all possible policies  of the objective \eqref{eq:constrained_problem}, for the effective reward model set

\begin{equation*}
    \mathcal{S}_\gamma = \bigcup_{r_\mathrm{tgt}^i \in \mathcal{S}}
\Big \{ \tilde{r} \colon \mathbb{E}_{\rho(x, y_1, y_2)}\left[(r_\mathrm{tgt}^i(x, y_1) - r_\mathrm{tgt}^i(x, y_2) - \tilde{r}(x, y_1) + \tilde{r}(x, y_2))^2 \right]\leq \lambda \Big\}.
 \end{equation*}
\end{proposition*}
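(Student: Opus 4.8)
The plan is to show that the Lagrangian-relaxed objective $\mathcal{L}_{\mathrm{pdistill}}$ is, for an appropriate choice of $\lambda$, \emph{exactly} the constrained problem~\eqref{eq:constrained_problem} but with the enlarged reward-model set $\mathcal{S}_\gamma$ in place of $\mathcal{S}$. The key observation is that the term $\min_{r^i_\mathrm{tgt} \in \mathcal{S}} \mathcal{L}_{\mathrm{distill}}(r^i_\mathrm{tgt}, \pi_\theta; \rho)$, as a function of the implicit reward $r_{\pi_\theta}(x,y) = \beta \log \pi_\theta(y\mid x)/\piref(y\mid x)$, measures exactly the minimal squared $L_2$ distance (in the pairwise-difference sense, under $\rho$) between $r_{\pi_\theta}$ and the anchors in $\mathcal{S}$. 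So the sublevel set $\{\pi_\theta : \min_i \mathcal{L}_{\mathrm{distill}}(r^i_\mathrm{tgt}, \pi_\theta;\rho) \le \lambda\}$ is precisely $\mathcal{P}_\beta(\mathcal{S}_\gamma)$ — the set of policies whose implicit rewards lie within the $\lambda$-ball of some anchor, which is the definition of $\mathcal{S}_\gamma$ in~\eqref{eq:s-gamma}. Under the support assumptions inherited from Theorem~\ref{thm:mse}, the map $\pi_\theta \mapsto r_{\pi_\theta}$ (modulo an $x$-dependent additive constant, which pairwise differences annihilate) is a bijection onto all finite reward functions, so this identification is clean.

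First I would rewrite $\mathcal{L}_{\mathrm{pdistill}}(\mathcal{S},\pi_\theta;\rho)$ as
$f(\pi_\theta) + \gamma\, g(\pi_\theta)$, where $f(\pi_\theta) = \min_i \mathcal{L}_{\mathrm{distill}}(r^i_\mathrm{tgt},\pi_\theta;\rho)$ and $g(\pi_\theta) = \mathbb{E}_{\mu(x)}\mathbb{D}_\mathrm{KL}(\piref(\cdot\mid x)\Vert \pi_\theta(\cdot\mid x))$ is the forward-KL term. Then I would invoke a standard Lagrangian/penalty-to-constraint correspondence: minimizing $f + \gamma g$ over all policies yields a minimizer $\pi_{\theta^*}$ which, setting $\lambda := f(\pi_{\theta^*})$, is also a minimizer of $g$ subject to the constraint $f(\pi_\theta) \le \lambda$. (This direction is the easy one and holds for any $\gamma>0$: any feasible $\pi$ with $g(\pi) < g(\pi_{\theta^*})$ and $f(\pi)\le\lambda$ would give $f(\pi)+\gamma g(\pi) < \lambda + \gamma g(\pi_{\theta^*}) = f(\pi_{\theta^*}) + \gamma g(\pi_{\theta^*})$, a contradiction.) Next I would identify the feasible region $\{f(\pi_\theta)\le\lambda\}$ with $\mathcal{P}_\beta(\mathcal{S}_\gamma)$: a policy $\pi_\theta$ satisfies $f(\pi_\theta)\le\lambda$ iff there is some anchor $r^i_\mathrm{tgt}\in\mathcal{S}$ with $\mathcal{L}_{\mathrm{distill}}(r^i_\mathrm{tgt},\pi_\theta;\rho)\le\lambda$, and unpacking $\mathcal{L}_{\mathrm{distill}}$ via~\eqref{eq:distillation} with $\beta\log\frac{\pi_\theta(y_1\mid x)\piref(y_2\mid x)}{\pi_\theta(y_2\mid x)\piref(y_1\mid x)} = r_{\pi_\theta}(x,y_1) - r_{\pi_\theta}(x,y_2)$ shows this is exactly the condition defining membership of $r_{\pi_\theta}$ in $\mathcal{S}_\gamma$ — i.e. $\pi_\theta\in\mathcal{P}_\beta(\mathcal{S}_\gamma)$. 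Finally I would apply Theorem~\ref{thm:pessimistic_distillation} (or rather its proof, which makes no use of $|\mathcal{S}|<\infty$ beyond the minimax swap) with $\mathcal{S}_\gamma$ in the role of $\mathcal{S}$ to conclude that $\pi_{\theta^*}$ maximizes the pessimistic alignment objective~\eqref{eq:pessimistic_rl} over $\mathcal{S}_\gamma$.

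The main obstacle I anticipate is the minimax-swap step when $\mathcal{S}_\gamma$ is an infinite (indeed uncountable, and generally non-convex because of the union over anchors) set of reward models: the proof of Theorem~\ref{thm:pessimistic_distillation} relied on linearity in $r_\mathrm{tgt}$ and convexity in $\pi$ to exchange $\min$ and $\max$, but Sion's minimax theorem needs at least one side to be convex and one compact. I would handle this either by noting that the value of the inner $\max_\pi$ for a fixed $r$ is the forward KL $\mathbb{D}_\mathrm{KL}(\piref\Vert\pi^*_r)$, which is lower semicontinuous in $r$, so the outer $\min$ over the (closed, and boundedly-parametrized hence effectively compact) set $\mathcal{S}_\gamma$ is attained, and then verifying the swap directly rather than through a black-box theorem — or, more simply, by taking $\mathcal{S}$ finite throughout (the statement only claims \emph{existence} of $\lambda$, and the finite-$\mathcal{S}$ case already captures the ``$\mathcal{S}=\{r_\mathrm{tgt}\}$'' application highlighted after the proposition) so that $\mathcal{S}_\gamma$ is a finite union of balls and the earlier minimax argument applies with only cosmetic changes. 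A secondary, more routine issue is confirming that the penalty-to-constraint correspondence does not require $f$ or $g$ to be convex in $\pi_\theta$ — it does not, since the argument above is purely a comparison of objective values at the optimum — and checking that the minimizer $\pi_{\theta^*}$ of $f+\gamma g$ actually exists, which follows from the support assumptions (the implicit reward ranges over all finite functions and the KL term is coercive, forcing the optimum into a bounded region).
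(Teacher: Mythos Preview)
Your proposal is correct and follows essentially the same route as the paper: a standard penalty-to-constraint Lagrangian argument showing that any minimizer of $f+\gamma g$ also minimizes $g$ subject to $f\le\lambda$ with $\lambda:=f(\pi_{\theta^*})$, followed by identifying the sublevel set $\{f\le\lambda\}$ with $\mathcal{P}_\beta(\mathcal{S}_\gamma)$ via the pairwise-difference form of $\mathcal{L}_{\mathrm{distill}}$. The paper's proof is literally this argument (with the roles of $f$ and $g$ swapped in notation, but the same conclusion).

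One remark: your anticipated obstacle about the minimax swap over the infinite set $\mathcal{S}_\gamma$ is unnecessary for the proposition as stated. The claim is only that $\pi_{\theta^*}$ minimizes the \emph{constrained} objective~\eqref{eq:constrained_problem} with $\mathcal{S}_\gamma$ in place of $\mathcal{S}$ --- it does not assert equivalence with the pessimistic max--min objective~\eqref{eq:pessimistic_rl} over $\mathcal{S}_\gamma$. So you can stop after the Lagrangian step plus the constraint-set identification; invoking Theorem~\ref{thm:pessimistic_distillation} (and hence worrying about Sion over non-convex $\mathcal{S}_\gamma$) is not required. The paper's proof likewise stops there.
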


\begin{proof}
The proof is a standard Lagrangian duality argument, which we reproduce here for completeness. For two functions $f(z)$ and $g(z)$, let us define 

\begin{align}
    z^* = \argmin_z f(z) + \alpha g(z).
    \label{eq:regularized}
\end{align}

Let us also consider the constrained problem

\begin{align}
    z' = \argmin_z f(z) \quad \mbox{s.t. } g(z) \leq g(z^*).
    \label{eq:constrained}
\end{align}

Suppose by contradiction that $z^*$ is not a minimizer of \eqref{eq:constrained}. Since $z^*$ is feasible for the constraint by construction, we get that $f(z') < f(z^*)$. Consequently, we further have 
\[
    f(z') + \alpha g(z') < f(z^*) + \alpha g(z^*),
\]
where the inequality follows from the feasibility of $z'$ in \eqref{eq:constrained}. This contradicts the optimality of $z^*$ in \eqref{eq:regularized}, meaning that $z^*$ must be a minimizer of \eqref{eq:constrained}. Applying this general result with $f = \beta \mathbb{E}_{\mu(x)}\mathbb{D}_\mathrm{KL}(\pi_\mathrm{ref}(y \mid x) \Vert \pi_\theta(y \mid x))$, $g = \min_{r^i_\mathrm{tgt} \in \mathcal{S}}  \mathcal{L}_{\mathrm{distill}}(r^i_\mathrm{tgt}, \pi_\theta; \rho)$, and $z = \pi_\theta$ completes the proof, since we recognize the set $\cS_\gamma$ in \eqref{eq:s-gamma} to be equivalent to $\bigcup_{r_\mathrm{tgt}^i \in \mathcal{S}} \mathcal{L}_{\mathrm{distill}}(r^i_\mathrm{tgt}, \pi_\theta; \rho) \leq \lambda$.
\end{proof}

\subsection{Proof of \Cref{prop:optima}}
\begin{proposition*}[\Cref{prop:optima} restated]
Let $\hat{\mathcal{L}}_{\mathrm{pdpo}}$ represent a finite-sample approximation to $\mathcal{L}_{\mathrm{pdpo}}$ with the empirical forward KL term $\hat{\Omega}(\Theta).$ For a fixed $\hat{\pi}_{\theta}(y_i^w)$ and $\alpha > 1$, the $\argmin_{\pi_{\theta}(y^{\ell})} \hat{\mathcal{L}}_{\mathrm{pdpo}}$ is 
$\min\left(1-\hat{\pi}_\theta(y_i^w), \hat{\pi}_{\theta}(y_i^{\ell})\right)$, with 
$\log \hat{\pi}_{\theta}(y_i^{\ell}) = 
- \frac{1}{\beta} \log \left(\alpha - 1 \right)
 + \log \hat{\pi}_{\theta}(y_i^w) 
 + \log\frac{ \piref(y_i^{\ell})}{\piref(y_i^w)}.$
\end{proposition*}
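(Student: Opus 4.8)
The plan is to treat the finite-sample p-DPO objective restricted to a single preference pair $(y_i^w, y_i^\ell)$ as a one-dimensional optimization problem in the free quantity $\pi_\theta(y_i^\ell)$, holding $\pi_\theta(y_i^w)$ fixed. Writing $\psi_w = \pi_\theta(y_i^w)/\piref(y_i^w)$ and $\psi_\ell = \pi_\theta(y_i^\ell)/\piref(y_i^\ell)$, the relevant terms of $\hat{\mathcal{L}}_{\mathrm{pdpo}}$ that depend on $\pi_\theta(y_i^\ell)$ are the DPO term $-\log\sigma(\beta\log(\psi_w/\psi_\ell))$ and the forward-KL term $\gamma\,(-\log\pi_\theta(y_i^\ell))$ coming from $\hat\Omega$, with $\gamma = \beta/\alpha$; everything else is constant in $\pi_\theta(y_i^\ell)$. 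First I would write out this restricted objective explicitly as a function $h(\psi_\ell)$ (equivalently of $\pi_\theta(y_i^\ell)$), substituting $\sigma(\beta\log(\psi_w/\psi_\ell)) = \psi_w^\beta/(\psi_w^\beta + \psi_\ell^\beta)$ or keeping the sigmoid form, whichever makes differentiation cleaner.

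Next I would compute $\partial h/\partial \pi_\theta(y_i^\ell)$. The derivative of the DPO term with respect to $\pi_\theta(y_i^\ell)$ is $\beta\,\sigma(\beta\log(\psi_\ell/\psi_w))\cdot \tfrac{1}{\pi_\theta(y_i^\ell)} = \beta\,(1-p(y_i^\ell\succ y_i^w))^{-1}$-type expression — more precisely it is $\beta\, p(y_i^w \succ y_i^\ell)^{?}$; I would just carry out the chain rule carefully to get a term proportional to $1/\pi_\theta(y_i^\ell)$ times $\sigma$ of the log-ratio, and the derivative of the regularizer is $-\gamma/\pi_\theta(y_i^\ell)$. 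Setting the sum to zero, the common $1/\pi_\theta(y_i^\ell)$ factor cancels and one is left with an algebraic equation relating $\sigma(\beta\log(\psi_\ell/\psi_w))$ to $\gamma/\beta = 1/\alpha$. Solving that sigmoid equation for the log-ratio gives $\beta\log(\psi_w/\psi_\ell) = \log(\alpha - 1)$, i.e. $\log\psi_\ell = \log\psi_w - \tfrac1\beta\log(\alpha-1)$, which upon unfolding $\psi_w,\psi_\ell$ back into $\pi_\theta$ and $\piref$ is exactly the claimed formula for $\log\hat\pi_\theta(y_i^\ell)$. The condition $\alpha > 1$ is precisely what is needed for $\log(\alpha-1)$ to be well-defined (and for the interior stationary point to exist rather than the objective being monotone).

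Then I would verify this stationary point is a minimum (second-derivative check, or noting convexity of the restricted objective in an appropriate variable such as $\log\pi_\theta(y_i^\ell)$, since the DPO term is convex in the log-ratio and $-\log\pi_\theta(y_i^\ell)$ is convex). Finally, the $\min(1-\hat\pi_\theta(y_i^w), \hat\pi_\theta(y_i^\ell))$ in the statement accounts for the feasibility constraint that probabilities must sum to at most one: if the unconstrained optimum $\hat\pi_\theta(y_i^\ell)$ exceeds the available mass $1-\hat\pi_\theta(y_i^w)$, the constrained optimum sits at the boundary $1-\hat\pi_\theta(y_i^w)$ (using monotonicity of the objective between the boundary and the interior optimum). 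I would state this projection step briefly.

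The main obstacle I anticipate is purely bookkeeping: keeping the $\piref$ factors, the roles of $\psi_w$ versus $\psi_\ell$, and the direction of the sigmoid straight through the chain rule, so that the $\log(\alpha-1)$ (rather than $\log\frac{1}{\alpha-1}$ or $\log\frac{\alpha}{\alpha-1}$) comes out with the correct sign and the formula matches the stated $-\tfrac1\beta\log(\alpha-1) + \log\hat\pi_\theta(y_i^w) + \log\frac{\piref(y_i^\ell)}{\piref(y_i^w)}$. There is no deep difficulty — the cancellation of the $1/\pi_\theta(y_i^\ell)$ factors is the one structural step that makes the answer clean, and I would highlight it explicitly.
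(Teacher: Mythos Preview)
Your proposal is correct and follows essentially the same approach as the paper: differentiate the single-pair objective with respect to $\psi_\ell$ (or equivalently $\pi_\theta(y_i^\ell)$), factor out the common $\psi_\ell^{-1}$, set the remaining term $\frac{\psi_\ell^\beta}{\psi_w^\beta+\psi_\ell^\beta}-\alpha^{-1}$ to zero, solve for $\log\psi_\ell$, and appeal to convexity of each summand for the minimum. Your explicit treatment of the feasibility projection $\min(1-\hat\pi_\theta(y_i^w),\hat\pi_\theta(y_i^\ell))$ is actually slightly more thorough than the paper's proof, which leaves that step implicit.
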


\begin{proof}
We differentiate $\lpdpo$ with respect to $\psi_\ell = \pi_{\theta}(y^\ell) / \piref(y^\ell)$ with $i$ implicit, obtaining,
\begin{align}
\frac{\partial \lpdpo}{\partial \psi_{\ell}}
= & \beta \frac{\psi_\ell^{\beta}}{\psi_w^\beta + \psi_\ell^\beta}\psi_\ell^{-1} - \frac{\beta}{\alpha} \psi_\ell^{-1} 
= \beta \psi_\ell^{-1}\left( \frac{\psi_\ell^{\beta}}{\psi_w^\beta + \psi_\ell^\beta} - \alpha^{-1}\right)
\end{align}
which is zero when,
\begin{align}
\alpha \psi_\ell^{\beta} = & \psi^{\beta}_w + \psi^\beta_\ell\\
\psi_\ell = & \left(\frac{1}{\alpha - 1}\right)^{1/\beta} \psi_w\\
\label{eq:pdpo-objective-in-psi}
\log \psi_\ell = & -\frac{1}{\beta} \log (\alpha - 1) + \log \psi_w\\
\log \pi_{\hat{\theta}}(y^{\ell}) = & \log \piref(y^{\ell}) - \frac{1}{\beta} \log \left(\alpha - 1 \right) + \log \pitheta(y^w) - \log \piref(y^w).
\label{eq:pitheta-solution}
\end{align}
By the second-order condition, the critical point is a minimum. 
The objective $\lpdpo$ is the sum of two components: the negative log sigmoid term for $\mathcal{L}_i$ and the negative log probability for $\hat{\Omega}$. 
Because each component is a convex function of $\psi_i$, so is $\lpdpo$. As a result, the local minimum $\log \hat{\pi}_{\theta}(y^{\ell})$ is also a global minimum.
\end{proof}

\subsection{Proof of \Cref{prop:d-dpo-optimum}}

\begin{proposition*}[\Cref{prop:d-dpo-optimum} restated]
For any fixed $\hat{\pi}_{\theta}(y_i^w)$ and $\beta>0,$ the $\argmin$ of the distilled DPO objective in \eqref{eq:distillation} is
$\min(1 - \hat{\pi}_{\theta}(y_i^w), \hat{\pi}_{\theta}(y_i^\ell))$, with $\log \hat{\pi}_{\theta} (y_i^{\ell}) =
\frac{1}{\beta}(r_t(x, y_i^{\ell}) - r_t(x, y_i^w))
+ \log \hat{\pi}_{\theta}(y_i^w) + \log \frac{\piref(y_i^\ell)}{\piref(y_i^w)}$.
\end{proposition*}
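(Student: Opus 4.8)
The plan is to mirror the structure of the proof of Proposition~\ref{prop:optima}, treating the distillation objective~\eqref{eq:distillation} in the same single-pair, reparametrized setting where we optimize directly over the likelihood ratios $\psi_w = \pi_\theta(y_i^w)/\piref(y_i^w)$ and $\psi_\ell = \pi_\theta(y_i^\ell)/\piref(y_i^\ell)$, with $i$ implicit. First I would write out $\mathcal{L}_{\mathrm{distill}}$ restricted to the single pair $(x, y_i^w, y_i^\ell)$: with $r_t$ the target reward model, the loss is
\begin{equation*}
\left(r_t(x, y_i^w) - r_t(x, y_i^\ell) - \beta(\log\psi_w - \log\psi_\ell)\right)^2,
\end{equation*}
which is a nonnegative squared term that vanishes exactly when $\beta(\log\psi_w - \log\psi_\ell) = r_t(x, y_i^w) - r_t(x, y_i^\ell)$.

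Next, holding $\hat{\pi}_\theta(y_i^w)$ (equivalently $\psi_w$) fixed, I would solve for the $\psi_\ell$ that drives this squared term to zero. Rearranging gives $\log\psi_\ell = \log\psi_w - \frac{1}{\beta}(r_t(x, y_i^w) - r_t(x, y_i^\ell)) = \log\psi_w + \frac{1}{\beta}(r_t(x, y_i^\ell) - r_t(x, y_i^w))$, and substituting back $\psi_\ell = \pi_\theta(y_i^\ell)/\piref(y_i^\ell)$, $\psi_w = \pi_\theta(y_i^w)/\piref(y_i^w)$ yields exactly
\begin{equation*}
\log\hat{\pi}_\theta(y_i^\ell) = \frac{1}{\beta}\left(r_t(x, y_i^\ell) - r_t(x, y_i^w)\right) + \log\hat{\pi}_\theta(y_i^w) + \log\frac{\piref(y_i^\ell)}{\piref(y_i^w)},
\end{equation*}
which is the claimed expression. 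Since the objective is a square, this critical point is a global minimizer in $\psi_\ell$ (the map $\psi_\ell \mapsto (\,\text{const} - \beta\log\psi_\ell\,)^2$ is convex on $\psi_\ell > 0$ because it is a composition of a convex decreasing function of $\log\psi_\ell$... more simply, it is minimized at the unique point where the inner term is zero), so no second-order bookkeeping beyond noting nonnegativity of the square is needed.

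Finally I would address the $\min(1 - \hat{\pi}_\theta(y_i^w), \hat{\pi}_\theta(y_i^\ell))$ clause, exactly as in Proposition~\ref{prop:optima}: the unconstrained optimizer computed above may violate the simplex constraint $\pi_\theta(y_i^w) + \pi_\theta(y_i^\ell) \le 1$, in which case the feasible minimizer is obtained by clipping $\pi_\theta(y_i^\ell)$ to the boundary value $1 - \hat{\pi}_\theta(y_i^w)$; since the one-dimensional objective in $\psi_\ell$ is quasiconvex (decreasing then increasing around its unique minimum), the constrained optimum over $[0, 1 - \hat{\pi}_\theta(y_i^w)]$ is the projection of the unconstrained optimum onto this interval, giving the stated $\min$. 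The main obstacle — such as it is — is simply being careful that the squared-loss objective, written as a function of $\psi_\ell > 0$, is genuinely minimized (not just stationary) at the zero of the inner expression and behaves monotonically on either side so that clipping is valid; everything else is routine algebra in the reparametrization already set up for Proposition~\ref{prop:optima}.
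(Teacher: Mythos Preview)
Your proposal is correct and follows essentially the same approach as the paper, which simply notes that the result follows directly from differentiating \eqref{eq:distillation} with respect to $\pi_\theta(y_2)$. Your write-up is more detailed than the paper's one-line proof, but the underlying idea---finding the unique zero of the squared term in the $\psi$-reparametrization and then clipping to the simplex constraint---is the same.
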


\begin{proof}
This follows directly from differentiating \eqref{eq:distillation} with respect to $\pi_{\theta}(y_2).$
\end{proof}

\subsection{Proof of Proposition~\ref{thm:ipo-mab-optimum}}

\begin{proposition*}[Proposition~\ref{thm:ipo-mab-optimum} restated]
Let $\mathcal{D} = \{(i, i+1) : i \in 1, 2, \ldots, n\}$ for $n > 2$. Let $\overline{\mathcal{D}}$ be the dataset arising from the transitive closure of $\mathcal{D}.$ Assume $\piref$ is indifferent to all $(y_i, y_j)$. 
Let $\psi^{(\mathcal{D})}_{\infty} = \max_i \psi^{(\mathcal{D})}_i - \min_i \psi^{(\mathcal{D})}_i$. Then $\psi^{(\mathcal{D})}_{\infty} = (n-1)\tau^{-1} > \psi^{(\mathcal{\overline{D}})}_{\infty} = 2 \frac{n-1}{n}\tau^{-1}.$
\end{proposition*}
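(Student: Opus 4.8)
The plan is to compute both IPO optima explicitly as solutions of unconstrained quadratic least-squares problems in the log-weight vector $\psi=(\psi_1,\dots,\psi_n)$ with $\psi_i=\log\pi_\theta(y_i\mid x)+Z_x$, and then simply read off the spread $\psi_\infty=\max_i\psi_i-\min_i\psi_i$ in each case. First I would record the reduction: since $\piref$ is indifferent to every pair, the IPO ``implicit reward'' gap between $y_i$ and $y_j$ is exactly $\psi_i-\psi_j$ (the offsets $\log\piref$ and $Z_x$ cancel in every difference), so $\psi$ is determined only up to an additive constant; I will fix the gauge $\sum_i\psi_i=0$, which changes neither the objective nor any gap. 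The IPO loss is then $\sum_{(i,j)\in\mathcal D}(\psi_j-\psi_i-\tau^{-1})^2$ where I write each pair as $(i,j)$ with $y_j\succ y_i$ and the target gap is the constant $\tau^{-1}$; I also use $\tau>0$ so the solutions come out strictly monotone.

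For the chain dataset $\mathcal D=\{(i,i+1)\}$ the argument is immediate: each edge appears in exactly one squared term, so the objective decouples over the consecutive gaps $g_i:=\psi_{i+1}-\psi_i$ and is minimized, at value $0$, precisely when every $g_i=\tau^{-1}$. Hence at the optimum $\psi_i=\psi_1+(i-1)\tau^{-1}$, which is strictly increasing, so $\psi^{(\mathcal D)}_\infty=\psi_n-\psi_1=(n-1)\tau^{-1}$.

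For the transitive closure $\overline{\mathcal D}$ the key (and really the whole point of the proposition) is that IPO assigns the \emph{same} target $\tau^{-1}$ to \emph{every} pair, including the non-consecutive ones, so the loss is $\sum_{1\le i<j\le n}(\psi_j-\psi_i-\tau^{-1})^2=\|B\psi-\tau^{-1}\mathbf 1\|_2^2$, where $B$ is the $\binom n2\times n$ signed incidence matrix of $K_n$ with a $+1$ in column $j$ and a $-1$ in column $i$ in the row indexed by $(i,j)$. I would then solve the normal equations $B^\top B\,\psi=\tau^{-1}B^\top\mathbf 1$ by identifying $B^\top B$ as the Laplacian of $K_n$, i.e.\ $B^\top B=nI-\mathbf 1\mathbf 1^\top$, and computing $(B^\top\mathbf 1)_k=(k-1)-(n-k)=2k-n-1$. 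Under the gauge $\mathbf 1^\top\psi=0$ this gives $n\psi_k=\tau^{-1}(2k-n-1)$, i.e.\ $\psi_k=\tfrac{\tau^{-1}}{n}(2k-n-1)$, which is again strictly increasing in $k$; the solution is unique up to the additive constant since $\mathrm{null}(B^\top B)=\mathrm{null}(B)=\mathrm{span}\{\mathbf 1\}$ ($K_n$ connected), and it is the global minimizer by convexity. Therefore $\psi^{(\overline{\mathcal D})}_\infty=\psi_n-\psi_1=\tfrac{\tau^{-1}}{n}\big((n-1)-(1-n)\big)=\tfrac{2(n-1)}{n}\tau^{-1}$. Finally, $n>2$ gives $\tfrac2n<1$, hence $\psi^{(\mathcal D)}_\infty=(n-1)\tau^{-1}>\tfrac{2(n-1)}{n}\tau^{-1}=\psi^{(\overline{\mathcal D})}_\infty$, as claimed.

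I do not expect a genuine obstacle; the only care needed is in the $\overline{\mathcal D}$ computation — verifying $B^\top B=nI-\mathbf 1\mathbf 1^\top$ and $(B^\top\mathbf 1)_k=2k-n-1$ and checking that the stated $\psi_k$ solves the rank-deficient normal equations under the chosen gauge. As a cross-check I would also note the purely combinatorial route: guess by symmetry that the optimum is equally spaced, $\psi_i=b\,i$, reduce to the scalar least-squares problem $\min_b\sum_{i<j}(b(j-i)-\tau^{-1})^2$, and use $\sum_{i<j}(j-i)=\binom{n+1}{3}$ and $\sum_{i<j}(j-i)^2=\tfrac{n^2(n^2-1)}{12}$ to get $b=\tfrac{2}{n}\tau^{-1}$ and hence $\psi_n-\psi_1=b(n-1)=\tfrac{2(n-1)}{n}\tau^{-1}$; convexity of the objective then certifies this as the global optimum. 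The one conventional point worth stating up front is the IPO target normalization (a constant $\tau^{-1}$ per preference pair, independent of how ``far apart'' the items are) together with the additive-gauge freedom in $\psi$.
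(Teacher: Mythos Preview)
Your proof is correct. The chain case is handled identically to the paper, and for the closure your Laplacian/normal-equations computation checks out: $B^\top B=nI-\mathbf 1\mathbf 1^\top$, $(B^\top\mathbf 1)_k=2k-n-1$, and under the gauge $\mathbf 1^\top\psi=0$ this yields $\psi_k=\tfrac{\tau^{-1}}{n}(2k-n-1)$ and hence $\psi_n-\psi_1=\tfrac{2(n-1)}{n}\tau^{-1}$.

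The route, however, differs from the paper's. The paper \emph{starts} from the equal-spacing ansatz (adjacent gap $\gamma$), writes the closure objective as $\sum_{i=1}^{n-1}(n-i)(i\gamma-\tau^{-1})^2$, and minimizes in the single scalar $\gamma$ to obtain $\gamma=\tfrac{2}{n}\tau^{-1}$; the ansatz itself is not justified in the paper's proof. Your main argument instead solves the full least-squares problem and \emph{derives} equal spacing as a consequence, which is strictly more rigorous; your ``cross-check'' paragraph is essentially the paper's argument, with the added (and needed) remark that convexity certifies the symmetric ansatz as the global optimum. Either route works, but yours closes the gap the paper leaves open.
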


\begin{proof}
For $\mathcal{D}$, the IPO objective can be minimized at zero, so that $\psi^{(\mathcal{D})}_\infty = (n-1) \tau^{-1}$.
For $\overline{\mathcal{D}}$, each adjacent pair of completions is separated by $\gamma$, and the objective is $\sum_{i=1}^{n-1}(n-i)(i\gamma - \tau^{-1})^2.$ The minimum is $\gamma = \frac{n(n+1)(n-1)/6}{n^2(n+1)(n-1)/12}\tau^{-1}
= \frac{2}{n}\tau^{-1}$, so that $\psi^{(\overline{\mathcal{D}})}_{\infty} = (n-1)\gamma = 2\frac{n-1}{n}\tau^{-1} < (n-1) \tau^{-1} = \psi^{(\mathcal{D})}_\infty$ for $n>2$.
\end{proof}
\section{Additional results}

\subsection{The effect of distillation and pessimism on likelihood collapse}

Figure~\ref{fig:likelihood_collapse} tests the effect of pessimism in both DPO (DPO vs p-DPO) and distilled DPO (d-DPO vs dp-DPO) on likelihood collapse in the preferred vs. dispreferred generations in the offline data. 
Note that a subtle point arises from the distinction between regularizing to the reference policy output distribution versus regularizing to the preference data distribution, which are only (asymptotically) identical if preferences are annotated on a sample from the reference policy (in our initial experiments we had found slightly better results overall when regularizing to the reference policy output distribution). Figure~\ref{fig:likelihood_collapse} reports results with respect to both distributions, showing that pessimism (a) mitigates the decrease in probability of preferred and dispreferred preference annotations, despite this data not being used in the regularizers (left-most subplots), and (b) mitigates the increase in KL divergence with respect to the reference distribution (third subplot from left), as expected due to the additional regularization term. As argued in \S\ref{sec:pitfalls} (see also~\citet{azar2024general}), the $\beta$ hyperparameter of DPO does not effectively regularize this KL distribution because the implicit DPO reward model assigns infinite-magnitude rewards.

\begin{figure}[!t]
\begin{center}
\includegraphics[width=1.05\textwidth]{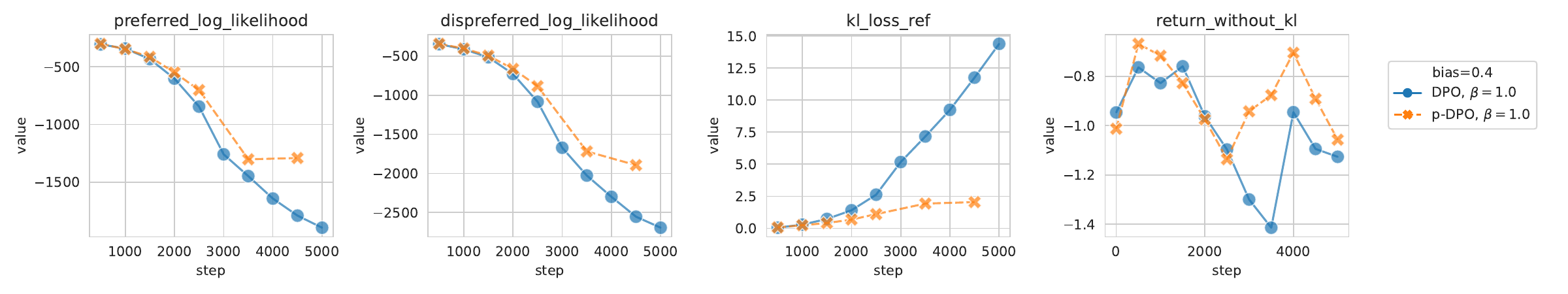}
\includegraphics[width=1.05\textwidth]{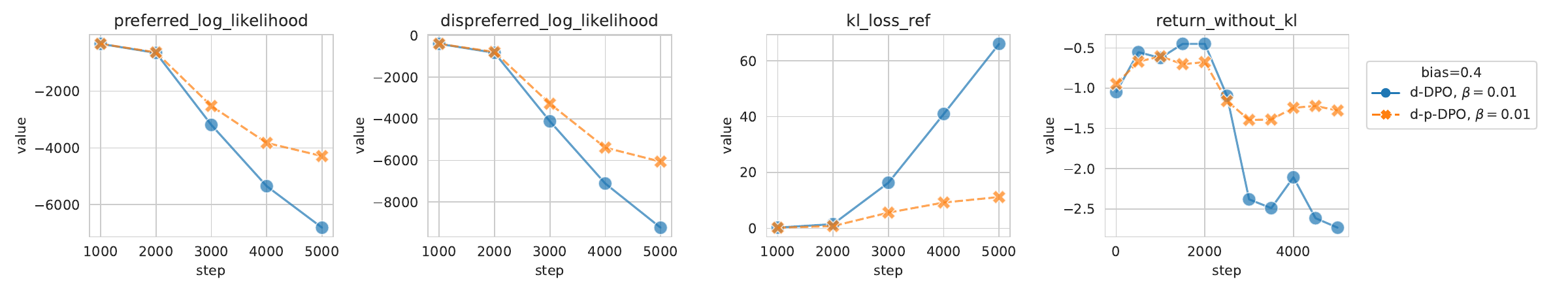}
\end{center}
\caption{Pessimism mitigates likelihood collapse. In this case, we penalize the \emph{sample} KL-divergence rather than the \emph{data} KL-divergence, because the samples are not drawn from the preference data distribution. Thus the effect of regularization is primarily on \textsf{kl\_loss\_ref}, which quantifies this divergence, rather than on the likelihood of the preference data itself, \textsf{(dis)preferred\_log\_likelihood}.} 
\label{fig:likelihood_collapse}
\end{figure}

\subsection{Transitive closure}

Figure~\ref{fig:transitivity} shows the results of our multi-arm bandit experiments with p-DPO vs. IPO losses, as described in \S\ref{sec:analysis-transitivity}. In this synthetic setup, we solve the p-DPO and IPO objectives for both $\mathcal{D} = \{(y_1, y_2), (y_2, y_3)\}$ and $\mathcal{\overline{D}} = \mathcal{D} \cup \{(y_1, y_3)\}$, solving with respect to $\{\pi_\theta(y_i)\}.$

\begin{figure}[!h]
    \includegraphics[width=\linewidth]{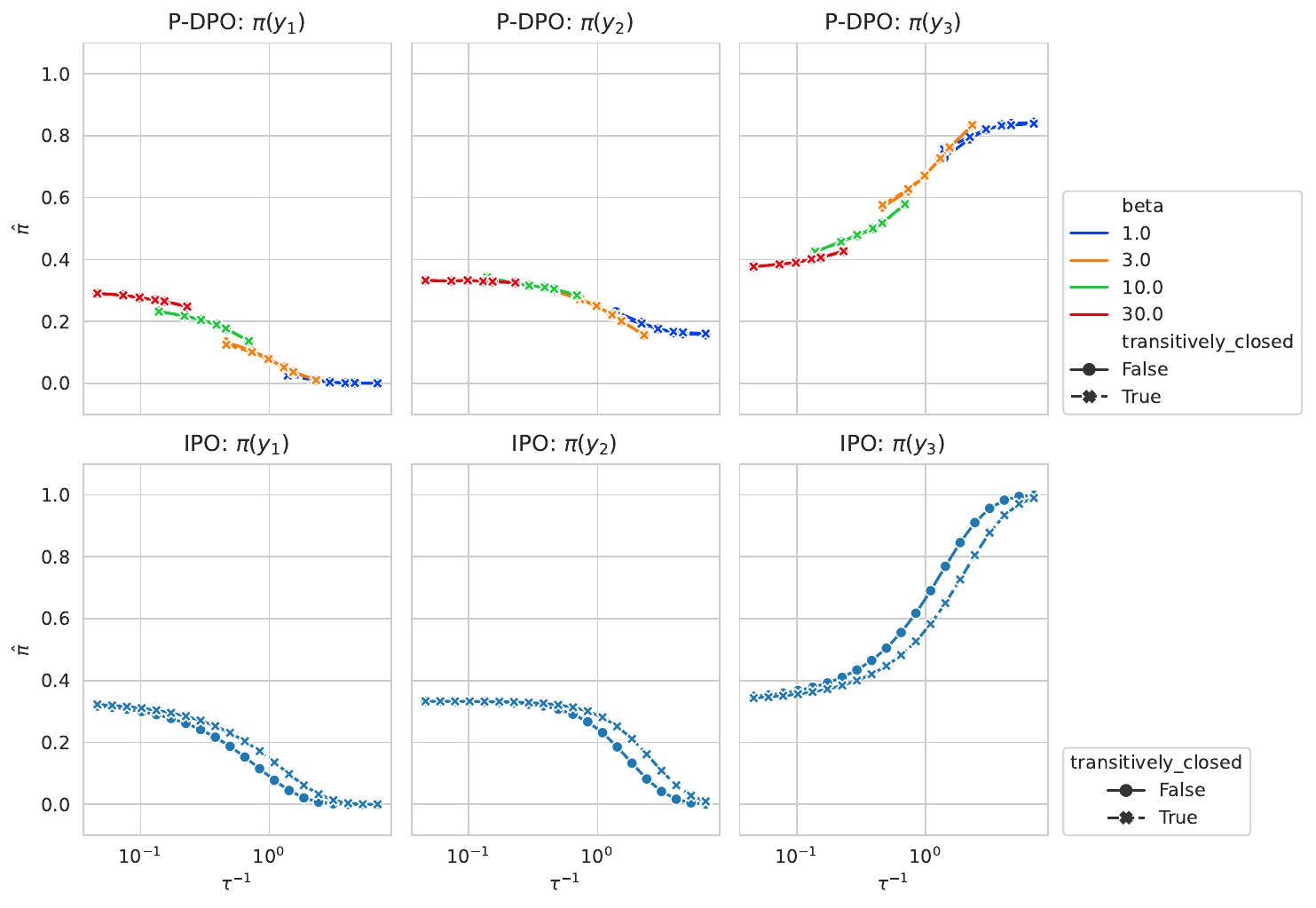}
    \caption{\textbf{Effect of transitive closure on p-DPO and IPO solutions to preference learning in a multi-arm bandit}. Each column shows the learned policy probability for a given arm, based on the preferences $y_1 \prec y_2 \prec y_3$. The top row shows that in p-DPO, the probabilities are not materially affected by the transitive closure $y_1 \prec y_3$. The bottom row shows that in IPO, transitive closure causes the probabilities to be compressed. In each subfigure, we sweep a range of effective values of $\tau^{-1}$, shown on the x-axis.}
    \label{fig:transitivity}
\end{figure}

\subsection{Distribution over reward models for e-DPO}
\label{app:edpo_rm_dist}

Figure~\ref{fig:edpo_rm_dist} investigates the reason for the success of e-DPO, especially when $\rho < .5$.
For every length bias, we track during training for all training examples which reward model, $r_{\rho, b}$, best matched the implicit reward of the currently trained e-DPO policy, and plot the distribution over reward models.
The policy matches different reward models in different examples. Moreover, there is inverse correlation between the data bias for policy training ($\rho$) and the data bias for training the reward models ($b$). This suggests that the ensemble in e-DPO helps as the policy is distilling from reward models that do not share the data bias of the policy training set.\looseness=-1

\begin{figure}
    \centering
    \includegraphics[width=.6\linewidth]{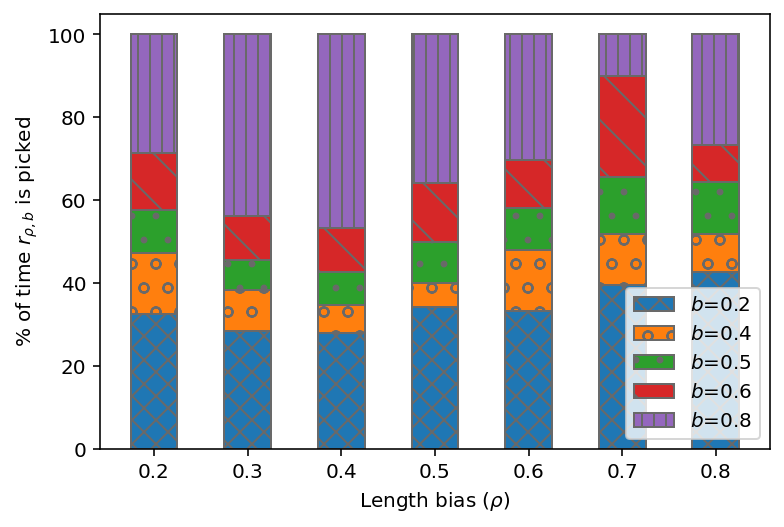}
    \caption{For all training examples, we track which reward model during training best matches the implicit reward of the current e-DPO policy and plot the distribution over reward models, for every length bias, $\rho$. We observe that the e-DPO policy matches different reward models across examples during training. Moreover, when the policy is trained with data biased towards preferring short responses, the reward model that was trained on longer responses is by and large preferred and vice versa.}
    \label{fig:edpo_rm_dist}
\end{figure}

\subsection{Hyperparameters}
\label{app:hyperparameters}

Validation set performance across the range of hyperparameter settings is shown in \Cref{fig:hyperparameters}. \Cref{fig:annealing} also  explores the impact of $\gamma$ (which is minimal).
In pilot studies we found that these results were relatively robust to variation in the random seed, but did not conduct extensive investigation of this effect across all methods and hyperparameters due to cost.
 
\begin{figure}
    \centering
    \includegraphics[width=\linewidth]{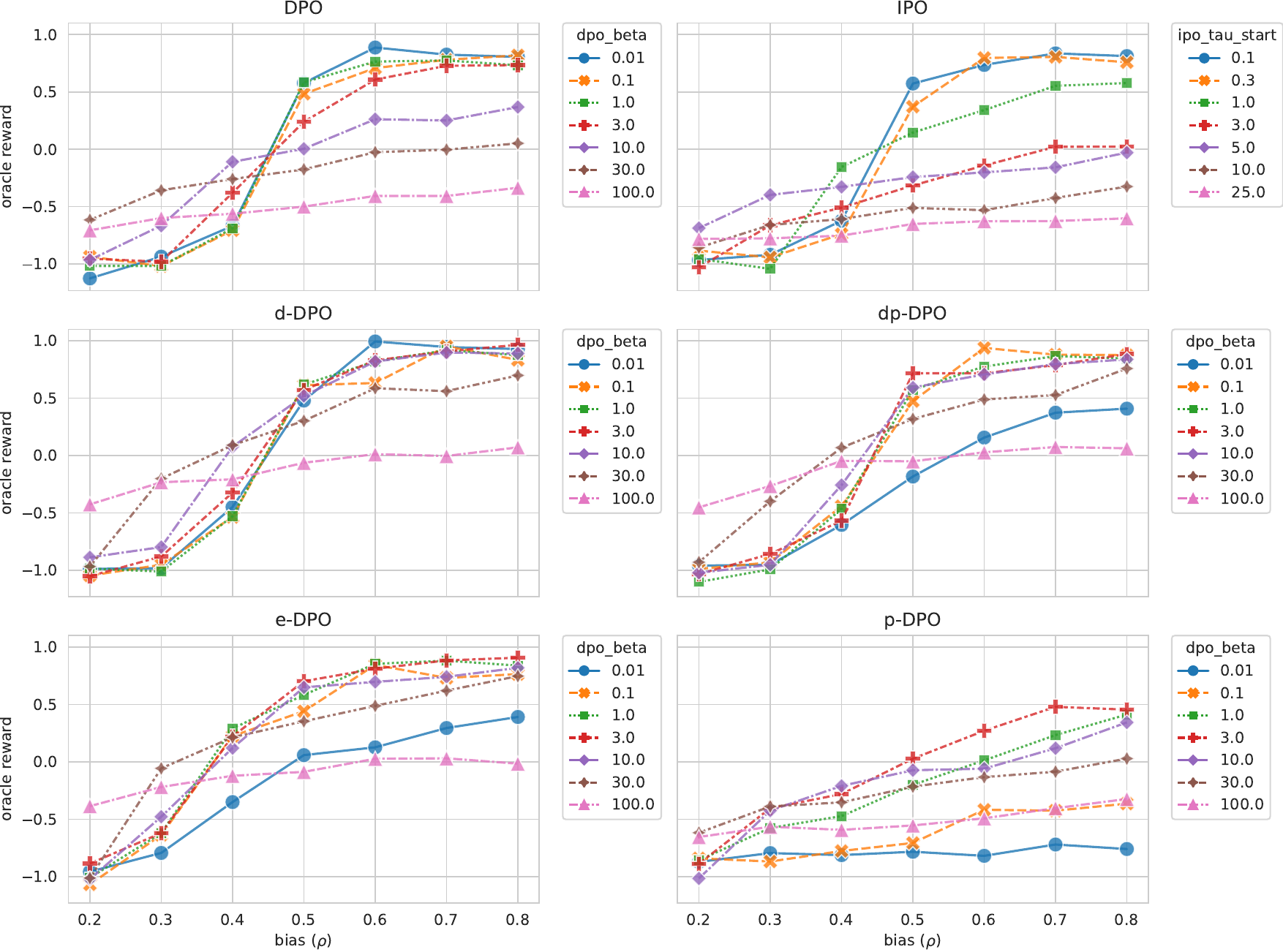}
    \caption{\textbf{Validation set results} across hyperparameters for each method. For all methods, different values of $\rho$ induce different optimal hyperparameters $\beta$ and $\tau^{-1}$.}
    \label{fig:hyperparameters}
\end{figure}

\begin{figure}[t]
\centering
\includegraphics[width=0.5\textwidth]{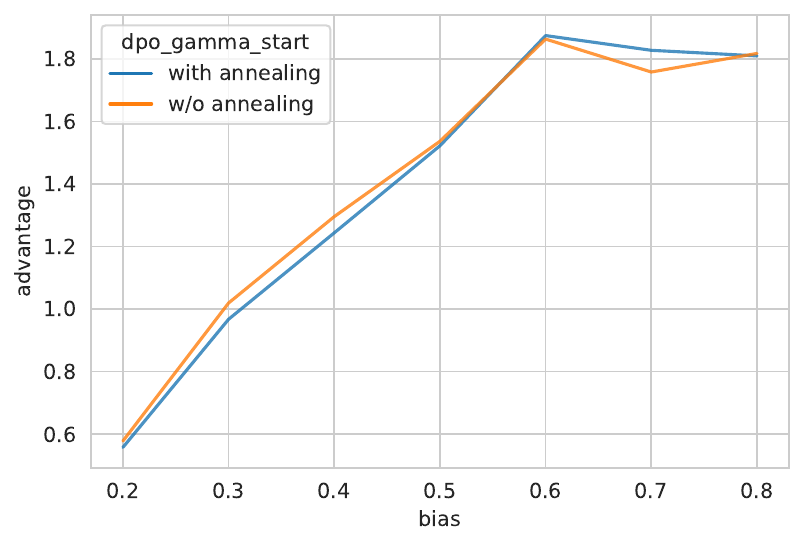}
\caption{Comparing e-dpo with and without annealing of $\gamma$ on the development set for all length biases using the best value of $\beta$ per  bias. Annealing $\gamma$ has minimal effect on performance and is not necessary for the success of e-dpo.  \looseness=-1} 
\label{fig:annealing}
\end{figure}

\section{Compute resources}
 We train policies on 32 TPU v3 chips and reward models on 16 TPU v3 chips. We obtain roughly 0.1 steps per second when training, for both the policy  and  reward models.

\end{document}